\def\eqref#1{equation~\ref{#1}}
\def\1{\bm{1}}
\def\ra{{\textnormal{A}}}
\def\rb{{\textnormal{B}}}
\def\rc{{\textnormal{C}}}
\def\rh{{\textnormal{H}}}
\def\rp{{\textnormal{P}}}
\def\rs{{\textnormal{S}}}
\def\rt{{\textnormal{T}}}
\def\ru{{\textnormal{U}}}
\def\rv{{\textnormal{V}}}
\def\rx{{\textnormal{X}}}
\def\ry{{\textnormal{Y}}}
\def\rz{{\textnormal{Z}}}
\def\rva{{\mathbf{A}}}
\def\rvu{{\mathbf{I}}}
\def\rvm{{\mathbf{M}}}
\def\rvo{{\mathbf{O}}}
\def\rvs{{\mathbf{S}}}
\def\rvu{{\mathbf{U}}}
\def\rvv{{\mathbf{V}}}
\def\rvx{{\mathbf{X}}}
\def\rvy{{\mathbf{Y}}}
\def\rvz{{\mathbf{Z}}}
\def\va{{\bm{a}}}
\def\vb{{\bm{b}}}
\def\vh{{\bm{h}}}
\def\vs{{\bm{s}}}
\def\vx{{\bm{x}}}
\DeclareMathAlphabet{\mathsfit}{\encodingdefault}{\sfdefault}{m}{sl}
\SetMathAlphabet{\mathsfit}{bold}{\encodingdefault}{\sfdefault}{bx}{n}
\def\gG{{\mathcal{G}}}
\newcommand{\E}{\mathbb{E}}
\newcommand{\parents}{Pa} % See usage in notation.tex. Chosen to match Daphne's book.
\newcommand{\fields}[1]{\mathcal{F}[#1]}
\newcommand{\sfields}[1]{\mathcal{F}_s[#1]}
\newcommand{\afields}[1]{\mathcal{F}_a[#1]}
\newcommand{\field}[2]{#1.#2}
\newcommand{\attr}[1]{\textnormal{#1}}
\newcommand{\localcaus}[3]{#1.#2 \rightarrow #3'}
\newcommand{\globalcaus}[4]{#1.#2 \rightarrow #3.#4'}
\newcommand{\nolocalcaus}[3]{#1.#2 \not\rightarrow #3'}
\newcommand{\noglobalcaus}[4]{#1.#2 \not\rightarrow #3.#4'}
\newcommand{\for}{\mathrm{for\ }}
\theoremstyle{plain}
\newtheorem{theorem}{Theorem}[section]
\theoremstyle{definition}
\newtheorem{definition}[theorem]{Definition}
\newtheorem{assumption}[theorem]{Assumption}
\newtheorem{example}[theorem]{Example}
\theoremstyle{remark}
\newcommand{\data}[2]{
    $#1${\begin{scriptsize}$\pm#2$\end{scriptsize}}
}
\newcommand{\edata}[2]{
    {\textcolor{brown}{$#1$}}
}
\newcommand{\Data}[2]{
    $\mathbf{#1}${\begin{scriptsize}$\pm#2$\end{scriptsize}}
}
\setlist[enumerate]{itemsep=0pt,parsep=0pt,topsep=-3pt}
\icmltitlerunning{Learning Causal Dynamics Models in Object-Oriented Environments}
\begin{document}

\twocolumn[
\icmltitle{Learning Causal Dynamics Models in Object-Oriented Environments}

% It is OKAY to include author information, even for blind
% submissions: the style file will automatically remove it for you
% unless you've provided the [accepted] option to the icml2024
% package.

% List of affiliations: The first argument should be a (short)
% identifier you will use later to specify author affiliations
% Academic affiliations should list Department, University, City, Region, Country
% Industry affiliations should list Company, City, Region, Country

% You can specify symbols, otherwise they are numbered in order.
% Ideally, you should not use this facility. Affiliations will be numbered
% in order of appearance and this is the preferred way.
\icmlsetsymbol{equal}{*}

\begin{icmlauthorlist}
\icmlauthor{Zhongwei Yu}{casia}
\icmlauthor{Jingqing Ruan}{casia}
\icmlauthor{Dengpeng Xing}{casia,ucas}
%\icmlauthor{}{sch}
\end{icmlauthorlist}

\icmlaffiliation{casia}{Institute of Automation, Chinese Academy of Sciences, Beijing, P. R. China}
\icmlaffiliation{ucas}{School of Artificial Intelligence, University of Chinese Academy of Sciences, Beijing, P. R. China}
%\icmlaffiliation{yyy}{Department of XXX, University of YYY, Location, Country}
%\icmlaffiliation{comp}{Company Name, Location, Country}
%\icmlaffiliation{sch}{School of ZZZ, Institute of WWW, Location, Country}

\icmlcorrespondingauthor{Dengpeng Xing}{dengpeng.xing@ia.ac.cn}
%\icmlcorrespondingauthor{Firstname2 Lastname2}{first2.last2@www.uk}

% You may provide any keywords that you
% find helpful for describing your paper; these are used to populate
% the "keywords" metadata in the PDF but will not be shown in the document
\icmlkeywords{Reinforcement Learning, Causality}

\vskip 0.3in
]

% this must go after the closing bracket ] following \twocolumn[ ...

% This command actually creates the footnote in the first column
% listing the affiliations and the copyright notice.
% The command takes one argument, which is text to display at the start of the footnote.
% The \icmlEqualContribution command is standard text for equal contribution.
% Remove it (just {}) if you do not need this facility.

\printAffiliationsAndNotice{}  % leave blank if no need to mention equal contribution
%\printAffiliationsAndNotice{\icmlEqualContribution} % otherwise use the standard text.

\begin{abstract}
Causal dynamics models (CDMs) have demonstrated significant potential in addressing various challenges in reinforcement learning. To learn CDMs, recent studies have performed causal discovery to capture the causal dependencies among environmental variables. However, the learning of CDMs is still confined to small-scale environments due to computational complexity and sample efficiency constraints. This paper aims to extend CDMs to large-scale object-oriented environments, which consist of a multitude of objects classified into different categories. We introduce the Object-Oriented CDM (OOCDM) that shares causalities and parameters among objects belonging to the same class. Furthermore, we propose a learning method for OOCDM that enables it to adapt to a varying number of objects. Experiments on large-scale tasks indicate that OOCDM outperforms existing CDMs in terms of causal discovery, prediction accuracy, generalization, and computational efficiency.

\end{abstract}

\section{Introduction}
\label{sec:intro} 

Reinforcement learning (RL) \citep{sutton_reinforcement_2018} and causal inference \citep{pearl_causality_2000} have separately made much progress over the past decades. Recently, the combination of both fields has led to a series of successes \citep{zeng_survey_2023}, where the use of \textit{causal dynamics models} (CDMs) proves a promising direction. CDMs capture the causal structures of environmental dynamics and have been applied to address a wide range of challenges in RL, including learning efficiency, explainability, generalization, state representation, subtask decomposition, and transfer learning (see Section \ref{Sec:background:causal_rl}). For example, a major function of CDMs is to reduce spurious correlations \citep{ding_generalizing_2022, wang_causal_2022}, which are particularly prevalent in the non-i.i.d. data produced in sequential decision-making.

Early research of CDMs exploits given causal structures of environments \citep{boutilier_stochastic_2000, guestrin_efficient_2003, madumal_explainable_2020}, which may not be available in many applications. Therefore, some recent studies have proposed to develop CDMs using causal discovery techniques to learn such causal structures, i.e. causal graphs (CGs), from the data of history interactions \citep{volodin_causeoccam_2021, wang_task-independent_2021, wang_causal_2022, zhu_offline_2022}. These approaches have been successful in relatively small environments consisting of a few variables. Unfortunately, some RL tasks involve many objects (e.g., multiple agents and environment entities in multi-agent domains \citep{malysheva_magnet_2019}), which together contribute to a large set of environment variables. The applicability of CDMs in such large-scale environments remains questionable --- the excessive number of potential causal dependencies (i.e., edges in CGs) makes causal discovery extremely expensive, and more samples and effort are required to correctly discriminate causal dependencies.

Interestingly, humans can efficiently reason causality from enormous real-world information. One possible explanation for this is that we intuitively perceive tasks through an object-oriented (OO) perspective \citep{hadar_how_2008} --- we decompose the world into objects and categorize them into classes, allowing us to summarize and share causal rules for each class. For example, ``exercise causes good health of each person'' is a shared rule of the class ``Human'', and ``each person'' represents any instance of that class. This OO intuition has been widely adopted in modern programming languages, referred to as object-oriented programming (OOP), to organize and manipulate data in a more methodical and readable fashion \citep{stroustrup_what_1988}.

This work aims to extend CDMs to large-scale OO environments. Inspired by OOP, we investigate how an OO description of the environment can be exploited to facilitate causal discovery and dynamics learning. We propose the \textit{Object-Oriented Causal Dynamics Model} (OOCDM), a novel type of CDM that allows the sharing of causalities and model parameters among objects. To learn the OOCDM, we present a modified version of Causal Dynamics Learning (CDL) \citep{wang_causal_2022} that can accommodate varying numbers of objects. We theoretically prove that the proposed approach discovers the ground-truth CG under a few natural assumptions. Additionally, we apply OOCDM to several OO domains and demonstrate that it outperforms state-of-the-art CDMs in terms of causal graph accuracy, prediction accuracy, generalization ability, and computational efficiency, especially for large-scale tasks. To the best of our knowledge, OOCDM is the first dynamics model to combine causality with the object-oriented settings in RL. The source code of this work is made available at \url{https://github.com/EaseOnway/oocdm}.
\section{Related works}

\subsection{Causality and Reinforcement Learning}
\label{Sec:background:causal_rl}

Causality (see basics in Appendix \ref{appendix:basics_causality}) formulates dependencies among random variables and is used across various disciplines \citep{pearl_causality_2000, pearl_causal_2016, pearl_book_2019}. One direction to combine causality with RL is to formulate a known causal structure among \textit{macro} elements (e.g., the observation, action, reward, and hidden states) of the Markov Decision Process (MDP). Algorithms with improved robustness and efficiency are then derived by applying the causal inference techniques \citep{buesing_woulda_2018, lu_deconfounding_2018, zhang_invariant_2020, liao_instrumental_2021, guoRelationalInterventionApproach2022}.

This paper follows another direction on the \textit{micro} causality that exists among specific components of the environment. Modular models prove capable of capturing such causality using independent sub-modules, leading to better generalization and learning performance \citep{ke_systematic_2021, mittal_learning_2020, mittal_is_2022}. A popular setting for the micro causality is \textit{Factored MDP} (FMDP) \citep{boutilier_stochastic_2000}, where the transition dynamics is modeled by a CDM. Knowledge to this CDM benefits RL in many ways, including 1) efficiently solving optimal policies \citep{guestrin_efficient_2003, osband_near-optimal_2014, xu_reinforcement_2020}, 2) sub-task decomposition \citep{jonsson_causal_2006, peng_causality-driven_2022}, 3) improving explainability \citep{madumal_distal_2020, madumal_explainable_2020, volodin_causeoccam_2021, yu_explainable_2023}, 4) improving generalization of policies \citep{nair_causal_2019} and dynamic models \citep{ding_generalizing_2022, wang_causal_2022, zhu_offline_2022}, 5) learning task-irrelevant state representations \citep{wang_task-independent_2021, wang_causal_2022}, 6) policy transfer to unseen domains \citep{huang_adarl_2022}. Additionally, \citet{fengFactoredAdaptationNonStationary2022} presents an extension of FMDP that has non-stationary CDMs.

\subsection{Object-Oriented Reinforcement Learning} \label{sec:oorl}

It is common in RL to describe environments using multiple objects. Researchers have largely explored object-centric representation (OCR), especially in visual domains, to facilitate policy learning \citep{zambaldi_relational_2018, zadaianchuk_self-supervised_2020, zhou_policy_2022, yoon_investigation_2023} or dynamic modeling \citep{zhu_object-oriented_2018,zhu_object-oriented_2019, kipf_contrastive_2020,locatello_object-centric_2020}. However, OCR typically uses homogeneous representations of objects and struggles to capture the diverse nature of objects. \citet{goyal_object_2020, goyal_neural_2022} overcome this problem by extracting a set of dynamics templates (called \textit{schemata} or \textit{rules}) that are matched with objects to predict next states. Prior to our work, \citet{guestrin_generalizing_2003} and \citet{diuk_object-oriented_2008} investigated OOP-style MDP representations using predefined classes of objects. 

\subsection{Relational Causal Discovery}

It is common to use the structural priors to improve the efficiency of causal discovery. Similar to this work, relational causal discovery \citep{maier_learning_2010} attempts to modularize causal dependencies using additional knowledge about objects in relation domains. \citet{marazopoulou_learning_2015} further provides a temporal extension of the technique for sequential data. However, relational causal discovery requires stronger priors than our work -- not only the description of classes and objects but also the formulation of inter-object relations. Our work focuses on the FMDP settings where relations are implicit and unknown, which may contribute to more general use. In addition, relational causal discovery does not include a good dynamics model that fully exploits the object-oriented priors.

\section{Preliminaries}

\paragraph{Notations} A random variable is denoted by a capital letter (e.g., $\rx_1$ and $\rx_2$). Brackets may combine variables or subgroups into a \textit{group} (an ordered set) denoted by a bold letter, e.g. $\rvx = (\rx_1, \rx_2)$ and $\rvz = (\rvx, \ry_1, \ry_2)$. We use $p$ to denote a distribution. In addition, Appendix~\ref{appendix:symbols} provides a thorough list of notations in this paper.

\subsection{Causal Dynamics Models} \label{sec:cdm}

We consider the FMDP setting where the state and action consist of multiple random variables, denoted as $\rvs =(\rs_1, \cdots, \rs_{n_s})$ and $\rva =(\ra_1, \cdots, \ra_{n_a})$, respectively. $\rs_i'$ (or $\rvs'$) denotes the state variable(s) in the next step. The transition probability $p(\rvs'|\rvs, \rva)$ is modeled by a CDM (see Definition \ref{def:cdm}), which is also referred to as a \textit{Dynamics Bayesian Network} (DBN) \citep{dean_model_1989} adapted to the context of RL. For clarity, we illustrate a simple deterministic CDM in Appendix~\ref{appendix:cdm_example}.

\begin{definition} 
\label{def:cdm} A \textit{causal dynamics model} is a tuple $\langle \gG, p \rangle$. $\gG$ is the \textit{causal graph}, i.e. a directed acyclic graph (DAG) on $(\rvs, \rva, \rvs')$, defining the parent set $\parents(\rs_j')$ for each $\rs_j'$ in $\rvs'$; $p$ is a transition distribution on $(\rvs, \rva, \rvs')$ such that
\begin{equation} \label{eq:cdm}
    p(\rvs'|\rvs, \rva) = \prod_{j=1}^{n_s} p(\rs_j|\parents(\rs_j')).
\end{equation}
\end{definition}

In this work, $\gG$ is unknown and must be learned from the data. Some studies learn CGs using sparsity constraints, which encourage models to predict the transition using fewer inputs \citep{volodin_causeoccam_2021,wang_task-independent_2021}. However, there exists no theoretical guarantee that sparsity can lead to sound causality. In several recent studies \citep{wang_causal_2022, ding_generalizing_2022, zhu_offline_2022, yu_explainable_2023}, conditional independence tests (CITs) are used to discover CGs \citep{eberhardt_introduction_2017}. Theorem \ref{theorem:causal_discovery_mdp} presents a prevalent approach for causal discovery, with proof in Appendix \ref{appendix:causal_discovery_cdm}. Additionally, we also prove the robustness of this approach against mild observational noise in Theorem~\ref{theorem:robustness_noise}.

\begin{theorem}[Causal discovery for CDMs]
    \label{theorem:causal_discovery_mdp}
    Assuming that state variables transit independently, i.e. $p(\rvs'|\rvs, \rva) = \prod_{j=1}^{n_s} p(\rs_j'|\rvs, \rva)$, then the ground-truth causal graph $\gG$ is bipartite. That is, all edges start in $(\rvs, \rva)$ and end in $\rvs'$; if $p$ is a faithful probability function consistent with the dynamics, then $\gG$ is uniquely identified by
    \begin{equation} \label{eq:causal_discovery_cdm}
        \rx_i \in \parents(\rs_j') \Leftrightarrow \neg (\rx_i \upmodels_p \rs_j' \, |\, (\rvs, \rva) \setminus \{\rx_i\}),
    \end{equation}
    for each $\rx_i \in (\rvs, \rva)$ and $\ \rs_j' \in \rvs'$. Here, ``$\upmodels_p$" denotes the conditional independence under $p$, and ``$\setminus$" means set-subtraction.
\end{theorem}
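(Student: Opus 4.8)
The plan is to separate the claim into two independent parts: first that the independence assumption forces $\gG$ to be bipartite, and then that, given bipartiteness plus faithfulness, the single conditional-independence test in \eqref{eq:causal_discovery_cdm} recovers each parent set exactly.

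For bipartiteness I would argue directly from the two factorizations of $p(\rvs'\mid\rvs,\rva)$. By the temporal semantics of the DBN, no edge can point into a present variable, so a priori $\parents(\rs_j')\subseteq(\rvs,\rva,\rvs'\setminus\{\rs_j'\})$ and it only remains to rule out intra-slice edges among the next-state variables. The independence assumption says that, conditioned on $(\rvs,\rva)$, the variables $\rs_1',\dots,\rs_{n_s}'$ are mutually independent, i.e. $p$ already admits a factorization whose $j$-th factor $p(\rs_j'\mid\rvs,\rva)$ depends only on $(\rvs,\rva)$. Matching this against the graphical factorization in \cref{def:cdm}, and using that the ground-truth graph records exactly the direct functional dependencies of each $\rs_j'$, forces $\parents(\rs_j')\subseteq(\rvs,\rva)$ for every $j$. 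Hence every edge runs from $(\rvs,\rva)$ into $\rvs'$, so $\gG$ is bipartite with the present variables acting as pure source nodes (only outgoing edges) and the next-state variables as pure sink nodes (only incoming edges).

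For the characterization I would invoke the soundness of $d$-separation, which holds because $p$ factorizes over $\gG$, together with faithfulness for the reverse implication. The forward direction ($\Rightarrow$) is immediate: if $\rx_i\in\parents(\rs_j')$ there is a direct edge $\rx_i\to\rs_j'$, which no conditioning set can block, so $\rx_i$ and $\rs_j'$ are $d$-connected and, by faithfulness, conditionally dependent given $Z=(\rvs,\rva)\setminus\{\rx_i\}$. The converse is where the bipartite structure does the work. I would take an arbitrary path between $\rx_i$ and $\rs_j'$; because source nodes emit only outgoing edges and sink nodes receive only incoming ones, the path alternates between the two sides, so every interior present-variable vertex is a fork ($\cdot\leftarrow v\to\cdot$) and every interior next-state vertex is a collider. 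By parity a path from the source side to the sink side has odd length, so any path longer than the direct edge contains an interior present-variable fork; since $Z$ comprises all present variables except the endpoint $\rx_i$, that fork lies in $Z$ and blocks the path irrespective of the collider nodes. As the direct edge is excluded by $\rx_i\notin\parents(\rs_j')$, every path is blocked, giving $d$-separation and hence $\rx_i\upmodels_p\rs_j'\mid Z$. Combining the two directions yields the stated equivalence.

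The main obstacle I anticipate is making the $d$-separation case analysis in the converse airtight: one must reason about forks and colliders simultaneously and confirm that a single blocked fork suffices to deactivate an entire path. This relies on having established bipartiteness first, on the parity observation that every path strictly longer than the direct edge passes through at least one interior present-variable vertex, and on the fact that $Z$ consists exactly of all present variables other than $\rx_i$ (so every interior fork is necessarily conditioned on) while containing no next-state variables or their descendants (so no collider is inadvertently opened). Verifying these structural facts is the key technical step; the rest of the argument is a routine application of the Markov and faithfulness properties.
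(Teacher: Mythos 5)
Your plan has the same skeleton as the paper's proof (bipartiteness first, then the edge-versus-independence equivalence via path blocking), but it has a genuine gap at its foundation: the claim that ``the soundness of $d$-separation \ldots holds because $p$ factorizes over $\gG$.'' It does not. The joint distribution is $p(\rvs,\rva,\rvs') = p(\rvs)\,p(\rva|\rvs)\,p(\rvs'|\rvs,\rva)$, and the factors $p(\rvs)$ and $p(\rva|\rvs)$ generically make the present variables dependent on one another (the policy ties $\rva$ to $\rvs$, and the state variables are themselves correlated). A DAG in which every variable of $(\rvs,\rva)$ is a parentless source with no intra-slice edges is Markov-compatible only with distributions whose present variables are mutually independent, so $p$ is \emph{not} compatible with the bipartite $\gG$, and neither the global Markov property nor faithfulness can legitimately be invoked on $\gG$ itself. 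This is exactly why the paper introduces the \emph{concretization} $\gG_p$ (Theorem~\ref{theorem:concretization}) --- a supergraph of $\gG$ that keeps each $\parents(\rs_j')$ but adds edges among $(\rvs,\rva)$ so that $p$ is compatible with it --- and states faithfulness with respect to $\gG_p$ (Definition~\ref{def:dynamical_faithfulness}); all $d$-separation reasoning in Appendix~\ref{appendix:causal_discovery_cdm} is carried out in $\gG_p$, not in $\gG$.

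Once the argument is relocated to $\gG_p$, your structural analysis breaks down: with intra-slice edges present, an interior present-variable vertex on a path need not be a fork --- it can be a chain or a collider, and a collider lying in $Z=(\rvs,\rva)\setminus\{\rx_i\}$ is \emph{opened} by conditioning, not blocked --- so the parity/fork argument no longer covers all paths. The repair, which is what the paper's proof effectively does, is to inspect the last edge of any path into $\rs_j'$: because next-state variables are sinks in $\gG_p$, that edge is $\rx_l \rightarrow \rs_j'$ for some present variable $\rx_l \neq \rx_i$; this $\rx_l$ is a non-collider on the path and lies in $Z$, hence blocks it, while any collider at a next-state vertex stays blocked because it has no descendants in $Z$. (Alternatively, the converse needs no graph at all: if $\rx_i \notin \parents(\rs_j')$, then $p(\rs_j'|\rvs,\rva)=p(\rs_j'|\parents(\rs_j'))$ is constant in $\rx_i$, and integrating out $\rx_i$ gives $p(\rs_j'\,|\,Z,\rx_i)=p(\rs_j'\,|\,Z)$ directly.) Two smaller weaknesses: your bipartiteness step really shows that \emph{some} bipartite graph represents the dynamics (the paper's Theorem~\ref{theorem:existence_bcg}), whereas concluding that the \emph{ground-truth} graph has no lateral edges requires faithfulness --- a lateral edge $\rs_j'\rightarrow\rs_k'$ would contradict $\rs_j' \upmodels_p \rs_k' \,|\, \rvs$ (Theorem~\ref{theorem:faithfulness_BCGs}) --- and asserting that the ground-truth graph ``records exactly the direct functional dependencies'' assumes the conclusion; moreover, the paper's full proof additionally shows the identified graph is invariant across all consistent faithful distributions, which your informal use of ``ground-truth'' quietly presupposes.
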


The independence ``$\upmodels_p$" here can be determined by CITs, which utilize samples drawn from $p$ to evaluate whether the conditional independence holds. There are many tools for CITs, such as Fast CIT \citep{chalupka_fast_2018}, Kernel-based CIT \citep{zhang_kernel-based_2012}, and Conditional Mutual Information (CMI) used in this work. Read Appendix~\ref{appendix:cit} for more information about CITs and CMI.

Testing Eq.~\ref{eq:causal_discovery_cdm} leads to sound CGs, yet is hardly scalable. Let $n := n_a + n_s$ denote the total number of environment variables. Then, the time complexity of mainstream approaches reaches up to $O(n^3)$, since $O(n^2)$ edges must be tested, each costing $O(n)$. Additionally, a larger $n$ impairs sampling efficiency, as CITs require more samples to recover the joint distribution of condition variables.

\subsection{Object-Oriented Markov Decision Process}
\label{sec:oomdp}

Following \citet{guestrin_generalizing_2003}, we formulate the task as an \textit{Object-Oriented MDP} (OOMDP) containing a set $\mathcal{O} = \{O_1, \cdots, O_N\}$ of \textit{objects}. Each object $O_i$ corresponds to a subset of variables (called its \textit{attributes}), written as $\rvo_i = (O_i.\rvs, O_i.\rva)$, where $O_i.\rvs \subseteq \rvs$ and $O_i.\rva \subseteq \rva$ respectively are its state attributes and action attributes. The objects are divided into a set of classes $\mathcal{C} = \{C_1, \cdots, C_K\}$. We call $O_i$ an \textit{instance} of $C_k$ if $O_i$ belongs to some class $C_k$, denoted as $O_i \in C_k$. $C_k$ specifies a set $\fields{C_k}$ of \textit{fields}, which determine the attributes of $O_i$ as well as other instances of $C_k$. Each field in $\mathcal{F}[C_k]$, typically written as $\field{C_k}{U}$ (where $U$ can be replaced by any identifier), signifies an attribute $O_i.\ru \in \rvo_i$ for each $O_i\in C_k$. Note that italic letters are used for identifiers (e.g., $\field{C_k}{U}$), while Roman letters are used for attributes (e.g., $O_i.\ru$) to highlight that attributes are random variables. A more rigorous definition of OOMDP is given in Appendix \ref{appendix:def:oomdp}. 

The dynamics of the OOMDP satisfy that \textit{the state variables of objects from the same class transit according to the same (unknown) class-level transition function}:
\begin{equation} \label{eq:result_symmetry} \small
    p(O_i.\rvs'| \rvs, \rva) = p_{C_k}(O_i.\rvs'| \rvo_i; \rvo_1, \cdots, \rvo_{i-1}, \rvo_{i+1}, \cdots, \rvo_N)
\end{equation}
for all $O_i \in C_k$, which we refer to as the \textit{result symmetry}. \citet{diuk_object-oriented_2008} further formulates the dynamics using logical rules, which is not necessarily required here.

This OOMDP representation is inherently available in many simulation platforms or can be intuitively specified from human experience. Therefore, we consider the OOMDP representation as prior knowledge and leave its learning to future work. To illustrate our setting, we present Example \ref{example:starcraft} as the OOMDP for a StarCraft environment.

\begin{example} \label{example:starcraft}
In a StarCraft scenario shown in Figure~\ref{fig:starcraft_oomdp}, the set of objects is $\mathcal{O} = \{M_1, M_2, Z_1, Z_2, Z_3\}$ and the set of classes is $\mathcal{C} = \{C_M, C_Z\}$. $C_M$ is the class for marines $M_1$ and $M_2$. Similarly, $C_Z$ is the class for zerglings $Z_1$, $Z_2$, and $Z_3$. The fields for both $C = C_M, C_Z$ are given by $\fields{C} = \{C.H,\,C.P,\,C.A\}$ --- the \textbf{H}ealth, \textbf{P}osition, and \textbf{A}ction (e.g., move or attack). Therefore, for example, $M_1.\rh$ is the health of marine $M_1$, and $\rvm_1 = (M_1.\rh, M_1.\rp, M_1.\ra)$.
\end{example}

\section{Method}

The core of an OOCDM is the \textit{Object-Oriented Causal Graph} (OOCG), which allows for class-level causality sharing based on the dynamic similarity between objects of the same class (see Section \ref{sec:oocg}). Equation \ref{eq:result_symmetry} has illustrated this similarity with respect to the result terms of the transition probabilities. Furthermore, we introduce an assumption \ref{assumption:causation_symmetry} concerning the condition terms, called \textit{causation symmetry}. It provides a natural notion that objects of the same class produce symmetrical effects on other objects.  Figure \ref{fig:starcraft_oomdp} illustrates this assumption using the StarCraft scenario described above --- swapping all attributes between two zerglings $Z_2$ and $Z_3$ makes no difference to the transition of other objects such as the marine $M_2$. We also assume that all state variables (attributes) transit independently in accordance with FMDPs \citep{guestrin_efficient_2003}.

\begin{assumption}[Causation Symmetry] \label{assumption:causation_symmetry}
    Suppose $O_i \in C_k$. Assuming that $O_a, O_b \in C_l$ ($a, b \neq i$) are two other objects from the same class, then $O_a$ and $O_b$ are \textbf{interchangeable} to the transition of $O_i$:
    \begin{equation} \label{eq:causation_symmetry} \small
        p(O_i.\rvs'| \rvo_a = \va, \rvo_b = \vb, \cdots)
        = p(O_i.\rvs'| \rvo_a = \vb, \rvo_b = \va, \cdots).
    \end{equation}
\end{assumption}

The workflow for using an OOCDM is illustrated in Figure~\ref{fig:workflow}. First, we use domain knowledge about the task to construct its OOMDP representation (Section \ref{sec:oomdp}). Subsequently, we initialize the OOCDM inclusive of field predictors (Section \ref{sec:model}) and an OOCG estimation $\hat{G}$. This estimation is updated by performing causal discovery on the transition data and the predictors (Section \ref{sec:causal_discovery}), and these predictors are optimized using the current OOCG estimation and the stored data (Section \ref{sec:model_learning}). The learned OOCDM can then be applied to problems that require a CDM or causal graph (some basic applications are tested in Section \ref{sec:experiments}).

\begin{figure}[tb]
    \centering
    \includegraphics[width=0.95\linewidth]{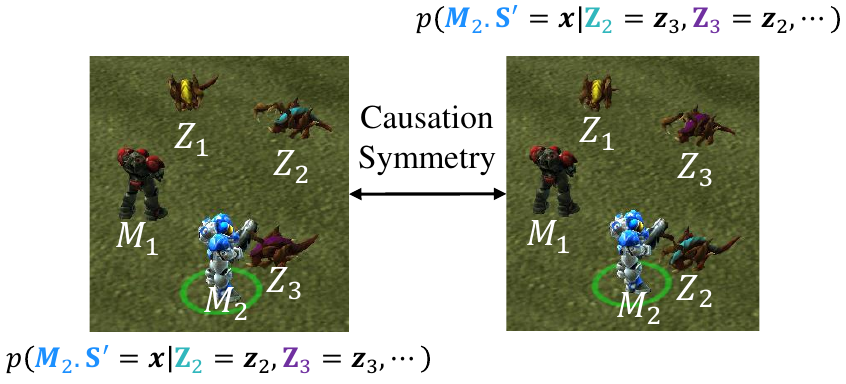}
    \caption{An OOMDP for Starcraft with the causation symmetry.}
    \label{fig:starcraft_oomdp}
\end{figure}

\begin{figure}[tb]
    \centering
    \includegraphics[width=0.9\linewidth]{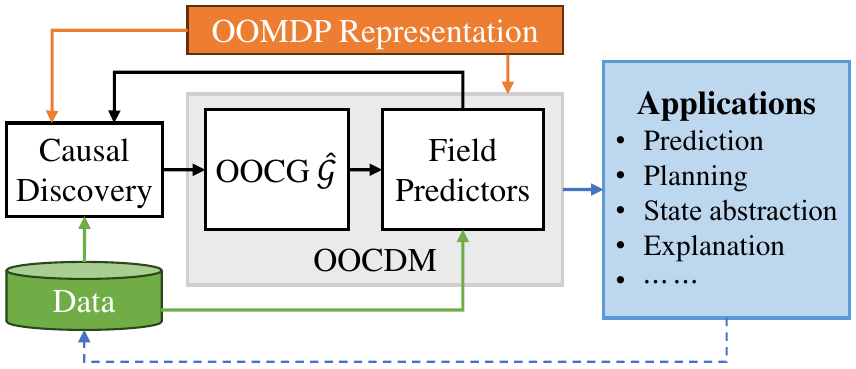}
    \caption{The workflow overview.}
    \label{fig:workflow}
\end{figure}

\subsection{Object-Oriented Causal Graph} \label{sec:oocg}

According to Theorem \ref{theorem:causal_discovery_mdp}, the ground-truth CG of an OOMDP follows a \textit{bipartite causal graph} (BCG) structure, where no lateral edge is present in $\rvs'$. In order to simplify the process of causal discovery, we impose a restriction on the structure of $\mathcal{G}$ and introduce a special form of CGs that allows class-level causal sharing.

\begin{definition}\label{def:oocg} Let $\sfields{C_k} \subseteq \fields{C_k}$ be the set of \textbf{state} fields of class $C_k$. An \textit{Object-Oriented Causal Graph} is a BCG where \textbf{all} causal edges are given by a series of class-level causalities:
\begin{enumerate}
    \item A \textit{class-level local causality} for class $C_k$ from field $C_k.U \in \fields{C_k}$ to state field $C_k.V \in \sfields{C_k}$, denoted as $\localcaus{C_k}{U}{V}$, means that $O.\ru \in \parents(O.\rv')$ for every instance $O\in C_k$.
    \item A \textit{class-level global causality} from field $C_l.U \in \fields{C_l}$ to state field $C_k.V \in\sfields{C_k}$, denoted as $\globalcaus{C_l}{U}{C_k}{V}$, means that $O_j.\ru \in \parents(O_i.\rv')$ for every $O_i\in C_k$ and every $O_j \in C_l$ ($j \neq i$).
\end{enumerate}
\end{definition}

Definition \ref{def:oocg} enables causality sharing by two types of class-level causalities, which are invariant with the number of instances of each class. Similar to relational causal discovery \citep{marazopoulou_learning_2015}, this causality sharing greatly simplifies causal discovery and improves the readability of CGs.  The \textit{local} causality describes shared structures within individual objects of the same class, as illustrated in Figure~\ref{fig:local_causality}. The \textit{global} causality accounts for shared structures of object pairs, as illustrated in Figure~\ref{fig:global_causality}. Note that the global causality $\globalcaus{C_k}{U}{C_k}{V}$ (i.e., when $k=l$) is different from the local causality $\localcaus{C_k}{U}{V}$ by definition. For clarity, the global and local causalities here are different from those considered by \citet{pitis_counterfactual_2020}, where ``local"  means that $(\rvs, \rva)$ is confined in a small region in the entire space.

An OOCG representing the dynamics (i.e., Eq.~\ref{eq:cdm} holds) always exists, as a fully-connected BCG is apparently an OOCG. As shown in Theorem~\ref{theorem:cg_of_OOMDP}, our major result is that OOCGs reveal the ground-truth causality given the symmetry assumption, with proof in Appendix \ref{appendix:cg_OOMDP}.

\begin{theorem}\label{theorem:cg_of_OOMDP}
The ground-truth CG of any OOMDP where Assumption~\ref{assumption:causation_symmetry} holds is exactly an OOCG.
\end{theorem}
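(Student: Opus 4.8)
The plan is to reduce the statement to a single \emph{permutation-equivariance} property of the ground-truth transition distribution $p$, and then read off the object-oriented structure of the edges from the conditional-independence characterization in Theorem~\ref{theorem:causal_discovery_mdp}. Since state attributes are assumed to transit independently, that theorem already tells us the ground-truth CG $\gG$ is bipartite and that an edge is present exactly when a conditional dependence holds:
\begin{equation*}
O_j.\ru \in \parents(O_i.\rv') \;\Longleftrightarrow\; \neg\big(O_j.\ru \upmodels_p O_i.\rv' \,\big|\, (\rvs,\rva)\setminus\{O_j.\ru\}\big).
\end{equation*}
So it suffices to establish two facts about these edges: (i) whenever a self-edge $O_i.\ru \to O_i.\rv'$ is present for one instance $O_i\in C_k$, the analogous self-edge is present for every instance of $C_k$ (yielding a local causality $\localcaus{C_k}{U}{V}$); and (ii) whenever a cross-edge $O_j.\ru \to O_i.\rv'$ is present for one ordered pair with $O_j\in C_l$, $O_i\in C_k$, $j\neq i$, the analogous cross-edge is present for every such pair (yielding a global causality $\globalcaus{C_l}{U}{C_k}{V}$). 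Because every bipartite edge is either a self-edge or a cross-edge, (i) and (ii) together show that the edge set decomposes exactly into class-level local and global causalities, which is precisely Definition~\ref{def:oocg}.

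The key lemma I would prove first is that $p$ is equivariant under any class-preserving permutation $\pi$ of the objects, i.e. a bijection with $O_i$ and $O_{\pi(i)}$ in the same class for all $i$: letting $\pi$ act on an object-indexed configuration by relabeling, $p(\pi\cdot\rvs' \mid \pi\cdot\rvs, \pi\cdot\rva) = p(\rvs' \mid \rvs, \rva)$. Because attributes transit independently, it is enough to check this factor by factor, and by the result symmetry (Eq.~\ref{eq:result_symmetry}) each marginal $p(O_m.\rvs'\mid\rvs,\rva)$ with $O_m\in C_k$ equals the class-level function $p_{C_k}$ evaluated at the self-attributes $\rvo_m$ together with the attributes of the remaining objects. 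Tracking the action of $\pi$ shows that the self-attributes of the marginal for $O_{\pi(m)}$ in the relabeled configuration are exactly those of $O_m$, while the collection of remaining arguments, grouped by class, is left unchanged as a multiset. Causation symmetry (Assumption~\ref{assumption:causation_symmetry}) provides invariance of $p_{C_k}$ under transposing two same-class ``other'' objects; since such transpositions generate all within-class permutations, $p_{C_k}$ in fact depends only on the class-grouped multiset of the other objects' attributes, so the two evaluations coincide and the factor is unchanged. This establishes the equivariance.

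With equivariance in hand, the rest is routine. Relabeling variables by a class-preserving $\pi$ preserves all probabilistic relations, and in particular all conditional (in)dependencies among $(\rvs,\rva,\rvs')$; combined with faithfulness and the characterization above, this makes the parent relation $\pi$-equivariant:
\begin{equation*}
O_j.\ru \in \parents(O_i.\rv') \;\Longleftrightarrow\; O_{\pi(j)}.\ru \in \parents(O_{\pi(i)}.\rv').
\end{equation*}
For (i) I would choose a $\pi$ sending $i\mapsto i'$ within $C_k$ and apply the display with $j=i$; for (ii) I would choose a $\pi$ sending $i\mapsto i'$ within $C_k$ and $j\mapsto j'$ within $C_l$, which exists because $i\neq j$ and $i'\neq j'$ leave room to define such a bijection even when $k=l$. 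These give exactly the all-or-nothing behavior required by the two clauses of Definition~\ref{def:oocg}.

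I expect the main obstacle to be the equivariance lemma, and specifically the bookkeeping that turns the pairwise causation symmetry into full invariance of $p_{C_k}$ under the class-grouped multiset of its ``other-object'' arguments, while simultaneously verifying that a class-preserving permutation carries the self-argument of one marginal onto the self-argument of the corresponding permuted marginal. Once $p_{C_k}$ is seen to be a function of the self-attributes together with the class-grouped multiset of the others, the equivariance and the two structural clauses follow mechanically; the care lies entirely in the index manipulation and in confirming that the needed permutations exist in all cases, including the degenerate small-class and $k=l$ situations.
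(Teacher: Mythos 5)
Your proposal is correct, and it reorganizes the argument in a genuinely different way from the paper. The paper's proof (Appendix~\ref{appendix:cg_OOMDP}) works one pair of same-class objects at a time: it builds an auxiliary distribution $q$ by swapping the \emph{input marginals} of $\rvo_a$ and $\rvo_b$ while keeping the transition kernel fixed, then transfers the relevant conditional independencies from $p$ to $q$ by explicit integral computations invoking Eq.~\ref{eq:result_symmetry} and Eq.~\ref{eq:causation_symmetry}, and finally invokes faithfulness of $q$ and uniqueness of the ground-truth CG; local and global causalities are treated in two separate index-chasing arguments, the global one requiring an extra chaining step $O_a.\ru \notin \parents(O_i.\rv') \Rightarrow O_b.\ru \notin \parents(O_i.\rv') \Rightarrow O_b.\ru \notin \parents(O_j.\rv')$. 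You instead front-load every use of the two symmetries into a single lemma --- class-preserving permutation equivariance of the transition kernel, proved via the independent-transition factorization plus the observation that transpositions generate within-class permutations, so each $p_{C_k}$ depends on the other objects only through class-grouped multisets --- and then derive $\pi(\gG) = \gG$ for all class-preserving $\pi$, from which both clauses of Definition~\ref{def:oocg} follow by choosing suitable permutations (your existence check, including the $k=l$ case, is right). Your route buys uniformity and transparency: one lemma replaces two long computations, and the group-action statement makes the structure of the result evident. The paper's route avoids formalizing a relabeling of the whole joint: since its $q$ keeps the same kernel by construction, consistency with the dynamics is immediate, and the work sits in the CI-transfer integrals instead. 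Indeed, once your equivariance lemma is available, your pushforward $q = \pi_* p$ and the paper's hand-built $q$ are the same distribution; the two proofs differ in where the symmetry bookkeeping is done.

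One step you should spell out, because as written it is the only fragile point: transition equivariance does \emph{not} make conditional independencies under the fixed joint $p$ invariant under permuting variable names, since the state distribution $p(\rvs)$ and the policy $p(\rva|\rvs)$ entering the joint can be arbitrary and asymmetric (e.g., one marine's action could be held constant while another's is randomized). The correct reading of your ``relabeling preserves all probabilistic relations'' step is that the pushforward $q = \pi_* p$ is a \emph{different} joint whose CI statements are the renamed ones of $p$; you then need (i) that $q$ is consistent with the same dynamics --- exactly what your equivariance lemma supplies --- and (ii) that $q$ is dynamically faithful (renaming preserves compatibility and faithfulness with respect to the renamed concretization), so that the characterization of Theorem~\ref{theorem:causal_discovery_mdp} applies to $q$ as well and, by uniqueness of the ground-truth CG across dynamically faithful consistent distributions (Definition~\ref{def:ground_truth_cg}, Theorem~\ref{Theorem:causal_discovery_mdp}), identifies the \emph{same} graph $\gG$. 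With that unpacking your argument closes; read literally as a statement about $p$ itself, it would be false.
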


\subsection{Object-Oriented Causal Dynamics Model} \label{sec:model}

\begin{definition}\label{def:oocdm}
    An \textit{object-oriented causal dynamics model} is a CDM   $\langle \mathcal{G}, \hat{p} \rangle$ (see Definition \ref{def:cdm}) such that 1) $\mathcal{G}$ is an OOCG, and 2) $\hat{p}$ satisfies Eqs. \ref{eq:result_symmetry} and \ref{eq:causation_symmetry}.
\end{definition}

\begin{figure}[tb]
    \centering
    \subfigure[$\localcaus{C_Z}{P}{H}$.]{
        \includegraphics[width=0.8\linewidth]{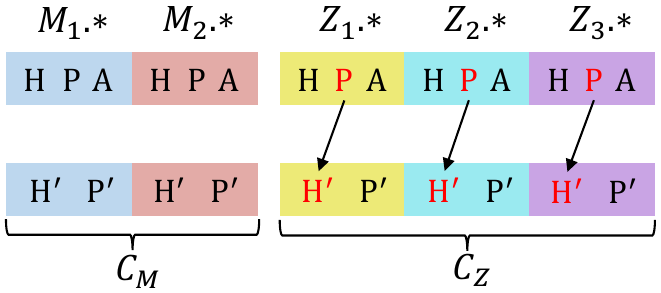}
        \label{fig:local_causality}
    }
    \subfigure[$\globalcaus{C_Z}{A}{C_M}{H}$]{
        \includegraphics[width=0.8\linewidth]{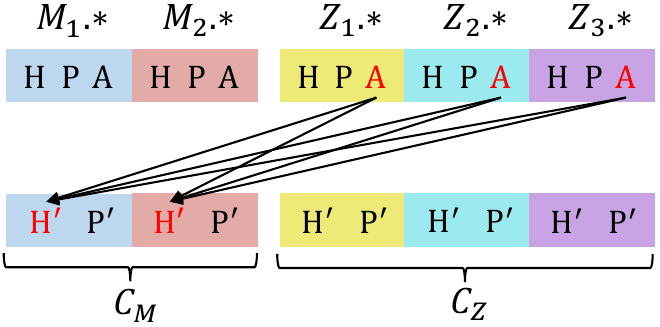}
        \label{fig:global_causality}
    }
    \caption{The class-level causalities in Example \ref{example:starcraft}.}
\end{figure}

\begin{figure}[tb]
    \centering
    \includegraphics[width=0.95\linewidth]{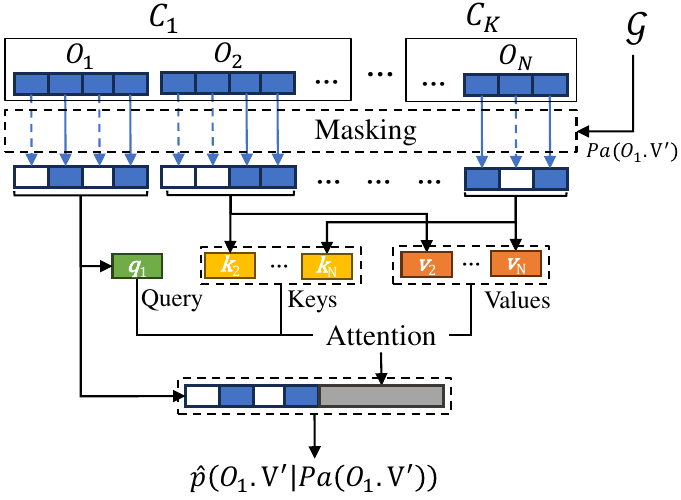}
    \caption{The illustration of $f_{C_1.V}(O_1.\rv'|\rvs, \rva; \gG)$.}
    \label{fig:predictor}
\end{figure}

Based on OOCGs, we are able to define CDMs in an object-oriented manner (see Definition \ref{def:oocdm}). In conventional CDMs, there exists an independent predictor for each next-state attribute (variable) in $\rvs'$. However, Equation~\ref{eq:result_symmetry} offers an opportunity to reduce the number of predictors by class-level sharing. That is, a shared \textit{field predictor} $f_{C.V}$ is used for each state field $C.V \in \fields{C}$ to predict the corresponding attribute $O.\rv'$ for every instance $O\in C$.

We now briefly describe how an OOCDM is implemented in our work. Inspired by \citet{wang_causal_2022}, we let an OOCG $\mathcal{G}$ be an argument of the predictor $f_{C.V}$, making it adaptable to various graph structures. Therefore, in our implementation, it follows that
\begin{equation} \label{eq:oocdm_sub_model}
   \hat{p}(O.\rv'|\parents_\gG(O.\rv')) = f_{C.V}(O.\rv' |\rvs, \rva; \gG) 
\end{equation}
for every $O\in C$, where $\parents_\gG(O.\rv')$ is the parent set of $O.\rv'$ in $\mathcal{G}$. We ensure that $f_{C.V}$ adheres to $\mathcal{G}$ by masking off the non-parental variables. In addition, we adopt key-value attention \citep{vaswani_attention_2017} to ensure causation symmetry (Eq. \ref{eq:causation_symmetry}) and enable adaptation to varying numbers of objects. A simple illustration of our implementation of $f_{C.V}$ is given as Figure \ref{fig:predictor}, and detail is in Appendix \ref{appendix:implementation}.

\subsection{Object-Oriented Causal Discovery}
\label{sec:causal_discovery}

Theorem \ref{theorem:cg_of_OOMDP} indicates that causal discovery in an OOMDP with Assumption~\ref{assumption:causation_symmetry} becomes looking for an OOCG. If the numbers of instances are fixed, checking each class-level causality in the OOCG only requires one CIT (see Appendix \ref{appendix:oo_causal_discovery}), where most CIT tools are applicable. Further, to perform CITs in environments with changeable instance numbers, we introduce an adaptation of CDL using the class-level conditional mutual information.

Assume that we have a dataset $\mathcal{D} = \{(\vs_t, \va_t, \vs_{t+1})\}_{t=1}^T$, where $\vs_t$, $\va_t$ and $\vs_{t+1}$ are the observed values of $\rvs$, $\rva$ and $\rvs'$ at step $t$, respectively. We use $O.v_{t+1}$ to denote the observed $O.\rv$ in $\vs_{t+1}$ for each state field $C_k.V$ and instance $O \in C_k$. Some OOCGs are helpful to the estimation of CMI: \textbf{I)} $\mathcal{G}_1$ is the \textbf{full} bipartite CG containing all causalities, which is also an OOCG by definition; \textbf{II)} $\mathcal{G}_{\nolocalcaus{C}{U}{V}}$ contains all causalities except for $\localcaus{C}{U}{V}$; and \textbf{III)} $\mathcal{G}_{\noglobalcaus{C_k}{U}{C}{V}}$ contains all causalities except for $\globalcaus{C_k}{U}{C}{V}$. Letting $C^t_k$ denotes the set of instances of class $C_k$ at step $t$, with the predictors introduced in Section~\ref{sec:model}, we respectively write the CMIs for class-level local and global causalities as
\begin{gather}
    \small
    \begin{aligned}
    \mathcal{I}^{\localcaus{C_k}{U}{V}}_\mathcal{D} := 
    &\frac{1}{\sum_{t=1}^T |C_k^t|}
    \sum_{t=1}^T
    \sum_{O \in C_k^t}\\
    &\log \frac{
        f_{C_k.V}(O.v_{t+1} | \vs_t, \va_t; \mathcal{G}_1)
    }{
        f_{C_k.V}(O.v_{t+1} | \vs_t, \va_t; \mathcal{G}_{\nolocalcaus{C_k}{U}{V}})
    },
    \end{aligned}
    \label{eq:local_cmi}
    \\
    \small
    \begin{aligned}
    \mathcal{I}^{\globalcaus{C_l}{U}{C_k}{V}}_\mathcal{D} :=
    &\frac{1}{\sum_{t=1}^T |C_k^t|} \cdot
    \sum_{t=1}^T
    \sum_{O_j \in C_k^t}\\
    &\log \frac{
        f_{C_k.V}(O.v_{t+1} | \vs_t, \va_t; \mathcal{G}_1)
    }{
        f_{C_k.V}(O.v_{t+1} | \vs_t, \va_t; \mathcal{G}_{\noglobalcaus{C_l}{U}{C_k}{V}})
    }.
    \end{aligned}
    \label{eq:global_cmi}
\end{gather}

Then, each class-level causality (denoted as $\varsigma$) is confirmed if $\mathcal{I}^{\varsigma}_\mathcal{D} > \varepsilon$, where $\varepsilon$ is a positive threshold parameter. In other words, $\mathcal{I}^{\varsigma}_\mathcal{D}$ compare the predictions made with and without the concerned parents within $\varsigma$, and we confirm the causality if the difference is significant. In this way, no extra models are needed for causal discovery. Finally, the whole OOCG is obtained by checking CMIs for all possible causalities (see Appendix \ref{appendix:alg_oo_causal_discovery} for the pseudo-code).

Our approach greatly reduces the computational complexities of causal discovery, from a magnitude (asymptotic boundary) of $n^3$ to a magnitude of $Nmn$, where $m$ denotes the overall number of fields and $n$ denotes the overall number of variables in $(\rvs, \rva)$. See proofs and more conclusions about computational complexities in Appendix \ref{appendix:complexity}.

\subsection{Model Learning} \label{sec:model_learning}

Dynamics models are usually optimized through Maximum Likelihood Estimation. To better adapt to the varying numbers of instances, we define the \textit{average instance log-likelihood} (AILL) function on a transition dataset $\mathcal{D}$ of $T$ steps for any CDM $\langle \mathcal{G}, \hat{p} \rangle$ as
\begin{equation} \label{eq:AILL_function}
    \begin{aligned}
    \mathcal{L}_\mathcal{G}(\mathcal{D}) :=
    &\sum_{k=1}^K \frac{1}{\sum_{t=1}^T |C^t_k|}
    \sum_{t=1}^T \sum_{C_k.V \in \sfields{C_k}}
    \sum_{O \in C^t_k}\\
    &\log \hat{p}(O.\rv' | \parents_\gG(O.\rv'))_t,
    \end{aligned}
\end{equation}
where $\hat{p}(\cdot)_t$ is the estimated probability when variables take the values observed at step $t$ in $\mathcal{D}$.

The learning target of an OOCDM mimics that of CDL. First, we optimize the AILL function under a random OOCG denoted as $\mathcal{G}_{\lambda}$ (re-sampled when each time used) where the probability of each class-level causality item is $\lambda$. This will make our model capable of handling incomplete information and adaptable to different OOCGs including those like $\mathcal{G}_{\nolocalcaus{C}{U}{V}}$ or $\mathcal{G}_{\noglobalcaus{C_k}{U}{C}{V}}$. Furthermore, we also hope to strengthen our model in two particular OOCGs: 1) the estimation of ground-truth $\hat{\mathcal{G}}$ obtained by causal discovery, where CMIs are estimated by the current model, and 2) the full OOCG $\mathcal{G}_1$ to better estimate CMIs in Eqs. \ref{eq:local_cmi} and \ref{eq:global_cmi}. Therefore, two additional items, $\mathcal{L}_{\mathcal{G}_1}(\mathcal{D})$ and $\mathcal{L}_{\hat{\mathcal{G}}}(\mathcal{D})$, respectively weighted by $\alpha$ and $\beta$, are considered in the overall target function:
\begin{equation}\label{eq:target}
    J(\mathcal{D}) = \mathcal{L}_{\mathcal{G}_\lambda}(\mathcal{D}) + 
        \alpha \mathcal{L}_{\mathcal{G}_1}(\mathcal{D}) +
        \beta \mathcal{L}_{\hat{\mathcal{G}}}(\mathcal{D}),
\end{equation}
which is optimized by gradient ascent. Pseudo-code of the learning algorithm is in Appendix \ref{appendix:alg_learning}. During the test phase, all predictions of our OOCDM are made using the discovered OOCG $\hat{\mathcal{G}}$.

\subsection{Releasing Dynamic Symmetries} \label{sec:release_sym}

According to the above definition, OOCDMs comply with the dynamic symmetries (Eqs. \ref{eq:result_symmetry} and \ref{eq:causation_symmetry}), and thus the environmental dynamics are required to have the same properties. This paper mainly discusses symmetric dynamics, as dynamic symmetries are natural in large-scale environments with varying numbers of objects, and they can always be ensured by using a proper representation of OOMDP that provides sufficient detail. For instance, if we formulate the StarCraft case in Example~\ref{example:starcraft} using only one class $C_{Unit}$ for both marines and zerglings, it may violate the causation symmetries, whereas using the two-class representation or introducing more attributes (e.g., the attacking range and faction) can ensure the symmetries.

However, the refinement of representations is sometimes not possible, due to the limited access or insufficient domain knowledge of users. Therefore, we look forward to releasing the requirement of dynamic symmetries on the already-given OOMDPs. Theoretically, it is always plausible to ensure dynamic symmetries via \textit{auxiliary attributes} (see Appendix~\ref{appendix:asym_oomdp}). Therefore, we can augment the OOCDM using built-in auxiliary attributes, which carry the information personalizing each object. In fact, the simplest way to do so is to include the indices of the objects as attributes. However, mapping indices into the personal dynamics seems indirect for the neural predictors. Instead, the augmented OOCDM utilizes the hidden encoding of auxiliary attributes:
\begin{equation}\small
    \hat{p}(O.\rv'|\parents_{\gG}(O.\rv')) = f_{C.V}(O.\rv'|\rvo_1,\vh_1,\cdots,\rvo_N,\vh_N;\gG),
\end{equation}
where $\vh_i$ is the learned hidden encoding of auxiliary attributes for $O_i$. The hidden encodings for each class are generated by a bi-directional Gate Recurrent Unit, which can handle varying numbers of instances. The implementation detail is provided in Appendix~\ref{appendix:asym_oocdm}.

The augmentation allows OOCDM to learn the personal dynamics of each object. Since the ground-truth CG in an asymmetric OOMDP may not be an OOCG, object-oriented causal discovery leads to the minimal OOCG that represents the dynamics. To obtain a precise CG, we can apply OOCDM to non-OO causal discovery based on Theorem~\ref{theorem:causal_discovery_mdp} (e.g., using the CDL approach), which takes advantage of class-level parameter sharing and inter-object attention. However, we suggest using the minimal OOCG to capture the approximate causality, which strikes a good balance between computational efficiency and structural precision, especially in large environments where non-OO CDMs fail. Additional experiments of the augmented OOCDM using OOCGs are included in Appendix~\ref{appendix:asym_exp}.
\section{Experiments}
\label{sec:experiments}

OOCDM was compared with several state-of-the-art CDMs.  \textbf{CDL} uses pooling-based predictors and also adopts CMIs for causal discovery. \textbf{CDL-A} is the attention-based variant of CDL, used to make a fair comparison with our model. \textbf{GRADER} \citep{ding_generalizing_2022} employs Fast CIT for causal discovery and Gated Recurrent Units as predictors. \textbf{TICSA} \citep{wang_task-independent_2021} utilizes score-based causal discovery.   Meanwhile, OOCDM was compared to non-causal baselines, including a widely used multi-layer perceptron (\textbf{MLP}) in model-based RL (MBRL) and an object-aware Graph Neural Network (\textbf{GNN}) that uses the architecture of \cite{kipf_contrastive_2020} to learn inter-object relationships. Additionally, we assessed the performance of the dense version of our OOCDM, namely \textbf{OOFULL}, which employs the full OOCG $\mathcal{G}_1$ and is trained by optimizing $\mathcal{L}_{\mathcal{G}_1}$. 

As mentioned in Section \ref{Sec:background:causal_rl}, CDMs are used for various purposes, and this work does not aim to specify the use of OOCDMs. Therefore, we evaluate the performance of causal discovery and the predicting accuracy, as most applications can benefit from such criteria. As a common application in MBRL, we also evaluate the performance of planning using dynamics models. Our experiments aim to 1) demonstrate that the OO framework greatly improves the effectiveness of CDMs in large-scale environments, and 2) investigate in what occasions causality brings significant advantages. Moreover, additional results on noisy data are included in Appendix~\ref{appendix:robustness_noise}, which evaluate the robustness against observational noise. Results are presented by the means and standard variances of 5 random seeds. Experimental details are presented in Appendix~\ref{appendix:experiments}.

\subsection{Environments}

We conducted experiments in 4 environments. The \textbf{Block} environment consists of several instances of class $Block$ and one instance of class $Total$. The attributes of each $Block$ object transit via a linear transform; and the attributes of the $Total$ object transit based on the maximums of attributes of the $Block$ objects. The \textbf{Mouse} environment is an $8\times 8$ grid world containing an instance of class $Mouse$, and several instances of class $Food$, $Monster$, and $Trap$. The mouse can be killed by hunger or monsters, and its goal is to survive as long as possible. The Collect-Mineral-Shards (\textbf{CMS}) and Defeat-Zerglings-Baineling (\textbf{DZB}) environments are  StarCraftII mini-games \citep{vinyals_starcraft_2017}. In CMS, the player controls two marines to collect 20 mineral shards scattered on the map, and in DZB the player controls a group of marines to kill hostile zerglings and banelings. Read Appendix \ref{appendix:envs} for detailed descriptions of these environments.

The Block and Mouse environments are ideal OOMDPs as they guarantee Eqs. \ref{eq:result_symmetry} and \ref{eq:causation_symmetry}. In addition, we intentionally insert spurious correlations in them to verify the effectiveness of causal discovery. In CMS and DZB environments, we formulate the objects and classes based on the units and their types in StarCraftII. Such formulation accounts for more piratical cases of imperfect OOMDPs, as the StarCraftII engine may not guarantee Eqs. \ref{eq:result_symmetry} and \ref{eq:causation_symmetry}, yet dynamic symmetries should roughly hold by intuition. For dynamics that are clearly asymmetric, please read Appendix~\ref{appendix:asym_exp}.

\begin{table}[tb]
    \centering
    \caption{The accuracy (in percentage) of discovered causal graphs. $n$ indicates the number of environmental variables. }
    \vspace{9pt}
    \small
    \setlength{\tabcolsep}{1.2pt}
    \begin{tabular}{lc|ccccc}
        \hline
        Env & $n$ & GRADER & CDL & CDL-A & TICSA & \textbf{OOCDM} \\ \hline
        Block$_2$ & 12 & \data{94.8}{1.3} & \data{99.4}{0.3} & \data{99.2}{1.3} & \data{97.0}{0.4} & \Data{99.7}{0.6} \\
        Block$_5$ & 24 & \data{94.0}{1.5} & \data{97.5}{1.5} & \data{99.3}{0.6} & \data{96.3}{0.6} & \Data{100.0}{0.0} \\
        Block$_{10}$ & 44 & \data{92.3}{0.9} & \data{97.6}{0.3} & \data{99.5}{0.3} & \data{97.7}{0.5} & \Data{100.0}{0.0} \\
        Mouse & 28 & \data{90.5}{0.8} & \data{90.4}{3.2} & \data{94.7}{0.2} & \data{94.1}{0.2} & \Data{100.0}{0.0}  \\
        \hline
    \end{tabular}
    \label{tab:graph_accuracy}
\end{table}

\begin{table}[tb]
    \centering
    \caption{The time (seconds) used in causal discovery. $m$ denotes the number of all fields. The sample sizes are 10k, 50k, 100k, and 200k, respectively in Block, Mouse, CMD, and DZB. The only exception is GRADER in DZB, which only uses 20k samples to make sure that causal discovery finishes within a tolerable time. TICSA is excluded from comparison as it does not involve an explicit causal discovery phase. \\}
    \footnotesize
    \setlength{\tabcolsep}{1.2pt}
    \begin{tabular}{lcc|cccc}
        \hline
        Env & $n$ & $m$ & GRADER & CDL & CDL-A & \textbf{OOCDM} \\
        \hline
        Block$_2$ & 12 & 8 & \data{114.0}{1.5} & \Data{1.4}{0.1} & \data{2.1}{0.4} & \data{2.1}{0.2} \\ 
        Block$_5$ & 24 & 8 & \data{927.0}{69.3} & \data{5.2}{0.6} & \data{8.5}{2.8} & \Data{2.1}{0.2} \\ 
        Block$_{10}$ & 44 & 8 & \data{7.0e3}{217.7} & \data{15.6}{0.7} & \data{22.8}{5.3} & \Data{2.2}{0.2} \\ 
        Mouse & 29 & 10 & \data{1.7e4}{138.4} & \data{57.8}{2.4} & \data{45.3}{2.6} & \Data{17.7}{4.0} \\
        CMS & 44 & 4 & \data{5.5e4}{397.1} & \data{209.8}{18.9} & \data{252.5}{27.3} & \Data{7.4}{0.5} \\
        DZB & 66 & 10 & \data{2.7e4}{387.1} & \data{715.6}{10.9} & \data{1.1e3}{274.7} & \Data{66.1}{0.6} \\
        \hline
    \end{tabular}
    \label{tab:causal_discovery_time}
\end{table}

% \begin{table*}[t]
%     \centering
%     \setlength{\tabcolsep}{4pt}
%     \caption{The time used in causal discovery, presented in the form ``(seconds)/(number of samples used)''. $m$ denotes the number of all fields. TICSA is excluded from comparison as it does not involve an explicit causal discovery phase. \\}
%     
%     \begin{tabular}{c|cc|cccc}
%         \hline
%         Env & $n$ & $m$ & GRADER & CDL & CDL-A & \textbf{OOCDM} \\
%         \hline
%         Block$_2$ & 12 & 8 & (\data{114.0}{1.5})/10k & (\Data{1.4}{0.1})/10k & (\data{2.1}{0.4})/10k & (\data{2.1}{0.2})/10k \\ 
%         Block$_5$ & 24 & 8 & (\data{927.0}{69.3})/10k & (\data{5.2}{0.6})/10k & (\data{8.5}{2.8})/10k & (\Data{2.1}{0.2})/10k \\ 
%         Block$_{10}$ & 44 & 8 & (\data{7.0e3}{217.7})/10k & (\data{15.6}{0.7})/10k & (\data{22.8}{5.3})/10k & (\Data{2.2}{0.2})/10k \\ 
%         Mouse & 29 & 10 & (\data{1.7e4}{138.4})/50k & (\data{57.8}{2.4})/50k & (\data{45.3}{2.6})/50k & (\Data{17.7}{4.0})/50k \\
%         CMS & 44 & 4 & (\data{5.5e4}{397.1})/100k & (\data{209.8}{18.9})/100k & (\data{252.5}{27.3})/100k & (\Data{7.4}{0.5})/100k \\
%         DZB & 66 & 10 & (\data{2.7e4}{387.1})/20k & (\data{715.6}{10.9})/200k & (\data{1.1e3}{274.7})/200k & (\Data{66.1}{0.6})/200k \\
%         \hline
%     \end{tabular}
%     \label{tab:causal_discovery_time}
% \end{table*}

\begin{table*}[t]
    \centering
    \setlength{\tabcolsep}{3pt}
    \caption{The average instance log-likelihoods of the dynamics models on various datasets. We do not show the standard variances for obviously over-fitting results (less than $-100.0$, highlighted in brown), as their variances are all extremely large. \\}
    \begin{tabular}{c|c|cccccccc}
        \hline
        Env & data & {GRADER} & CDL & CDL-A & TICSA & GNN & MLP & OOFULL & \textbf{OOCDM} \\
        \hline \multirow{3}*{Block$_2$}
        & train & \data{21.1}{0.3} & \data{20.9}{1.5} & \data{19.3}{1.9} & \data{17.4}{2.2} & \data{18.8}{0.6} & \data{16.5}{1.2} & \data{21.5}{0.9} &  \Data{22.4}{0.7} \\
        & i.d. & \data{17.1}{2.5} & \data{20.2}{1.8} & \data{10.4}{16.8} & \data{16.4}{1.9} & \data{17.9}{0.7} & \data{10.1}{4.4} & \edata{-568.2}{1.2e3} & \Data{22.2}{0.7} \\
        & o.o.d. & \edata{-65.4}{106.3} & \data{11.5}{6.7} & \edata{-6.0e5}{1.2e6} & \data{-60.1}{2.8} & \data{-5.0}{23.4} & \edata{-7.2e4}{1.3e5} & \edata{-4.6e4}{9.1e4} &  \Data{21.3}{1.9} \\
        \hline \multirow{3}*{Block$_5$}
        & train & \data{19.1}{3.4} & \data{16.5}{2.1} & \data{18.9}{0.7} & \data{12.0}{0.7} & \data{14.9}{14.4} & \data{12.6}{0.5} & \Data{20.4}{1.7} &  \data{19.6}{1.7} \\
        & i.d. & \data{6.7}{4.3} & \data{-45.3}{113.2} & \edata{-1.4e7}{6.9e7} & \data{10.8}{0.7} & \data{14.4}{0.4} & \data{-2.2}{6.3} & \Data{19.8}{1.7} &  \data{19.5}{1.7} \\
        & o.o.d. & \data{-95.6}{41.7} & \edata{-5.3e6}{1.1e7} & \edata{-1.1e9}{2.0e9} & \edata{-5.5e3}{1.1e4} & \data{-13.4}{3.4} & \edata{-1.5e7}{7.3e7} & \edata{-4.0e7}{7.8e7} &  \Data{13.5}{4.3} \\
        \hline \multirow{3}*{Block$_{10}$}
        & train & \data{19.3}{0.6} & \data{12.9}{0.8} & \data{16.0}{0.6} & \data{11.1}{1.3} & \data{13.3}{0.15} & \data{8.9}{0.6} & \data{20.3}{0.6} & \Data{21.2}{0.3} \\
        & i.d. &  \data{-26.7}{8.4} & \data{6.9}{6.4} & \data{-9.2}{42.5} & \data{-10.4}{39.8} & \data{12.9}{0.2} & \data{-75.3}{20.0} & \data{20.2}{0.6} & \Data{21.1}{0.3} \\
        & o.o.d. & \edata{-119.1}{24.6} & \edata{-4.2e6}{8.5e6} & \edata{-1.9e8}{2.0e8} & \edata{-139.4}{239.7} & \data{-17.3}{17.3} & \edata{-780.9}{429.9} & \edata{-5.4e3}{1.1e4} & \Data{15.6}{5.4} \\
        \hline \multirow{3}*{Mouse}
        & train & \data{24.2}{0.6} & \data{13.9}{1.8} & \data{22.3}{1.4} & \data{13.6}{3.5} & \data{25.6}{1.8} & \data{5.7}{0.4} & \data{30.0}{1.4} & \Data{32.2}{1.1} \\
        & i.d. & \edata{-3.2e3}{1.5e3} & \edata{-2.0e5}{4.0e5} & \edata{-3.6e4}{5.0e4} & \edata{-1.5e4}{2.0e4} & \edata{-2.7e4}{5.7e4} & \edata{-1.6e7}{1.1e7} & \data{-65.0}{153.3} & \Data{26.8}{6.7} \\
        & o.o.d. & \edata{-7.1e4}{1.0e5} & {\edata{-1.1e10}{1.2e10}} & \edata{-2.0e10}{1.1e10} & \edata{-2.5e7}{4.9e7} & \edata{-6.3e10}{9.3e10} & \edata{-8.0e10}{4.7e11} & \edata{-1.5e9}{3.0e9} &  \Data{11.2}{17.2} \\
        \hline \multirow{2}*{CMS}
        & train & \data{-1.2}{0.1} & \data{3.6}{0.8} & \data{4.1}{1.5} & \data{2.8}{1.6} & \data{6.4}{6.2} & \data{-2.0}{1.5} & \data{8.5}{1.1} & \Data{9.0}{0.5} \\
        & i.d. & \data{-1.3}{0.1} & \edata{-1.0e6}{2.0e6} & \data{4.1}{1.5} & \data{-16.3}{7.4} & \data{6.3}{0.1} & \edata{-6.4e9}{1.3e10} & \data{8.5}{1.1} & \Data{8.9}{0.5} \\
        \hline \multirow{2}*{DZB}
        & train & \data{11.0}{1.0} & \data{4.2}{2.5} & \data{12.1}{0.1} & \data{13.2}{1.2} & \data{18.0}{10.0} & \data{-0.9}{0.8} & \Data{29.0}{0.6} & \data{27.2}{2.5} \\
        & i.d. & \data{-14.9}{21.8} & \data{-3.3}{6.6} & \data{5.3}{5.3} & \edata{-2.4e5}{4.6e5} & \data{13.0}{12.8} & \edata{-1.6e12}{1.5e12} & \data{22.6}{5.6} & \Data{24.4}{5.9} \\
        \hline
    \end{tabular}
    \label{tab:loglikelihood}
\end{table*}

\begin{table*}[tb]
    \centering
    \setlength{\tabcolsep}{2pt}
    \caption{The average return of episodes when models are used for planning. In the Mouse environment, ``o.o.d.'' indicates the initial states are sampled from a new distribution. \\}
    \label{tab:return}
    \begin{tabular}{c|cccccccc}
        \hline
        Env & GRADER & CDL & CDL-A & TICSA & GNN & MLP & OOFULL & \textbf{OOCDM} \\
        \hline
        Mouse & \data{-1.2}{1.9} & \data{3.9}{3.0} & \data{-5.0}{1.3} & \data{-0.8}{0.7} & \data{6.6}{3.2} & \data{0.6}{2.0} & \data{77.9}{18.1} &  \Data{80.1}{16.9} \\
        o.o.d. & \data{-0.4}{1.7} & \data{1.8}{2.5} & \data{-0.9}{1.1} & \data{-1.2}{0.6} & \data{0.6}{0.2} & \data{-1.3}{0.7} & \data{62.2}{8.7} & \Data{75.1}{17.5} \\
        \hline
        CMS & \data{-9.5}{1.1} & \data{-9.8}{1.1} & \data{-8.8}{0.4} & \data{-9.3}{0.9} & \data{-9.8}{0.7} & \data{-8.8}{0.5} & \data{-4.1}{3.3} & \Data{3.4}{6.3} \\
        \hline
        DZB & \data{202.9}{12.3} & \data{217.3}{12.4} & \data{171.7}{18.2} & \data{188.9}{8.5} & \data{233.8}{19.8} & \data{205.4}{6.7} & \Data{269.8}{21.5} & \data{266.2}{11.4} \\
        \hline
    \end{tabular}
\end{table*}

\begin{table}[tb]
    \centering
    \caption{Results on various tasks in the Mouse environment, measuring the average instance log-likelihood and the episodic return. ``seen'' and ``unseen” respectively indicate the performances measured in seen and unseen tasks.\\}
    \footnotesize
    \setlength{\tabcolsep}{1pt}
    \begin{tabular}{c|ccc|cc}
        \hline
        \multirow{2}*{Model} & \multicolumn{3}{c}{log-likelihood}  & \multicolumn{2}{|c}{episodic return} \\
        \cline{2-6}
         ~ & train & seen & unseen  & seen & unseen \\
        \hline
        OOCDM & \data{26.9}{3.5} & \Data{25.4}{2.8} & \Data{24.8}{2.8} & \Data{94.8}{29.7} & \Data{88.8}{34.8} \\ 
        OOFULL & \Data{30.7}{1.9} & \data{22.5}{3.2} & \data{7.9}{29.8} & \data{77.0}{24.6} & \data{70.8}{22.4} \\
        \hline
    \end{tabular}
    \label{tab:unseen}
\end{table}

\subsection{Performance of Causal Discovery}

We measured the performance of causal discovery using offline data in Block and Mouse environments. Since non-OO baselines only accept a fixed number of variables, the number of instances of each class is fixed in these environments. Especially, we use ``Block$_k$'' to denote the Block environment where the number of $Block$ instances is fixed to $k$. We exclude CMS and DZB here as their ground-truth CGs are unknown (see learned OOCGs in Appendix \ref{appendix:learned_oocg}). We measure the accuracy of discovered CGs by the Structural Hamming Distance within the edges from $(\rvs, \rva)$ to $\rvs'$. As shown in Table \ref{tab:graph_accuracy}, OOCDM outperforms other CDMs in all environments and recovers ground-truth CGs in 3 out of 4 environments. These results demonstrate the improved sample efficiency of OOCDM in large-scale environments.

Table \ref{tab:causal_discovery_time} shows the computation time used by causal discovery. We note that such results may be influenced by implementation detail and hardware conditions, yet the OOCDM excels baselines with a significant gap beyond these extraneous influences. In addition, Appendix \ref{appendix:results_cost} shows that OOCDM achieves better performance with a relatively smaller size (i.e. fewer model parameters).

\subsection{Predicting Accuracy} \label{sec:expriments:predicting_accuracy}

We use the AILL functions (Eq. \ref{eq:AILL_function}) to measure the predicting accuracy of dynamics models. The models are learned using offline \textbf{train}ing data. Then, the AILL functions of these models are evaluated on the \textbf{i.d.} (in-distribution) test data sampled from the same distribution as the training data. Especially, in Block and Mouse environments, we can modify the distribution of the starting state of each episode (see Appendix \ref{appendix:ood_data}) and obtain the \textbf{o.o.d.} (out-of-distribution) test data, which contains samples that are unlikely to appear during training. The i.d. and o.o.d. test data measure two levels of generalization, respectively considering situations that are alike and unalike to those in training. We do not collect the o.o.d. data for CMS and DZB, as the PySC2 platform provides limited access to modify the initialization process in the StarCraft engine \citep{vinyals_starcraft_2017}.

The results are shown in Table \ref{tab:loglikelihood}. In small-scale environments like Block$_2$, causal models show better generalization ability than dense models on both i.d. and o.o.d. test data. However, in larger-scale environments, the performance of non-OO models declines sharply, and OO models (OOFULL and OOCDM) obtain the highest performance on the i.d. data. In addition, our OOCDM exhibits the best generalization ability on the o.o.d. data; in contrast, the performance of OOFULL is extremely low on such data. These results demonstrate that OO models are more effective in large-scale environments, and that causality greatly improves the generalization of OO models.

\subsection{Combining Models with Planning}

In this experiment, we trained dynamics models using offline data (collected through random actions). Given a reward function, we used these models to guide decision-making using Model Predictive Control \citep{camacho_model_1999} combined with Cross-Entropy Method \citep{botev_cross-entropy_2013} (see Appendix \ref{appendix:planning}), which is widely used in MBRL. The Block environment is not included here as it does not involve rewards. In the Mouse environment, the o.o.d. initialization mentioned in Section \ref{sec:expriments:predicting_accuracy} is also considered. The average returns of episodes are shown in Table \ref{tab:return}, showing that OOFULL and OOCDM are significantly better than non-OO approaches.

Between the OO models, OOCDM obtains higher returns than OOFULL in 3 of 4 environments, which demonstrates that OOCDM better generalizes to the unseen state-action pairs produced by planning. Taking CMS for example, the agent collects only a few mineral shards in the training data. When the agent plans, it encounters unseen states where most mineral shards have been collected. However, we note that OOFULL performs slightly better than OOCDM in DZB. One reason for this is that DZB possesses a joint action space of 9 marines, which is too large to conduct effective planning. Therefore, planning does not lead to states that are significantly different from those in training, prohibiting the advantage of generalization from converting to the advantage of returns. Additionally, the true CG of DZB is possibly less sparse than those in other environments, making OOFULL contain less spurious edges. Therefore, CDMs would be more helpful, if the true CG is sparse, and there exists a large divergence between the data distributions in training and testing.

\subsection{Handling Varying Numbers of Instances}

In the Mouse environment, we tested whether OOCDM and OOFULL are adaptable to various tasks with different numbers of $Food$, $Moster$, and $Trap$ instances. We randomly divide tasks into the \textit{seen} and \textit{unseen} tasks (see Appendix \ref{appendix:unseen_tasks}). Dynamics models are first trained in \textit{seen} tasks and then transferred to the \textit{unseen} without further training. We measured the log-likelihoods on the training data, the i.d. test data on seen tasks, and the test data on unseen tasks. The average episodic returns of planning were also evaluated, separately on seen and unseen tasks. As shown in Table \ref{tab:unseen}, our results demonstrate that 1) OO models can be learned using data from different tasks, 2) OO models perform a zero-shot transfer to unseen tasks with a mild reduction of performance, and 3) the overall performance is improved when combing the model with causality.

\subsection{Discussion on Results}

OOCDM greatly outperforms baselines in the generalization performance, where both prior knowledge and causality play a role. We note two possible sources of generalization error for dynamics models: 1) the spurious correlations in the data and 2) the insufficient data representing the joint space of numerous variables. Conventional CDMs can only reduce the first source of errors and still suffer from the second source. For example, CDL-A generalizes worse to o.o.d. datasets even though it implements better causal discovery than CDL, as the attention in CDL-A has a greater regression capacity than the pooling mechanism in CDL, leading to a higher risk of overfitting on the limited data. OOCDM can reduce error from the second source by using shared predictors for each class. Therefore, the improved performance of OOCDM partly derives from good exploitation of OOMDP priors, which suggests a further investigation to learn OOMDP representations in future studies.

However, the prior knowledge alone is insufficient for a good generalization. Lacking a causal structure, OOFULL and GNN fail on the o.o.d. datasets even though they are aware of the object-oriented representation. Therefore, the causal structure is crucial in the generalization performance of OOCDM.
\section{Conclusion}

This paper proposes OOCDMs that capture the causal relationships within OOMDPs. Our main innovations are the OOCGs that share class-level causalities and the use of attention-based field predictors. Furthermore, we present a CMI-based method that discovers OOCGs in environments with changing numbers of objects. Theoretical and empirical data indicate that OOCDM greatly enhances the computational efficiency and accuracy of causal discovery in large-scale environments, surpassing state-of-the-art CDMs. Moreover, OOCDM well generalizes to unseen states and tasks, yielding commendable planning outcomes. In conclusion, this study provides OOCDM as a promising solution to learn and apply CDMs in large-scale object-oriented environments.

Future work would include extracting OOMDP representations from raw pixel-based features, considering potential unobserved confounders, and modeling the relational interaction of objects.

\section*{Acknowledgements}

This work was supported by the National Nature Science Foundation of China under Grant 62073324. The paper has gone through multiple revisions, in which suggestions from reviewers are extremely useful in improving the quality of the work, especially in improving the readability and presentation.

\section*{Impact Statement}
This paper explores the causality of world models in reinforcement learning, which increases the transparency and robustness of the models. Therefore, as a positive impact, we expect this work to contribute to more reliable model-based agents in the future. To the best of our knowledge, this work does not raise any ethical issues.

\bibliography{references}
\bibliographystyle{icml2024}

\newpage
\onecolumn
\appendix
% \tableofcontents
% \newpage

\section{Acronyms and Symbols} \label{appendix:symbols}

The meanings of symbols used in the paper or will be used in the appendices are described in Table \ref{tab:symbols} unless otherwise specified. The acronyms that appear in our paper are explained in Table \ref{tab:acronyms}.

\begin{table*}[!tb]
    \centering
    \caption{Symbols used in the paper and appendices}
    \label{tab:symbols}
    \begin{tabular}{c|l}
    \hline
    Symbol(s) & Explanation \\
    \hline
    $\rs_i$ & The $i$-th state variable in a FMDP. \\
    $\rs_i'$ & The $i$-th next-state variable in a FMDP. \\
    $\rvs$ & The group of state variables in a FMDP. \\
    $\rvs'$ & The group of next-state variables in a FMDP. \\
    $\ra_i$ & The $i$-th action variable in a FMDP. \\
    $\rva$ & The group of action variables in a FMDP. \\
    $\bm{\Delta}$ & The group of all variables in a transition, i.e. $(\rvs, \rva, \rvs')$. \\
    $n_s$ & The number of state variables in a FMDP. \\
    $n_a$ & The number of action variables in a FMDP. \\
    $p$ & The probability distribution of random variables. \\
    $\hat{p}$ & The estimated distribution for $p$ in a dynamics model. \\
    $\gG$ & The DAG of a causal model (e.g., a Bayesian network, CDM, or OOCDM). \\
    $\rx\rightarrow \ry$ & Variable $\rx$ is a parent of variable $\ry$ in some given DAG.\\
    $\parents_\gG(\rx)$ & The parent set of variable $\rx$ in DAG $\gG$. \\
    $\parents(\rx)$ & The parent set of variable $\rx$ in the ground-truth causal graph. \\
    $C$ (or $C_i$) & A (or the $i$-th) class in an OOMDP. \\
    $\mathcal{C}$ & The set of classes in an OOMDP. \\
    $\mathcal{F}[C]$ & The set of fields of class $C$, i.e. $\mathcal{F}_s[C] \cup \mathcal{F}_a[C]$. \\
    $\mathcal{F}_s[C]$ & The set of state fields of class $C$. \\
    $\mathcal{F}_a[C]$ & The set of action fields of class $C$. \\
    $\mathcal{F}$ & The set of all fields in an OOMDP, i.e. $\bigcup_{C\in \mathcal{C}} \mathcal{F}[C]$. \\
    $\mathcal{F}_s$ & The set of all state fields in an OOMDP, i.e. $\bigcup_{C\in \mathcal{C}} \mathcal{F}_s[C]$. \\
    $C.U$ & Some filed of $C$ in $\mathcal{F}[C]$. \\
    $C.V$ & Some state field of $C$ in $\mathcal{F}_s[C]$. \\
    $Dom_{C.U}$ & The domain of some field $C.U$. \\
    $O, O_i$ & An object in an OOMDP. \\
    $N$ & The number of objects in an OOMDP. \\
    $K$ & The number of classes in an OOMDP. \\
    $O\in C$ & Object $O$ is an instance of class $C$. \\
    $O.\ru$ & An attribute of $O$ (derived from the field $C.U \in \mathcal{F}[C]$ where $O \in C$). \\
    $O.\rv$ & A state attribute of $O$ (derived from the field $C.\rs \in \mathcal{F}_s[C]$ where $O \in C$). \\
    $O.\rvs$ & The group of all state attributes of $O$.\\
    $O.\rva$ & The group of all action attributes of $O$.\\
    $\rvo$ & All attributes of $O$, i.e. $(O.\rvs,O.\rva)$.\\
    $O.\rv'$ & The variable of state attribute $O.\rv$ in the next-step. \\
    $O.\rvs'$ & The group of state variables $O.\rvs$ in the next-step. \\
    $O_a \sim O_b$ & $O_a$ and $O_b$ are instances of the same class. \\
    $\localcaus{C}{U}{V}$ & A local causality expression from $C.U$ to $C.V$. \\
    $\globalcaus{C_l}{U}{C_k}{V}$ & A global causality expression from $C_l.U$ to $C_k.V$.\\
    $\mathcal{D}$ & A dataset of transition samples. \\
    $C^t_k$ & The set of instances of class $C_k$ at step $t$. \\
    $\hat{p}(\cdot)_{t}$ &  The estimation of $p$ when variables take the observed values at step $t$. \\
    $\mathcal{I}^{\varsigma}_{\mathcal{D}}$ & The CMI for class-level causality $\varsigma$ on data $\mathcal{D}$. \\
    $f_{C.V}$ & The predictor for the state field $C.V$ in the OOCDM. \\
    $\mathcal{L}_\gG(\mathcal{D})$ & The AILL function of data $\mathcal{D}$ under the CG $\mathcal{G}$.\\
    $J(\mathcal{D})$ & The overall target function for model learning. \\
    \hline
    \end{tabular}
\end{table*}

\begin{table*}[!tb]
    \centering
    \caption{The meanings of acronyms that appear in the paper.\\}
    \vspace{1em}
    \begin{tabular}{c|l}
        \hline
        Acronym & Explanation \\
        \hline
        AILL & Average instance log-likelihood. \\
        BCG & Bipartite causal graph. \\
        BN & Bayesian Network. \\
        CDM & Causal dynamics model. \\
        CDL & A baseline proposed by \citet{wang_causal_2022}. \\
        CG & Causal graph. \\
        CIT & Conditional independence test. \\
        CLCE & Class-level causality expression. \\
        CMS & Collect-Mineral-Shards, a StarCraftII minigame.\\
        CMI & Conditional mutual information. \\
        DAG & Directed acyclic graph. \\
        DBN & Dynamics Bayesian Network. \\
        DZB & Defeat-Zerglings-Banelings, a StarCraftII minigame.\\
        GNN & A baseline based on a graph neural network \citep{kipf_contrastive_2020}. \\
        i.d. & In-distribution. \\
        MBRL & Model-based reinforcement learning. \\
        MDP & Markov decision process. \\
        MLP & Multi-layer perceptron. \\
        OO & Object-oriented. \\
        OOCDM & Object-oriented causal dynamics model \\
        OOCG & Object-oriented causal graph. \\
        o.o.d. & Out-of-distribution. \\
        OOFULL & Object-oriented full model (a variant of OOCDM that uses full OOCGs). \\
        OOMDP & Object-oriented Markov decision process. \\
        RL & Reinforcement learning. \\
        TICSA & A baseline proposed by \citet{wang_task-independent_2021}. \\
        \hline
    \end{tabular}
    \label{tab:acronyms}
\end{table*}
\section{Basics of Causality} \label{appendix:basics_causality}

\subsection{Causal Models}

In this section, we present some of the basic concepts and theorems of causality, which form the foundation of our theory. We first introduce Markov Compatibility \citep{pearl_causality_2000}, which defines whether a graph can correctly reflect the relationships among variables given a probability function.

\begin{definition}[Markov Compatibility] \label{def:markov_compatibility}
Assume $\gG$ is an directional acyclic graph (DAG) on a group of random variables $\rvx = (\rx_1, ..., \rx_n)$. Given any probability function $p$ of these variables, if the \textit{rule of production decomposition} holds:
\begin{equation} \label{eq:markov_compatibility}
    p(\rx_1, ..., \rx_n) = \prod_{j=1}^{n} p(\rx_j|\parents_\gG(\rx_j)),
\end{equation}
then we say that $p$ is \textit{compatible} with $\gG$, or that $\gG$ \textit{represents} $p$.
\end{definition}

Causality (the DAG) is a universal concept. The following theorem shows, that no matter what the probability function is, the dependencies between variables can always be represented by some DAG. This leads to a general form of a causal model called the Bayesian Network (BN).

\begin{theorem} [Existence of causal graphs]
    For any probability function $p$ of variables $\rvx =(\rx_1, \cdots, \rx_n)$, there always exists a DAG $\gG$ that $p$ is compatible with.
\end{theorem}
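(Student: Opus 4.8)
The plan is to reduce the statement to the chain rule of probability, which factorizes any joint distribution with no structural assumptions whatsoever. First I would fix an arbitrary total ordering of the variables, say $\rx_1, \ldots, \rx_n$, and write the universally valid decomposition
\begin{equation*}
    p(\rx_1, \ldots, \rx_n) = \prod_{j=1}^{n} p(\rx_j \mid \rx_1, \ldots, \rx_{j-1}),
\end{equation*}
with the convention that the $j=1$ factor is the marginal $p(\rx_1)$. This identity requires nothing about $p$ beyond being a valid probability function, so it is exactly the footing needed for a result that must hold for \emph{any} $p$.

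Next I would construct a candidate DAG $\gG$ directly from the chosen ordering: let every variable of smaller index point to every variable of larger index, i.e. set $\parents_\gG(\rx_j) = \{\rx_1, \ldots, \rx_{j-1}\}$ for each $j$. This is the fully connected graph consistent with the order. Acyclicity is immediate, since every edge strictly increases the index and hence no directed cycle can close. Finally I would verify Markov compatibility (Definition~\ref{def:markov_compatibility}): with these parent sets, each factor $p(\rx_j \mid \parents_\gG(\rx_j))$ is literally $p(\rx_j \mid \rx_1, \ldots, \rx_{j-1})$, so the rule of production decomposition (Equation~\ref{eq:markov_compatibility}) matches the chain-rule factorization term by term. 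Thus $p$ is compatible with $\gG$, establishing existence.

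There is essentially no hard step here — the content of the theorem is just the chain rule combined with the observation that a fully connected graph respecting a linear order is acyclic. The only subtlety I would flag is the well-definedness of the conditional factors when some conditioning events have probability zero; I would handle this with the standard convention that such factors may be assigned arbitrary (consistent) values, or equivalently by choosing any regular conditional distribution, which does not affect the product identity. I would also remark that the construction yields a possibly highly non-minimal DAG, but since the claim is only existence, minimality is not required; a minimal representation could be obtained afterward by deleting edges $\rx_i \rightarrow \rx_j$ whenever $\rx_j$ is conditionally independent of $\rx_i$ given the remaining predecessors.
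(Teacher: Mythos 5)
Your proof is correct and follows essentially the same route as the paper: fix an ordering of the variables, apply the chain rule, and read off a DAG compatible with $p$. The only difference is cosmetic --- the paper immediately passes to the minimal (Markovian) parent sets $\parents(\rx_j) \subseteq \{\rx_1,\ldots,\rx_{j-1}\}$, whereas you keep the full predecessor sets (a fully connected DAG), which establishes existence just as well and which you already observe can be pruned afterward.
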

\begin{proof}
    Using the chain rule of probability functions, we have
    \begin{equation}
        p(\rx_1, ..., \rx_n) = p(\rx_1)p(\rx_2|\rx_1)p(\rx_3|\rx_1,\rx_2)\cdots p(\rx_n|\rx_1,...,\rx_{n-1}).
    \end{equation}
    Letting $\parents(\rx_j) \subseteq \{\rx_1,...,\rx_{j-1}\}$ denote the minimal subset such that $p(\rx_j|\rx_1,...,\rx_{j-1}) = p(\rx_j|\parents(\rx_j))$ (the Markovian parents \citep{pearl_causality_2000}) for $j=1,...,n$, we obtain Eq. \ref{eq:markov_compatibility}.
\end{proof}

\begin{definition}[Bayesian Network]
    A \textit{Bayesian Netowrk} is a tuple $\langle \gG, p \rangle$, where $\gG$ is a DAG on a set of random variables $\rvx = (\rx_1, ..., \rx_n)$, and $p$ is a probability function of $\rvx$ such that $p$ is compatible with $\gG$.
\end{definition}

 Especially, according to Laplacian's conception, most stochastic phenomenons in nature are due to deterministic functions combined with unobserved disturbances. This conception leads to a special type of BN called the Structural Causal Model (SCM), which is the most popular model in causal inference.

\begin{definition}[Structural Causal Model]
    A \textit{Structural Causal Model} is a tuple $\langle \gG, p, \rvu, \mathcal{F} \rangle$, where $\langle \gG, p \rangle$ forms a Bayesian Network on variables $\rvx = (\rx_1, ..., \rx_n)$. $\rvu = (\ru_1, ..., \ru_n)$ is a set of disturbance variables that are independent of each other. $\mathcal{F} = \{f_1, f_2, \cdots, f_n\}$ is a set of structural equations, such that
    \begin{equation}
        \rx_i = f_i(\parents(\rx_i);\ru_i).
    \end{equation} 
\end{definition}

\subsection{D-Separation}

The concept of \textit{d-seperation} plays an important role in causal inference. Given a DAG $\gG$, the criterion of d-separation provides an effective way to determine on what condition two groups of variables are independent.

\begin{definition}[d-separation]
Assume $\gG$ is a DAG on a set of variables $\rvv$. Assume $\rvx$, $\rvy$, and $\rvz$ are three disjoint groups of variables in $\rvv$. We say an un-directional path between $\rvx$ and $\rvy$ is \textit{blocked} by $\rvz$ if one of the following requirements is met: 1) The path contains a chain $\ra \rightarrow \rb \rightarrow \rc$ or a fork $\ra \leftarrow \rb \rightarrow \rc$ such that $\rb \in \rvz$; or 2) the path contains a collider $\ra \rightarrow \rb \leftarrow \rc$ such that $\rvz$ contains no descendent of $\rb$. We say $\rvx$ and $\rvy$ are \textit{d-separated} by $\rvz$, if $\rvz$ blocks all un-directional paths between $\rvx$ and $\rvy$ in $\gG$, denoted as
\begin{equation}
    \rvx \upmodels_\gG \rvy \ |\ \rvz.
\end{equation}
\end{definition}

\begin{theorem}[d-separation criterion] \label{theorem:d-separation} Assume $\gG$ is a DAG on a set of variables $\rvv$. Assume $\rvx$, $\rvy$, and $\rvz$ are three disjoint groups of variables in $\rvv$. We have:\\
1) if $p$ is any probability function compatible with $\gG$, then
\begin{equation}
    (\rvx \upmodels_\gG \rvy \ |\ \rvz)
    \Rightarrow
    (\rvx \upmodels_p \rvy \ |\ \rvz),
\end{equation}
where $\upmodels_p$ means conditional independence under $p$, namely $p(\rvy|\rvz) = p(\rvy|\rvx,\rvz)$; \\
2) if $(\rvx \upmodels_p \rvy \ |\ \rvz)$ holds for all $p$ that is compatible with $\gG$, then $(\rvx \upmodels_\gG \rvy \ |\ \rvz)$ also holds.
\end{theorem}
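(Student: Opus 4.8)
The plan is to prove the two directions separately: part~1 is the \emph{soundness} of d-separation (the global Markov property), while part~2 is its \emph{completeness}. For soundness I would reduce graphical d-separation to ordinary undirected separation via \emph{moralization}, and then read off the conditional independence from a clique factorization. Concretely, I would proceed in four steps. First, pass to the ancestral subgraph: letting $\rvw = \mathrm{An}(\rvx \cup \rvy \cup \rvz)$ be the variables in $\rvx$, $\rvy$, $\rvz$ together with all their ancestors, I would marginalize $p$ over $\rvv \setminus \rvw$. Because $\rvw$ is ancestrally closed, any variable outside it can be peeled off as a childless sink whose conditional factor sums to one; eliminating such variables one at a time leaves a distribution on $\rvw$ that still factorizes according to the induced subgraph $\gG_{\rvw}$ (every parent set lies inside $\rvw$).

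Second, moralize: form the undirected graph $\gG_{\rvw}^m$ by joining every pair of variables that share a common child in $\gG_{\rvw}$ and then dropping orientations. The factorization $\prod_j p(\rx_j \mid \parents_\gG(\rx_j))$ becomes a product of factors, each supported on $\{\rx_j\} \cup \parents_\gG(\rx_j)$, which is a clique of $\gG_{\rvw}^m$; hence the marginal on $\rvw$ factorizes over the cliques of the moral graph. Third, I would prove the key lemma that $(\rvx \upmodels_\gG \rvy \mid \rvz)$ implies that $\rvz$ separates $\rvx$ from $\rvy$ in $\gG_{\rvw}^m$ in the ordinary undirected sense. Fourth, given undirected separation together with the clique factorization, I would partition the cliques into those reachable from $\rvx$ without passing through $\rvz$ and the rest, collect terms, and exhibit $p(\rvx, \rvy \mid \rvz) = p(\rvx \mid \rvz)\, p(\rvy \mid \rvz)$, which is exactly $\rvx \upmodels_p \rvy \mid \rvz$.

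For part~2 I would argue by contraposition: assuming $\rvx$ and $\rvy$ are \emph{not} d-separated by $\rvz$, I would exhibit one compatible $p$ under which they are dependent given $\rvz$. There is an active path between some $\rx \in \rvx$ and $\ry \in \rvy$ given $\rvz$; I would place a distribution on $\gG$ (for instance a linear-Gaussian structural model with generic nonzero coefficients, or a tailored binary model) so that every chain, fork, and unblocked collider along this path transmits correlation, while the directed factorization keeps $p$ compatible with $\gG$. Tracking the induced dependence along this single active path then yields $\neg(\rvx \upmodels_p \rvy \mid \rvz)$, contradicting the hypothesis that the independence holds for all compatible $p$.

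The main obstacle I anticipate is the separation lemma in the third step of soundness: correctly showing that an unblocked directed path survives as a connecting path in the moral graph of the ancestral set, and conversely that blocking by $\rvz$ becomes undirected separation. The delicate cases are colliders --- a collider on a path is active only when $\rvz$ contains the collider or one of its descendants, and the restriction to the ancestral set $\rvw$ together with the marrying of co-parents is precisely what encodes this rule undirectedly. For completeness, the subtle point is ensuring that correlations transmitted along the active path do not accidentally cancel; choosing generic parameters (avoiding the measure-zero cancellations in the linear-Gaussian construction) removes this difficulty.
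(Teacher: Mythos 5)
The paper never proves this theorem: it is stated in Appendix~B as imported background and attributed to \citet{pearl_causality_2000} (the surrounding text simply says the subsequent results are ``proven by'' Pearl), so there is no in-paper argument to compare against line by line. What you propose is, in effect, a reconstruction of the standard literature proof that the paper is implicitly leaning on, and it is structurally sound. For part~1 your route is the classical Lauritzen-style argument: pass to the ancestral set $\mathrm{An}(\rvx \cup \rvy \cup \rvz)$ (your peeling argument is right --- the last non-ancestral variable in a topological order is a sink of the whole graph, so its factor sums to one, and ancestral closure keeps every parent set inside the retained set), moralize, observe that the directed factorization becomes a clique factorization of the moral graph, and then use the undirected separation-plus-factorization argument to split the cliques into an $\rvx$-side and a $\rvy$-side and read off $p(\rvx,\rvy\,|\,\rvz)=p(\rvx\,|\,\rvz)p(\rvy\,|\,\rvz)$. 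The one load-bearing step you correctly flag but do not discharge is the equivalence lemma that $(\rvx \upmodels_\gG \rvy\,|\,\rvz)$ coincides with ordinary separation by $\rvz$ in the moral graph of the ancestral subgraph; this is a genuine theorem (it is exactly where the collider rule, including activation by descendants, gets encoded), so in a complete write-up it would need its own proof rather than a citation-free assertion. For part~2, contraposition plus a generic linear-Gaussian (or tailored discrete) model along a single active path is also the standard completeness argument; the point that makes ``generic parameters'' rigorous is that the relevant partial correlation is a rational function of the edge coefficients which is not identically zero once an active path exists, so it vanishes only on a measure-zero set --- worth saying explicitly, since this is what rules out the accidental cancellations you worry about, including the case where a collider on the path is opened by a descendant in $\rvz$ rather than by the collider itself. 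In short: your proposal is correct in outline and supplies a proof where the paper supplies only a citation; what remains is bookkeeping-level but nontrivial (the moral-separation lemma and the non-cancellation argument), both of which are known results you could either prove or cite.
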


Using the d-separation criterion, the following rule is proven by \citet{pearl_causality_2000}.

\begin{theorem} [Causal Markov Condition] \label{theorem:causal_markov_condition}
     Assume $\gG$ is a DAG on a set of variables $\rvv$. Let $p$ denote a probability function for these variables. Then $p$ is compatible with $\gG$ if and only if $(\rx \upmodels_p \ry | \parents_\gG(X))$ holds for any $\rx, \ry \in \rvv$ such that $\ry$ is not a descendant of $\rx$.
\end{theorem}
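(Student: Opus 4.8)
The plan is to prove the two implications separately, organizing both around a topological ordering $\rx_1,\dots,\rx_n$ of $\gG$ (parents preceding children). The forward direction will route through the d-separation criterion, while the backward direction will route through the chain rule of probability; the two halves therefore rely on complementary tools, and neither appeals to the conclusion of the other, which keeps the argument non-circular.

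For the direction ``compatible $\Rightarrow$ local independences'', I would first reduce the probabilistic claim to a purely graphical one. Since $p$ is compatible with $\gG$, Theorem~\ref{theorem:d-separation}(1) converts any d-separation statement in $\gG$ into the corresponding conditional independence under $p$, so it suffices to prove the graph lemma that $\parents_\gG(\rx)$ d-separates $\rx$ from every non-descendant $\ry$ (the case $\ry \in \parents_\gG(\rx)$ being trivial, since then $\ry$ sits in the conditioning set). I would establish this by analyzing an arbitrary undirected path between $\rx$ and $\ry$ according to the edge incident to $\rx$: if that edge points into $\rx$, the corresponding parent lies in $\parents_\gG(\rx)$ and is a non-collider on the path, so it blocks the path; if it points out of $\rx$, then because $\ry$ is not a descendant of $\rx$ the path cannot be purely directed and must contain a collider, and acyclicity forces every descendant of the first such collider to avoid $\parents_\gG(\rx)$ (otherwise a parent of $\rx$ would also be a descendant of $\rx$, yielding a cycle), so that collider blocks the path. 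This gives $\rx \upmodels_\gG \ry \mid \parents_\gG(\rx)$ and hence $\rx \upmodels_p \ry \mid \parents_\gG(\rx)$.

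For the converse, I would invoke the chain rule in the chosen ordering, $p(\rx_1,\dots,\rx_n) = \prod_{j=1}^{n} p(\rx_j \mid \rx_1,\dots,\rx_{j-1})$, and aim to collapse each factor to $p(\rx_j \mid \parents_\gG(\rx_j))$. Because parents precede children, every predecessor among $\rx_1,\dots,\rx_{j-1}$ is a non-descendant of $\rx_j$; the hypothesised local independences should then yield $\rx_j \upmodels_p (\{\rx_1,\dots,\rx_{j-1}\} \setminus \parents_\gG(\rx_j)) \mid \parents_\gG(\rx_j)$, which is precisely the statement that the $j$-th conditional factor depends only on $\parents_\gG(\rx_j)$. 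Multiplying the collapsed factors reproduces the product decomposition of Definition~\ref{def:markov_compatibility}, i.e.\ compatibility.

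The main obstacle is the gap between the \emph{pairwise} independences supplied by the hypothesis ($\rx_j$ independent of each single non-parent predecessor given its parents) and the \emph{joint} independence from the whole set of non-parent predecessors that the chain-rule step needs. I would bridge this either by reading the local condition in its set form --- a node independent of the set of all its non-descendants given its parents, from which the decomposition property of conditional independence immediately restricts the claim to the predecessor subset --- or, if only the pairwise form is granted, by assembling the pairwise statements into the joint one via the composition/intersection properties of conditional independence, which hold for strictly positive $p$. Once the joint conditional independence is secured, the remaining algebra is routine, so this assembly step is the crux of the proof.
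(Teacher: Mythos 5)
Note first that the paper does not prove this theorem at all: it is quoted as a known result, attributed to \citet{pearl_causality_2000}, so the only comparison available is with the standard textbook argument --- which is essentially what you reconstruct. Your forward direction is correct: the case analysis on the edge incident to $\rx$ (a parent adjacent to $\rx$ on the path is necessarily a non-collider and lies in the conditioning set; a path leaving along an out-edge must, since $\ry$ is a non-descendant, contain a collider all of whose descendants are descendants of $\rx$ and hence, by acyclicity, disjoint from $\parents_\gG(\rx)$) is the standard graphical lemma, and Theorem~\ref{theorem:d-separation} then transfers it to $p$. Your backward direction via the chain rule in a topological order is likewise the standard route, and you correctly identify its crux: the chain-rule step needs the \emph{joint} independence of $\rx_j$ from all its non-parent predecessors given $\parents_\gG(\rx_j)$, not merely pairwise statements.

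Your fallback for that crux, however, contains a genuine error: you assert that the pairwise statements can be assembled into the joint one ``via the composition/intersection properties of conditional independence, which hold for strictly positive $p$.'' Intersection does hold under strict positivity, but \emph{composition does not}, and composition is exactly what the assembly requires. Concretely: let $\rx, \ry$ be independent fair coins, let $\rz = \rx \oplus \ry$, and mix this distribution with a small weight of the uniform distribution on $\{0,1\}^3$ so that $p$ is strictly positive. Then $\rx \upmodels_p \ry$, $\rx \upmodels_p \rz$, and $\ry \upmodels_p \rz$ all hold pairwise, yet $p$ does not factorize over the empty DAG on $(\rx,\ry,\rz)$ --- so the literal pairwise reading of the theorem (``for any $\rx, \ry \in \rvv$'') is false regardless of positivity, and no argument can close that branch. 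The statement is correct only under the set-form reading, which is Pearl's actual parental Markov condition: each variable is independent of the \emph{set} of its non-descendants given its parents. Under that reading your primary bridge works and the proof is complete, since decomposition --- a semi-graphoid axiom valid for all distributions, with no positivity needed --- restricts the set-form independence to the predecessor subset that the chain rule consumes. So: keep the first bridge, delete the composition fallback, and record explicitly that the theorem's hypothesis must be interpreted with $\ry$ ranging over sets of non-descendants (the paper's phrasing is loose on this point, and your proof attempt is the place to pin it down).
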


\subsection{Causal Discovery}

Consider that $\rvv$ is a set of variables, and that $p$ is a probability function of these variables. The goal of causal discovery is to recover a DAG $\gG$ that is compatible with $p$ from a set of observation data (sampled from $p$) of these variables. However, a probability function $p$ may be compatible with more than one DAG. For example, consider two SCMs on variables $\{\rx, \ry, \rz\}$ where $\rx$ is the only exogenous variable:
\begin{align}
    \mathcal{M}_1:&\ \rx = \ru_\rx,\ \ry = \rx^2 + \ru_\ry,\ \rz = \rx + \rx^2 + \ru_\rz; \\
    \mathcal{M}_2:&\ \rx = \ru_\rx,\ \ry = \rx^2 + \ru_\ry,\ \rz = \rx + \ry + \ru_\rz.
\end{align}
If the distributions of disturbances are the same in both SCMs and $ \ru_Y \equiv 0$, then the two SCMs lead to identical probability functions. Therefore, this probability function is compatible with two different DAGs: In $\mathcal{M}_1$, we have $\parents(\rz) = \{\rx\}$; in $\mathcal{M}_2$, we have $\parents(\rz) = \{\rx, \ry\}$.

Since there exists more than one DAG that $p$ may be compatible with, Definition \ref{def:minimal_structure} suggests that we may look for the minimal DAG that can represent the fewest probability functions, i.e. the DAG that focuses most on $p$. It is worth mentioning that in the original definitions of \citet{pearl_causality_2000}, the observability of variables is considered, which is ignored here since all variables are observable in our work. 

\begin{definition}[Structural preference and equivalence] \label{def:structural_preference}
Assume $\gG_1$ and $\gG_2$ are DAGs on the same set of variables. If any probability function $p$ compatible with $\gG_1$ is also compatible with $\gG_2$, we say $\mathcal{G_1}$ is preferred to $\gG_2$, denoted as $\gG_1 \preceq \gG_2$. If we have $\gG_1 \preceq \gG_2$ and $\gG_2 \preceq \gG_1$, we say $\gG_1$ and $\gG_2$ are equivalent, denoted as $\gG_1 \equiv \gG_2$.
\end{definition}

\begin{definition}[Minimal structure] \label{def:minimal_structure}
Assume $\bm{G}$ is a family of DAGs defined on the same set of variables. We say $\gG \in \bm{G}$ is the minimal DAG among $\bm{G}$ if every $\gG'\in \bm{G}$ satisfies that $\gG \preceq \gG'$.
\end{definition}

Faithfulness (also known as stability) is an important concept for causal discovery. We say $p$ is faithful to a DAG $\gG$ if all the conditional independence relationships in $p$ are ``stored'' in the structure of $\gG$. In other words, the independent relationships in $p$ stem purely from the causal structure $\gG$ rather than coincidence. In addition, faithfulness offers a stronger condition than minimality, as it implies a unique minimal structure. Therefore, the faithfulness condition becomes a vital assumption for causal discovery, which makes the structure of the DAG identifiable. If $p$ is faithful to $\gG$, then $\gG$ precludes all spurious correlations. By assuming that the probability $p$ of data follows a stable distribution, we can use the Causal Faithfulness Property (Theorem \ref{theorem:faithfulness}) to identify the CG $\gG$ that $p$ is compatible with and faithful to. 
 
\begin{definition}[Faithfulness and stable distribution] \label{def:faithfulness}
Assume $\gG$ is a DAG and probability function $p$ is compatible with $\gG$. If we have
\begin{equation}
    (\rvx\upmodels_{p}\rvy | \rvz)
    \Rightarrow (\rvx\upmodels_{\gG}\rvy | \rvz)
\end{equation}
for any disjoint variable groups $\rvx$, $\rvy$, and $\rvz$, we say $p$ is \textit{faithful} to the DAG $\gG$. Consider $p$ a probability function of a set of variables. If there exists a DAG $\gG$ on these variables such that $p$ is compatible with and faithful to $\gG$, we say $p$ follows a \textit{stable distribution}.
\end{definition}

\begin{theorem}[Causal faithfulness property] \label{theorem:faithfulness}
     Assume $\gG$ is a DAG. If a probability function $p$ is compatible with and faithful to $\gG$, we have
     \begin{equation}
        (\rvx \upmodels_\gG \rvy \ |\ \rvz)
        \Leftrightarrow
        (\rvx \upmodels_p \rvy \ |\ \rvz)
    \end{equation}
    for any disjoint variable groups $\rvx$, $\rvy$, and $\rvz$ in $\gG$.
\end{theorem}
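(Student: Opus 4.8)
The plan is to notice that the claimed biconditional decomposes into two implications, each of which has already been established in the preceding development; the proof therefore reduces to assembling the soundness half of the d-separation criterion with the definition of faithfulness, with no genuine new work required.

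First I would dispatch the forward implication $(\rvx \upmodels_\gG \rvy \mid \rvz) \Rightarrow (\rvx \upmodels_p \rvy \mid \rvz)$. Since the hypothesis of the theorem includes that $p$ is compatible with $\gG$, this is precisely the content of part~(1) of the d-separation criterion (Theorem~\ref{theorem:d-separation}): graphical d-separation entails probabilistic conditional independence for \emph{every} distribution compatible with $\gG$. Note that this direction consumes only compatibility and makes no use of faithfulness.

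Next I would handle the reverse implication $(\rvx \upmodels_p \rvy \mid \rvz) \Rightarrow (\rvx \upmodels_\gG \rvy \mid \rvz)$. This is nothing other than the defining property of faithfulness (Definition~\ref{def:faithfulness}): because $p$ is assumed faithful to $\gG$, every conditional independence that holds in $p$ is reflected by a corresponding d-separation in the graph $\gG$. Since both the d-separation criterion and the faithfulness condition are quantified over arbitrary disjoint groups, combining the two implications yields the stated equivalence for every triple $\rvx, \rvy, \rvz$ of disjoint variable groups, which completes the argument.

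I do not anticipate any real obstacle here: the substantive content was discharged earlier, when the soundness direction of the d-separation criterion was proved. The theorem is best read as a convenient repackaging recording that, under the joint hypotheses of compatibility and faithfulness, graphical and probabilistic independence coincide exactly---the forward arrow being soundness of d-separation and the backward arrow being verbatim the faithfulness assumption. If anything deserves a remark, it is only to flag that the forward arrow never invokes faithfulness, so that compatibility alone already gives one inclusion; faithfulness is exactly the extra ingredient needed to upgrade this inclusion to an equality.
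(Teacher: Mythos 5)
Your proof is correct and matches the paper's approach: the paper simply remarks that this theorem ``can be easily derived from Theorem~\ref{theorem:d-separation},'' and your decomposition---the forward arrow from compatibility plus the soundness half of the d-separation criterion, the backward arrow being verbatim Definition~\ref{def:faithfulness}---is exactly that derivation spelled out. Your observation that faithfulness is only needed for the reverse direction is accurate and a worthwhile clarification.
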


The above Theorem \ref{theorem:faithfulness} can be easily derived from Theorem \ref{theorem:d-separation}. The following theorem \citep{peters_elements_2017} shows that faithfulness is a stronger requirement than minimality.

\begin{theorem} [Faithfulness implicates an minimal structure] \label{theorem:minimal_structure}
    Assume $p$ is a probability function of a set of variables and $\bm{G}$ is the set of DAGs that $p$ is compatible with. If $p$ is faithful to $\gG \in \bm{G}$, then $\gG$ is a minimal DAG in $\bm{G}$.
\end{theorem}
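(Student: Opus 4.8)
The plan is to prove minimality directly from Definition~\ref{def:minimal_structure}: I must show $\gG \preceq \gG'$ for every $\gG' \in \bm{G}$. Unwinding Definition~\ref{def:structural_preference}, this means taking an arbitrary probability function $q$ that is compatible with $\gG$ and showing that $q$ is also compatible with $\gG'$. The central idea is that compatibility can be checked pointwise through the local independences of the Causal Markov Condition (Theorem~\ref{theorem:causal_markov_condition}), and that faithfulness of $p$ to $\gG$ lets me transport those independences from $\gG'$ to $\gG$ and hence to $q$.

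Concretely, I would fix such a $q$ and fix a pair $\rx, \ry \in \rvv$ for which $\ry$ is not a descendant of $\rx$ in $\gG'$, taking the conditioning set $\rvz := \parents_{\gG'}(\rx)$. The available results then chain as
\[
 (\rx \upmodels_p \ry \mid \rvz) \;\Rightarrow\; (\rx \upmodels_\gG \ry \mid \rvz) \;\Rightarrow\; (\rx \upmodels_q \ry \mid \rvz).
\]
The left-hand independence holds because $p$ is compatible with $\gG'$, so the ``only if'' direction of Theorem~\ref{theorem:causal_markov_condition} applied to $p$ and $\gG'$ yields $(\rx \upmodels_p \ry \mid \parents_{\gG'}(\rx))$. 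The first implication is exactly faithfulness of $p$ to $\gG$ (Definition~\ref{def:faithfulness}, equivalently Theorem~\ref{theorem:faithfulness}): a conditional independence in $p$ forces the corresponding d-separation in $\gG$. The second implication is the soundness direction of the d-separation criterion (Theorem~\ref{theorem:d-separation}, part~1) applied to $q$, which is compatible with $\gG$: a d-separation in $\gG$ forces the conditional independence in $q$.

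Having established $(\rx \upmodels_q \ry \mid \parents_{\gG'}(\rx))$ for every $\rx$ and every non-descendant $\ry$ of $\rx$ in $\gG'$, I would invoke the ``if'' direction of Theorem~\ref{theorem:causal_markov_condition}, now applied to $q$ and $\gG'$, to conclude that $q$ is compatible with $\gG'$. Since $q$ was an arbitrary function compatible with $\gG$, this gives $\gG \preceq \gG'$; and since $\gG'$ was an arbitrary member of $\bm{G}$, $\gG$ is a minimal DAG in $\bm{G}$.

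I expect the only delicate point to be the disjointness hypotheses required by Theorem~\ref{theorem:faithfulness} and Theorem~\ref{theorem:d-separation}: the chain above needs $\rx$, $\ry$, and $\parents_{\gG'}(\rx)$ to be mutually disjoint. This fails precisely when $\ry \in \parents_{\gG'}(\rx)$, but in that case $(\rx \upmodels_q \ry \mid \parents_{\gG'}(\rx))$ holds trivially since the conditioning set already contains $\ry$, so those pairs can be discarded and do not affect the application of the Causal Markov Condition. Beyond this bookkeeping, the argument is a routine composition of the three cited theorems, so I anticipate no substantive obstacle.
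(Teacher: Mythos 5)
Your proof is correct. There is actually nothing in the paper to compare it against: the paper states this theorem with a citation to \citet{peters_elements_2017} and provides no proof of its own, so your argument fills that gap, and it does so using only the paper's stated machinery. The transport chain
\[
(\rx \upmodels_p \ry \mid \parents_{\gG'}(\rx)) \;\Rightarrow\; (\rx \upmodels_\gG \ry \mid \parents_{\gG'}(\rx)) \;\Rightarrow\; (\rx \upmodels_q \ry \mid \parents_{\gG'}(\rx))
\]
is sound: the first independence is the ``only if'' half of Theorem~\ref{theorem:causal_markov_condition} applied to $(p,\gG')$, the first implication is exactly the definition of faithfulness of $p$ to $\gG$ (Definition~\ref{def:faithfulness}), and the second is part~1 of Theorem~\ref{theorem:d-separation} applied to $q$, which is compatible with $\gG$. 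Closing with the ``if'' half of Theorem~\ref{theorem:causal_markov_condition} applied to $(q,\gG')$ gives compatibility of $q$ with $\gG'$, hence $\gG \preceq \gG'$ for every $\gG' \in \bm{G}$, which is precisely Definition~\ref{def:minimal_structure}; note that the minimality notion here is Pearl's preference-based one, not the subgraph-based ``causal minimality'' of Peters et al., so a self-contained derivation in the paper's own terms is genuinely worth having. Your handling of the disjointness caveat is also right: when $\ry \in \parents_{\gG'}(\rx)$ the independence $(\rx \upmodels_q \ry \mid \parents_{\gG'}(\rx))$ holds degenerately, since conditioning on a set containing $\ry$ makes $\ry$ almost surely constant; the only other possible collision, $\ry = \rx$, is excluded because a node counts as its own descendant --- under the opposite convention the Causal Markov Condition as stated would be false, so that reading is forced and costs you nothing.
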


\subsection{Conditional Independence Tests} \label{appendix:cit}

According to Theorem \ref{theorem:faithfulness}, the discovery of the graph structure is converted into the determination of the conditional independence relations under a faithful probability $p$. However, the exact formulation of $p$ is usually unknown, and we have to make the judgment using samples drawn from $p$.

The technique for testing whether variables are conditionally independent is called the Conditional Independence Test (CIT). In other words, CIT uses a dataset $\{(x_i, y_i, z_i)\}_{i=1}^N$ drawn from $p$ to estimate whether the hypothesis $(\rx \upmodels_p \ry\ |\ \rz)$ holds. A simple way to implement a CIT is to learn two linear regression models, $\hat{y} = f(x, z)$ and $\hat{y} = g(z)$, using the given data. We then define the square errors of both models:
\begin{equation}
    \epsilon_f(x_i, y_i, z_i) = \frac{1}{N}\sum (y_i - f(x_i, z_i))^2,
\end{equation}
\begin{equation}
    \epsilon_g(x_i, y_i, z_i) = \frac{1}{N}\sum (y_i - g(z_i))^2.
\end{equation}
If conditional independence holds, then $\rvx$ does not carry any information about $\rvy$, and thus the argument $x$ will not change the regression error. Therefore, a Student t-test can be used to check whether $\epsilon_f / \epsilon_g$ is expected to be $1$, which confirms the independence. Additionally, there are many more advanced tools to perform this test, such as Fast CIT \cite{chalupka_fast_2018} and Kernel-based CIT \cite{zhang_kernel-based_2012}.

Another way to perform the CIT is to estimate the conditional mutual information (CMI). The CMI between $\rx$ and $\ry$ conditional on $\rz$ is defined as
\begin{equation}
    \mathcal{I}(\rx, \ry\ |\ \rz) := \E_{\rx, \ry, \rz} \left[ \log \frac{p(\rx, \ry|\rz)}{p(\rx |\rz)p(\ry | \rz)} \right] = 
     \E_{\rx, \ry, \rz} \left[ \log \frac{p(\ry | \rx, \rz)}{p(\ry | \rz)} \right].
\end{equation}

\begin{theorem} 
    $\mathcal{I}(\rx, \ry\ |\ \rz) \geq 0$, where equality holds if and only if $(\rx \upmodels_p \ry\ |\ \rz)$.
\end{theorem}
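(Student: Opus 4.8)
The plan is to prove the inequality by Jensen's inequality applied to the concave logarithm, and to pin down the equality case via the strict-concavity condition (the log-argument being almost surely constant). It is cleanest to first rewrite the CMI as a single expectation of the reciprocal ratio and then bound it.

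First I would write
$$\mathcal{I}(\rx, \ry\ |\ \rz) = -\,\E_{\rx,\ry,\rz}\left[\log \frac{p(\rx|\rz)\,p(\ry|\rz)}{p(\rx,\ry|\rz)}\right].$$
Since $\log$ is concave, Jensen's inequality gives $\E[\log W] \le \log \E[W]$ for the random variable $W = \frac{p(\rx|\rz)\,p(\ry|\rz)}{p(\rx,\ry|\rz)}$, so that
$$\mathcal{I}(\rx, \ry\ |\ \rz) \ge -\log \E_{\rx,\ry,\rz}\!\left[\frac{p(\rx|\rz)\,p(\ry|\rz)}{p(\rx,\ry|\rz)}\right].$$
The remaining step is a direct computation of this inner expectation. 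Using $p(\rx,\ry,\rz) = p(\rx,\ry|\rz)\,p(\rz)$, the factor $p(\rx,\ry|\rz)$ cancels, leaving
$$\E[W] = \sum_{\rx,\ry,\rz} p(\rz)\,p(\rx|\rz)\,p(\ry|\rz) = \sum_{\rz} p(\rz) = 1,$$
and hence $\mathcal{I}(\rx,\ry\,|\,\rz) \ge -\log 1 = 0$ (the continuous case is identical with integrals replacing sums).

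For the equality case I would invoke that $\log$ is \emph{strictly} concave, so equality in Jensen's inequality holds if and only if $W$ is almost surely constant. Since $\E[W]=1$, this forces $W = 1$ wherever $p(\rx,\ry,\rz) > 0$, i.e. $p(\rx,\ry|\rz) = p(\rx|\rz)\,p(\ry|\rz)$ on the support of $p$, which is exactly $(\rx \upmodels_p \ry\,|\,\rz)$. Conversely, conditional independence makes the log-argument identically $1$, so the CMI vanishes. An equivalent and perhaps more transparent route is to recognize the CMI as an averaged Kullback--Leibler divergence, $\mathcal{I}(\rx,\ry\,|\,\rz) = \E_{\rz}\big[\KL\big(p(\rx,\ry|\rz)\,\|\,p(\rx|\rz)\,p(\ry|\rz)\big)\big]$, and then apply Gibbs' inequality (non-negativity of KL divergence, with equality iff the two distributions coincide).

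I expect the only delicate point to be the equality analysis: one must argue carefully that the almost-sure constancy of $W$ (equivalently, the vanishing of each conditional KL term) translates precisely into the pointwise factorization $p(\rx,\ry|\rz)=p(\rx|\rz)\,p(\ry|\rz)$ on the support of $p$, and that this factorization is by definition $(\rx \upmodels_p \ry\,|\,\rz)$. The non-negativity itself is routine once $\E[W]=1$ is established.
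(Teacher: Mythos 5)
The paper never proves this statement: it is quoted in Appendix B.4 as a known fact about conditional mutual information, and the text moves straight on to using it, so there is no in-paper proof to compare against. Your argument is the standard one and it is essentially correct; both of your routes (Jensen directly, or rewriting $\mathcal{I}(\rx,\ry\,|\,\rz)=\E_{\rz}\big[\KL\big(p(\rx,\ry|\rz)\,\|\,p(\rx|\rz)p(\ry|\rz)\big)\big]$ and invoking Gibbs' inequality) are valid. One imprecision is worth fixing: your claim $\E[W]=1$ is exact only if the cancellation $p(\rx,\ry,\rz)\cdot\frac{p(\rx|\rz)p(\ry|\rz)}{p(\rx,\ry|\rz)}=p(\rz)p(\rx|\rz)p(\ry|\rz)$ is summed over the whole space; the expectation ranges only over the support of $p$, so in general $\E[W]\le 1$. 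This does no harm to the inequality, since $\mathcal{I}\ge-\log\E[W]\ge 0$ still follows, and it actually helps the equality case: $\mathcal{I}=0$ forces both Jensen to be tight ($W$ a.s.\ constant) and $\E[W]=1$, and the latter condition says $p(\rz)p(\rx|\rz)p(\ry|\rz)=0$ at every off-support point, which is exactly what you need to upgrade ``factorization on the support of $p$'' to conditional independence wherever $p(\rz)>0$ --- i.e.\ to the paper's definition $p(\ry|\rz)=p(\ry|\rx,\rz)$. So the delicate point you flag does close, but only after this off-support observation; the averaged-KL formulation handles that bookkeeping automatically and is the cleaner way to finish.
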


Using the theory above, we can determine whether conditional independence holds by checking whether the CMI is $0$. As suggested by \citet{wang_causal_2022}, we can estimate the CMI using the neural approximates of $p(\ry|\rx,\rz)$ and $p(\ry|\rz)$.

\section{The Theory of Causal Dynamics Models} \label{appendix:theory_cdm}

We assume the state in a Factored Markov Decision Process (FMDP) is composed of $n_s$ state variables written as $\rvs = (\rs_1, \cdots, \rs_{n_s})$. Similarly, we have $\rva = (\ra_1, \cdots, \ra_{n_a})$. In this section, we show 1) that there always exists a causal dynamics model (a.k.a., dynamics Bayesian network) to formulate the dynamics of an FMDP, 2) that this causal dynamics model has bipartite causal graph if state variables transit independently, and 3) how the ground-truth causal graph of an FMDP can be uniquely identified. The following discussion is based on the general form of FMDPs. However, the definitions, analysis, and conclusion are also applicable in OOMDPs, where the attributes are merely variables organized in an OO framework.

\subsection{Causal Structure of Factored Markov Decision Process} 
\label{sec:caus_mdp}

The following theorem describes the general causal structure for in transition $\bm{\Delta}$ of an FMDP. We first define the concept of consistency, which means that the probability function of $\bm{\Delta}$ follows the transition function of the FMDP whereas the state distribution $p(\rvs)$ and policy $p(\rva|\rvs)$ can be arbitrary.

\begin{definition}[Consistent Probability Function] \label{def:consistent_probability}
    Assume $\bm{\Delta} = (\rvs,\rva,\rvs')$ is the set of transition variables of an FMDP.
    Suppose that $p$ is a probability function of variables $\bm{\Delta}$. We say it is \textit{consistent} with the dynamics, if 
    \begin{equation}
        p(\rvs,\rva,\rvs') = p(\rvs'|\rvs,\rva)p(\rva|\rvs)p(\rvs),
    \end{equation}
    and $p(\rvs'|\rvs,\rva)$ is exactly the transition function of the concerned FMDP.
\end{definition}

\begin{theorem} \label{theorem:DAG_for_MDP}
Assume $p$ is any probability function of a variables $\bm{\Delta} = (\rvs,\rva,\rvs')$. If it is consistent with an FMDP, then there exists a DAG $\gG$ on $\bm{\Delta}$ such that:
\begin{enumerate}
    \item $p$ is compatible with $\gG$;
    \item $\parents(\rs_i) \subseteq \rvs$ for every $\rs_i \in \rvs$;
    \item $\parents(\ra_i) \subseteq (\rvs,\rva)$ for every $\ra_i \in \rva$
    \item $\parents(\rs_j') \subseteq (\rvs,\rva,\rvs')$ for every $\rs_j' \in \rvs'$;
    \item $\gG$ contains no backward edge like $\rs_j' \rightarrow \rs_i$ or $\ra_j \rightarrow \rs_i$.
\end{enumerate}
\end{theorem}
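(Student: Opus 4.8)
The plan is to reuse the chain-rule / Markovian-parents construction from the earlier \emph{Existence of causal graphs} theorem, but applied along a variable ordering chosen to encode the temporal structure of the transition. First I would fix the ordering
\begin{equation}
    \rs_1, \ldots, \rs_{n_s},\ \ra_1, \ldots, \ra_{n_a},\ \rs_1', \ldots, \rs_{n_s}',
\end{equation}
so that every state variable precedes every action variable, which in turn precedes every next-state variable. The key observation is that the chain-rule decomposition of $p$ along this particular ordering groups exactly as $p(\rvs)\,p(\rva\mid\rvs)\,p(\rvs'\mid\rvs,\rva)$, which is precisely the factorization supplied by the consistency hypothesis (Definition~\ref{def:consistent_probability}). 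Hence the ordering is genuinely compatible with the dynamics, and nothing extra is needed to reconcile the chain rule with consistency.

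Next I would invoke the Markovian-parents construction exactly as in the existence theorem: for each variable $\rx$ in the ordering, let $\parents(\rx)$ be the minimal subset of the variables preceding $\rx$ such that $\rx$ is conditionally independent of the remaining predecessors given $\parents(\rx)$. This yields a DAG $\gG$ with which $p$ is compatible, establishing conclusion~1, and acyclicity is automatic because every edge points from an earlier variable to a later one in the ordering.

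The structural conclusions 2--5 then follow by inspecting which variables can precede a given one. For $\rs_i$ the predecessors are $\rs_1,\ldots,\rs_{i-1}\subseteq\rvs$, giving $\parents(\rs_i)\subseteq\rvs$; for $\ra_i$ the predecessors lie in $(\rvs,\ra_1,\ldots,\ra_{i-1})\subseteq(\rvs,\rva)$; and for $\rs_j'$ the predecessors lie in $(\rvs,\rva,\rs_1',\ldots,\rs_{j-1}')\subseteq(\rvs,\rva,\rvs')$. Since every edge respects the order $\rvs\prec\rva\prec\rvs'$, no edge can point backward into $\rvs$, which rules out edges of the form $\rs_j'\rightarrow\rs_i$ or $\ra_j\rightarrow\rs_i$ and gives conclusion~5.

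I expect no serious obstacle: the whole argument is a specialization of the already-proven existence theorem to a single fixed topological order, with conclusions 2--5 read straight off that order. The only points requiring a word of care are that the minimal parent sets are well-defined (the full predecessor set trivially meets the conditional-independence requirement, so a minimal subset exists) and that the consistency factorization matches this chosen chain-rule grouping rather than some other ordering; both are immediate once the ordering above is fixed.
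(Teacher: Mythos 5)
Your proposal is correct and follows essentially the same route as the paper's proof: both fix the temporal ordering $\rvs \prec \rva \prec \rvs'$, identify the chain-rule decomposition with the consistency factorization $p(\rvs)\,p(\rva|\rvs)\,p(\rvs'|\rvs,\rva)$, and take minimal (Markovian) parent sets within each block to read off conclusions 1--5. The only cosmetic difference is that you make the single global ordering and the acyclicity argument explicit, whereas the paper decomposes each of the three factors separately and leaves those points implicit.
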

\begin{proof}
    We have
    \begin{equation}
        p(\bm{\Delta}) = p(\rvs)p(\rva|\rvs)p(\rvs'|\rvs,\rva).
    \end{equation}
    It is easy to see that $\parents(\ra_i) \subseteq \rvs$ for every $\ra_i \in\ \rva$ and $\parents(\rs_j') \in (\rvs,\rva)$ for every $\rs_j' \in \rvs'$ if we decompose the probabilities using the chain rule. For $p(\rvs'|\rvs,\rva)$, we can write 
    \begin{equation}
        p(\rvs'|\rvs,\rva) = \prod_{j=1}^{n_s} p(\rs_j|\rvs,\rva,\rs_1',...,\rs_{j-1}'),
    \end{equation}
    and define $\parents(\rs_j') \subseteq (\rvs,\rva,\rs_1,...,\rs_{j-1})$ as the minimal subset such that $p(\rs_j'|\rvs,\rva,\rs_1',...,\rs_{j-1}') = p(\rs_j'|\parents(\rs_j'))$. For $p(\rvs)$ and $p(\rva|\rvs)$, we can perform similar decomposition. Therefore, the above conclusions are easy to draw.
\end{proof}

The definition of CDMs has been provided in the paper's Definition \ref{def:cdm}. From the above proof, we can see that the parenthood of next-state variables $\rvs'$ is not affected by the choice of policy $p(\rva|\rvs)$ and the prior distribution of state $p(\rvs)$. Therefore, Causal Dynamics Models (CDMs) only care about the causality of $\rvs'$. Like Definition \ref{def:markov_compatibility}, we define whether the causal graph can represent the dynamics of an FMDP using the product decomposition.

\begin{definition}[Represented dynamics] \label{def:represented_dynamics}
Assume $\mathcal{M} = \langle \gG, p \rangle$ is a CDM for an FMDP. If $\gG$ satisfies that 
\begin{equation}
    p^*(\rvs'|\rvs,\rva) = \prod_{j=1}^{n_s} p^*(\rs_j'|\parents(\rs_j'))
\end{equation}
for every probability function $p^*$ of $(\rvs,\rva,\rvs')$ that is consistent with the FMDP, then we say $\gG$ \textit{represents} the FMDP's dynamics. Further, if we also have that $p(\rvs'|\rvs,\rva) = p^*(\rvs'|\rvs,\rva)$ for every consistent probability function $p^*$, we say the CDM $\mathcal{M}$ \textit{matches} the dynamics of the FMDP.
\end{definition}

It is important to note that a CDM is \textbf{not} a causal model (Bayesian network). It does not specify the causality of $\rvs$ and $\rva$ but focuses on the causality of the next state $\rvs'$. In other words, a CDM hopes to capture the universal rule of transitions, no matter what policy the agent uses, how each episode begins, and how each episode terminates. It is concretized to a real causal model when the policy $p(\rva|\rvs)$ and the state distribution $p(\rvs)$ are given.

\begin{theorem} [Concretization] \label{theorem:concretization}
Assume $\gG$ is a causal graph that represents the dynamics of an FMDP. Then for every probability function $p$ consistent with the FMDP's dynamics, there exists a DAG $\gG_p$ on $(\rvs,\rva,\rvs')$ such that 1) $\gG_p$ satisfies all propositions in Theorem \ref{theorem:DAG_for_MDP}, 2) $\gG$ is a sub-graph of $\gG_p$, and 3) $\gG_p$ and $\gG$ share the same parent set $\parents(\rs_j')$ for each next-sate variable $\rs_j'\in \rvs'$. We call this DAG $G_P$ as a \textit{concretization} of $\gG$ under $p$. 
\end{theorem}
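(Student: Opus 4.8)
The plan is to build $\gG_p$ by grafting together two pieces: the causal structure among the current variables $(\rvs,\rva)$ that the consistent $p$ forces, and the parent sets of the next-state variables $\rvs'$ prescribed by $\gG$. First I would invoke Theorem~\ref{theorem:DAG_for_MDP} on $p$ (consistent by hypothesis) to obtain a DAG $\gG^0$ on $\bm{\Delta}$ satisfying propositions 1--5; in particular $p$ is compatible with $\gG^0$, and $\gG^0$ carries no edge from $\rvs'$ back into $(\rvs,\rva)$. I then \emph{define} $\gG_p$ by setting $\parents_{\gG_p}(\rx) := \parents_{\gG^0}(\rx)$ for every $\rx \in (\rvs,\rva)$ and $\parents_{\gG_p}(\rs_j') := \parents_{\gG}(\rs_j')$ for every $\rs_j' \in \rvs'$.

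Next I would dispatch the easy structural claims. Since $\gG$ is a CDM graph whose edges point only into $\rvs'$, it is automatically a subgraph of $\gG_p$ (claim~2), and the two graphs agree on $\parents(\rs_j')$ by construction (claim~3). Acyclicity follows by exhibiting a topological order: list $(\rvs,\rva)$ in a topological order of $\gG^0$, then $\rvs'$ in a topological order of $\gG$. Every edge of $\gG_p$ respects this order, because the $(\rvs,\rva)$-internal edges come from $\gG^0$, the edges into $\rvs'$ come from $\gG$, and no edge of $\gG_p$ runs from $\rvs'$ back to $(\rvs,\rva)$. The parent-containment conditions and the absence of backward edges (propositions 2--5 of Theorem~\ref{theorem:DAG_for_MDP}) are then inherited verbatim from the defining properties of $\gG^0$ and $\gG$.

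The substantive step, and where I expect the real work to lie, is proposition~1: that $p$ is compatible with $\gG_p$. I would start from the factorization of $p$ under $\gG^0$, split the product into an $(\rvs,\rva)$-block and an $\rvs'$-block, and observe that the $\rvs'$-block is exactly the transition conditional, $\prod_{j} p(\rs_j'|\parents_{\gG^0}(\rs_j')) = p(\rvs'|\rvs,\rva)$, by the chain-rule construction inside the proof of Theorem~\ref{theorem:DAG_for_MDP}. Integrating over $\rvs'$ then forces the remaining $(\rvs,\rva)$-block to equal the marginal, giving $p(\rvs,\rva) = \prod_{\rx\in(\rvs,\rva)} p(\rx|\parents_{\gG_p}(\rx))$. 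Because $\gG$ represents the dynamics and $p$ is consistent, Definition~\ref{def:represented_dynamics} supplies $p(\rvs'|\rvs,\rva) = \prod_j p(\rs_j'|\parents_{\gG}(\rs_j')) = \prod_j p(\rs_j'|\parents_{\gG_p}(\rs_j'))$. Reassembling $p(\bm{\Delta}) = p(\rvs,\rva)\,p(\rvs'|\rvs,\rva)$ yields a product over all parent sets of $\gG_p$, which is compatibility. The one point demanding care is the legality of swapping $\gG^0$'s parent sets for $\gG$'s on the $\rvs'$-block without changing the product's value; this is precisely guaranteed by the fact that both factorizations evaluate to the same transition conditional $p(\rvs'|\rvs,\rva)$, which is where the hypothesis that $\gG$ represents the dynamics does its essential work.
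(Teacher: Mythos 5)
Your proposal is correct and takes essentially the same approach as the paper: both construct $\gG_p$ by pairing a chain-rule factorization of the current-variable part $p(\rvs)p(\rva|\rvs)$ with the parent sets prescribed by $\gG$ for $\rvs'$, and both close the argument by substituting the representation property $p(\rvs'|\rvs,\rva) = \prod_j p(\rs_j'|\parents_\gG(\rs_j'))$ into the joint factorization. The only difference is organizational — you reuse Theorem~\ref{theorem:DAG_for_MDP} (whose proof is exactly that chain-rule decomposition) and recover the $(\rvs,\rva)$-block by marginalizing out $\rvs'$, where the paper redoes the chain rule inline; your treatment of acyclicity and the subgraph claims is in fact more explicit than the paper's.
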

\begin{proof}
    Because $\gG$ represents the dynamics of an FMDP, we have
    $$p(\rvs'|\rvs,\rva) = \prod_{j=1}^{n_s} p(\rs_j'|\parents(\rs_j')).$$

    Now, we use $\parents_{\gG_p}(\rx)$ to denote the parent set of variable $\rx$ in $\gG_p$. Since $p$ is consistent with the FMDP, we have
    $$ p(\rvs,\rva,\rvs') = p(\rvs'|\rvs,\rva)p(\rva|\rvs)p(\rvs).$$
    Using the chain rule to decompose $p(\rva|\rvs)$ and $p(\rvs)$, we have
    $$ p(\rvs,\rva,\rvs') = 
        p(\rvs'|\rvs,\rva)
        \prod_{j=1}^{n_a}p(\ra_j|\rvs,\ra_1,...,\ra_{j-1})
        \prod_{i=1}^{n_s} p(\rs_i|\rs_1,...,\rs_{i-1}).$$
    Then there exists $\gG_p$ where $\parents_{\gG_p}(\ra_j) \subseteq (\rvs,\ra_1,...,\ra_{j-1})$, $\parents_{\gG_p}(\rs_i) \subseteq \{\rs_1,...,\rs_{i-1}\}$, and $\parents_{\gG_p}(\rs_k') = \parents(\rs_k')$, such that
    \begin{align*}
        p(\rvs,\rva,\rvs') &= 
            p(\rvs'|\rvs,\rva)
            \prod_{j=1}^{n_a}p(\ra_j|\parents_{\gG_p}(\ra_j))
            \prod_{i=1}^{n_s} p(\rs_i|\parents_{\gG_p}(\rs_i)) \\
        &=
            \prod_{k=1}^{n_s}p(\rs_k'|\parents_{\gG_p}(\rs_k'))
            \prod_{j=1}^{n_a}p(\ra_j|\parents_{\gG_p}(\ra_j))
            \prod_{i=1}^{n_s}p(\rs_i|\parents_{\gG_p}(\rs_i)) \\
        &=
            \prod_{k=1}^{n_s}p(\rs_k'|\parents(\rs_k'))
            \prod_{j=1}^{n_a}p(\ra_j|\parents_{\gG_p}(\ra_j))
            \prod_{i=1}^{n_s}p(\rs_i|\parents_{\gG_p}(\rs_i)).
    \end{align*}
    Then the theorem is proven with the above equations.
\end{proof}

There may exist more than one causal graph that represents the dynamics of the dynamics. However, not all these graphs are ``good'' as they may contain redundant edges. In order to remove spurious correlations and improve the generalization of dynamics models, we want the CG to capture as many independent relationships as possible. Most importantly, these independent relationships should be universal. That is, they hold for every other probability function that is consistent with the dynamics so that they will not be destroyed if the agent changes its policy or we change the distribution of states. Therefore, the desired property of the causal graph is given in the following definition.

\begin{definition}[Dynamical faithfulness] \label{def:dynamical_faithfulness}
Assume $\gG$ is a causal graph of a CDM and $p$ is a probability function of $(\rvs,\rva,\rvs')$. We say $p$ is \textit{dynamically faithful} to $\gG$, if there exists a DAG $\gG_*$ such that 1) $p$ is compatible with and faithful to $\gG_*$, and 2) $\gG_*$ is a concretization of $\gG$ under $p$.
\end{definition}

\begin{definition}[Ground-truth causal graph] \label{def:ground_truth_cg}
Assume $\gG$ is a causal graph of a CDM for an FMDP. We say $\gG$ a \textit{ground-truth} causal graph of the FMDP's dynamics if it is the unique causal graph inferred from all dynamically faithful probability functions. That is, for every consistent probability function $p$ of $(\rvs,\rva,\rvs')$ and any causal graph $\gG'$, we have 
$$p\textrm{ is dynamically faithful to }\gG'  \Longrightarrow \gG' = \gG.$$
\end{definition}

\begin{theorem} \label{theorem:condition_ground_truth}
    A necessary and sufficient condition for a CG $\gG$ to be the ground-truth causal graph of the FMDP dynamics is that, for every consistent probability function $p$ and any DAG $\gG_*$ on $(\rvs,\rva,\rvs')$, we have
    $$p\textrm{ is compatible with and faithful to }\gG_*  \Longrightarrow \gG_* \textrm{ is a concretization of } \gG \textrm{ under } p.$$
\end{theorem}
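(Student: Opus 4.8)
The plan is to prove the biconditional by unfolding Definitions~\ref{def:ground_truth_cg} and~\ref{def:dynamical_faithfulness} together with the notion of concretization from Theorem~\ref{theorem:concretization}, reducing everything to a statement about the next-state parent sets $\parents(\rs_j')$. The one structural fact I would lean on repeatedly is that, in the relevant (bipartite) class of CDM causal graphs (Definition~\ref{def:cdm}, Theorem~\ref{theorem:causal_discovery_mdp}), a graph records no edges among $(\rvs,\rva)$ and is therefore \emph{determined entirely} by the family $\{\parents(\rs_j')\}_j$. From this I extract the hinge lemma of both directions: if a single DAG $\gG_*$ is simultaneously a concretization of $\gG$ and of $\gG'$ under the same $p$, then property~(3) of concretization forces $\parents_\gG(\rs_j') = \parents_{\gG_*}(\rs_j') = \parents_{\gG'}(\rs_j')$ for every $j$, whence $\gG = \gG'$.

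For the sufficiency direction (the displayed condition implies $\gG$ is ground-truth), I would take any consistent $p$ and any causal graph $\gG'$ with $p$ dynamically faithful to $\gG'$. By Definition~\ref{def:dynamical_faithfulness} there is a witness $\gG_*$ that $p$ is compatible with and faithful to, and that is a concretization of $\gG'$. Applying the assumed condition to this very $\gG_*$ yields that $\gG_*$ is \emph{also} a concretization of $\gG$, so the common-concretization lemma forces $\gG' = \gG$, which is exactly Definition~\ref{def:ground_truth_cg}.

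For the necessity direction (ground-truth implies the condition), I would start from a consistent $p$ and a DAG $\gG_*$ that $p$ is compatible with and faithful to, and construct a candidate CDM graph $\gG'$ by setting $\parents_{\gG'}(\rs_j') := \parents_{\gG_*}(\rs_j')$ and placing no other edges. Then $\gG' \subseteq \gG_*$ and the two agree on every next-state parent set, so $\gG_*$ becomes a concretization of $\gG'$ as soon as one verifies that $\gG_*$ obeys the directional constraints of Theorem~\ref{theorem:DAG_for_MDP}. Granting that verification, $p$ is dynamically faithful to $\gG'$ (witnessed by $\gG_*$), so the ground-truth property of $\gG$ gives $\gG' = \gG$, and hence $\gG_*$ is a concretization of $\gG$, as required.

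The hard part is precisely the directional check just invoked: I must show that any $\gG_*$ to which a \emph{consistent} $p$ is faithful respects the temporal order of the transition, i.e. contains no backward edge $\rs_j' \rightarrow \rs_i$ or $\ra_j \rightarrow \rs_i$ (conditions (2),(3),(5) of Theorem~\ref{theorem:DAG_for_MDP}). Faithfulness by itself does not pin down orientation, since Markov-equivalent DAGs can reorient any edge not fixed by a collider, so the real content is to convert \emph{consistency} — the factorization $p(\rvs,\rva,\rvs') = p(\rvs)\,p(\rva|\rvs)\,p(\rvs'|\rvs,\rva)$ — into orientation constraints. My approach would be to combine compatibility and faithfulness through the d-separation/faithfulness correspondence (Theorems~\ref{theorem:d-separation} and~\ref{theorem:faithfulness}) with the observation that under any consistent $p$ each next-state variable sits at the ``end'' of the transition factor, so the colliders it induces force incident current/action edges to point \emph{into} $\rvs'$ and forbid $\rvs'$ from being an ancestor of $(\rvs,\rva)$. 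I expect this step to be where genericity of $p$ must be used (full support of the data-collecting policy and state distribution, ruling out accidental independences that would free an edge's orientation), and it is the part that demands careful, case-wise argument rather than the essentially formal definition-chasing of the remaining steps.
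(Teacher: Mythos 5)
Your proof has the same skeleton as the paper's: your sufficiency direction (the displayed condition implies ground-truth), built on the dynamical-faithfulness witness plus your common-concretization ``hinge lemma,'' is verbatim the paper's argument, and your necessity direction follows the paper's construction of the sub-graph $\gG'$ with $\parents_{\gG'}(\rs_j') := \parents_{\gG_*}(\rs_j')$. Where you part ways is the step you flag as the hard part: verifying that any DAG $\gG_*$ which a consistent $p$ is compatible with and faithful to obeys the directional constraints of Theorem~\ref{theorem:DAG_for_MDP}. The paper never performs this check; its necessity proof opens with ``let $\gG'$ be a sub-graph of $\gG_*$ such that $\gG_*$ is a concretization of $\gG'$ under $p$,'' which tacitly presupposes exactly the property you are worried about.

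Your worry is well-founded, but the strategy you propose for resolving it would fail, because the orientation claim you need is false as stated. Take $n_s = n_a = 1$, so the transition variables are $(\rs, \ra, \rs')$, and take a consistent $p$ generated by a stochastic state-dependent policy and a transition depending genuinely on both $\rs$ and $\ra$; generically such a $p$ has no conditional independences whatsoever. The complete DAG with ordering $\rs' \prec \ra \prec \rs$ (every edge backward) is compatible with $p$ by the chain rule and is vacuously faithful, since a complete DAG entails no d-separations and $p$ has no independences to account for. Yet it is a concretization of no CG, so the displayed condition fails for every $\gG$, even though a ground-truth $\gG$ in the sense of Definition~\ref{def:ground_truth_cg} does exist in this example. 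Colliders cannot rescue the orientation argument (a complete DAG induces none), and genericity cuts against you rather than for you: the fewer independences $p$ has, the more DAGs are vacuously faithful to it. The theorem is salvageable only by reading ``any DAG $\gG_*$ on $(\rvs,\rva,\rvs')$'' as ranging over DAGs that already respect the temporal structure (items 2, 3, and 5 of Theorem~\ref{theorem:DAG_for_MDP}) --- evidently the paper's tacit intent --- and under that reading your directional check disappears and the rest of your argument, like the paper's, is straightforward definition-chasing.
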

\begin{proof}
    (Necessity) Let $\gG'$ denotes a sub-graph of $\gG_*$ such that $\gG_*$ is a concretization of $\gG'$ under $p$. We have that
    \begin{align*}
        ~& p\textrm{ is compatible with and faithful to }\gG_* \\
        \Longrightarrow & p\textrm{ is dynamically faithful to }\gG' \\
        \Longrightarrow & \gG' = \gG \\
        \Longrightarrow & \gG_* \textrm{ is a concretization of } \gG \textrm{ under } p.
    \end{align*}

    (Sufficiency) Let $\gG'_p$ be some concretization of $\gG'$ under $p$. We have that
    \begin{align*}
        ~& p\textrm{ is dynamically faithful to }\gG' \\
        \Longrightarrow &\exists \gG'_p \textrm{ which } p \textrm{ is compatible with and faithful to } \\
        \Longrightarrow &\exists \gG'_p \textrm{ is a concretization of $\gG$} \\
        \Longrightarrow &\gG = \gG'.
    \end{align*}
\end{proof}

\subsection{Bipartite Causal Graphs}

Not all MDPs are suitable to be modeled by causality. For example, if the state variables are raw pixels of an image, then transitions of variables are densely correlated, leading to a dense causal graph. In this case, CDMs are deprecated, unless certain abstraction and representation techniques are performed to simplify the causal structure (not included in our work). We will make decent assumptions about the FMDP, which greatly simplifies the structure of the causal graph.

\begin{assumption} [Independent transition] \label{assumption:independent_transition}
    The transition function of the FMDP follows that
    \begin{equation}
        p(\rvs'|\rvs,\rva) = \prod_{j=1}^{n_s} p(\rs_j'|\rvs,\rva).
    \end{equation}
\end{assumption}

 Several studies have assumed that the causal graph of a CDM is bipartite \citep{volodin_causeoccam_2021, wang_task-independent_2021, wang_causal_2022, ding_generalizing_2022}. We formally define a bipartite causal graph (BCG) below. If the transition is independent (Assumption \ref{assumption:independent_transition}), we argue that: 1) we can use BCGs as they always exist, and 2) we should use BCGs as they are necessary for faithfulness.

\begin{definition}[Bipartite causal graph] \label{def:bcg}
    Consider that $\gG$ is the CG of a CDM. If we have $\parents(\rs_j') \subseteq (\rvs,\rva)$ for every $\rs_j' \in \rvs$, we say $\gG$ is a \textit{bipartite causal graph} (BCG). In other words, no lateral edge like $\rs_i' \rightarrow \rs_j'$ exists among $\rvs'$.
\end{definition}

\begin{theorem}[Existence of BCGs] \label{theorem:existence_bcg}
    If an FMDP follows Assumption \ref{assumption:independent_transition} then it is matched by some CDM whose causal graph is a BCG. In addition, in this BCG we have
    \begin{equation}
        p(\rs_j'|\parents(\rs_j')) = p(\rs_j'|\rvs,\rva),\quad \for \rs_j' \in \rvs'.
    \end{equation}
\end{theorem}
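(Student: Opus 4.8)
The plan is to exhibit an explicit bipartite candidate and verify that it matches the dynamics in the sense of Definition~\ref{def:represented_dynamics}. I would take $\gG$ to be the \emph{fully connected} bipartite graph: every variable in $(\rvs,\rva)$ is a parent of every next-state variable, so that $\parents(\rs_j') = (\rvs,\rva)$ for each $\rs_j' \in \rvs'$, with no edges inside $\rvs'$. I would pair this with any probability function $p$ that is consistent with the FMDP (Definition~\ref{def:consistent_probability})---for instance, fix an arbitrary policy $p(\rva|\rvs)$ and state prior $p(\rvs)$ and multiply by the given transition function. This $\gG$ is acyclic, since all edges point from $(\rvs,\rva)$ into $\rvs'$, and it satisfies $\parents(\rs_j')\subseteq (\rvs,\rva)$, so it is a BCG by Definition~\ref{def:bcg}.

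The central step is to show that $\gG$ represents the dynamics, i.e. that the product decomposition holds for \emph{every} consistent $p^*$, not merely for the chosen $p$. This is where Assumption~\ref{assumption:independent_transition} does the work: by definition, every consistent $p^*$ shares the single FMDP transition function $p^*(\rvs'|\rvs,\rva)$, and the assumption states precisely that this function factorizes as $\prod_{j=1}^{n_s} p^*(\rs_j'|\rvs,\rva)$. Since $\parents(\rs_j')=(\rvs,\rva)$, each factor $p^*(\rs_j'|\rvs,\rva)$ is exactly $p^*(\rs_j'|\parents(\rs_j'))$, so the decomposition required by Definition~\ref{def:represented_dynamics} holds for all consistent $p^*$. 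The matching clause is then immediate: because $p$ is itself consistent, $p(\rvs'|\rvs,\rva)$ equals the common FMDP transition function and hence equals $p^*(\rvs'|\rvs,\rva)$ for every consistent $p^*$, which is the second requirement for the CDM to match.

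Finally, the ``in addition'' identity $p(\rs_j'|\parents(\rs_j'))=p(\rs_j'|\rvs,\rva)$ is immediate for this construction, since $\parents(\rs_j')=(\rvs,\rva)$. Should one want the same identity for any BCG that represents the dynamics with possibly smaller parent sets $W_j \subseteq (\rvs,\rva)$, I would marginalize the decomposition $p(\rvs'|\rvs,\rva)=\prod_j p(\rs_j'|W_j)$ over all next-state coordinates except $\rs_j'$: each conditional $p(\rs_k'|W_k)$ with $k\neq j$ integrates to one, leaving $p(\rs_j'|W_j)$, while the same marginalization of Assumption~\ref{assumption:independent_transition} leaves $p(\rs_j'|\rvs,\rva)$, yielding the equality coordinatewise.

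I do not anticipate a genuine obstacle here; the only subtlety is bookkeeping with the quantifiers---remembering that ``represents'' and ``matches'' demand the factorization for \emph{every} consistent $p^*$, and recognizing that Assumption~\ref{assumption:independent_transition} is exactly what makes the fully connected BCG satisfy this universally rather than for a single distribution.
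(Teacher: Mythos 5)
Your proposal is correct and follows essentially the same route as the paper's own proof: both take the fully connected bipartite graph (the paper phrases it as ``any subset of $(\rvs,\rva)$ whose conditional reproduces $p(\rs_j'|\rvs,\rva)$,'' falling back on the full set, which is exactly your choice) and invoke Assumption~\ref{assumption:independent_transition} to obtain the product decomposition $p(\rvs'|\rvs,\rva)=\prod_j p(\rs_j'|\parents(\rs_j'))$. Your explicit handling of the universal quantifier over all consistent $p^*$ and the closing marginalization remark are careful refinements, but they do not change the substance of the argument.
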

\begin{proof}
    Assuming $p$ gives the transition function of the SCM. We can define the CDM as $\langle \gG, p \rangle$. In $\gG$, we let $\parents(\rs_j')$ be any subset of $(\rvs, \rva)$ such that $p(\rs_j'|\parents(\rs_j')) = p(\rs_j'|\rvs,\rva)$. Such a subset always exists since it may directly be $(\rvs,\rva)$. Using Assumption \ref{assumption:independent_transition}, we have
    $$ p(\rvs'|\rvs,\rva) 
    = \prod_{j=1}^{n_s} p(\rs_j'|\rvs,\rva)
    = \prod_{j=1}^{n_s} p(\rs_j'|\parents(\rs_j')).$$
    Then, we let $p(\rs_j'|\parents(\rs_j'))$ be equal to $p(\rs_j'|\parents(\rs_j'))$. As a result, the dynamics are matched by the CDM and $\gG$ is a BCG.
\end{proof}

\begin{theorem}[Faithfulness for BCGs] \label{theorem:faithfulness_BCGs}
    Assume that the dynamics of an FMDP are represented by $\gG$ and $p$ is a probability function consistent with the FMDP. If Assumption \ref{assumption:independent_transition} holds, then a necessary condition of that $p$ is dynamically faithful to $\gG$ (see Definition \ref{def:dynamical_faithfulness}) is that $\gG$ is a BCG.
\end{theorem}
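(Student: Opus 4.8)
The plan is to argue by contradiction, turning the independent-transition factorization into a conditional independence that faithfulness cannot reconcile with a lateral edge. Throughout, the object I actually reason about is the concretization $\gG_*$ rather than $\gG$ itself.

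First I would unpack dynamical faithfulness. By Definition \ref{def:dynamical_faithfulness}, there is a DAG $\gG_*$ on $(\rvs,\rva,\rvs')$ to which $p$ is compatible and faithful and which is a concretization of $\gG$ under $p$. The only feature of concretization I need is the one supplied by Theorem \ref{theorem:concretization}: $\gG_*$ and $\gG$ assign the \emph{same} parent set to every next-state variable, i.e. $\parents_{\gG_*}(\rs_j')=\parents_\gG(\rs_j')$ for all $\rs_j'\in\rvs'$. Consequently $\gG$ is a BCG (Definition \ref{def:bcg}) if and only if $\gG_*$ has no edge between two next-state variables, so it suffices to prove the latter for $\gG_*$.

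Next I would suppose, for contradiction, that $\gG_*$ does contain some edge between two next-state variables $\rs_i'$ and $\rs_j'$. Since these two vertices are then adjacent in the DAG $\gG_*$, the length-one path joining them has no intermediate vertex and therefore matches neither the chain/fork nor the collider pattern of the d-separation definition; hence it cannot be blocked by any conditioning set. In particular, taking the set $(\rvs,\rva)$, which is disjoint from $\{\rs_i',\rs_j'\}$, we obtain $\neg(\rs_i'\upmodels_{\gG_*}\rs_j'\mid(\rvs,\rva))$. On the other hand, Assumption \ref{assumption:independent_transition} gives $p(\rvs'|\rvs,\rva)=\prod_{k} p(\rs_k'|\rvs,\rva)$, which is exactly mutual conditional independence of the next-state variables given $(\rvs,\rva)$; marginalizing out the remaining next-state variables yields the pairwise identity $p(\rs_i',\rs_j'|\rvs,\rva)=p(\rs_i'|\rvs,\rva)\,p(\rs_j'|\rvs,\rva)$, that is $\rs_i'\upmodels_p\rs_j'\mid(\rvs,\rva)$. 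Faithfulness of $p$ to $\gG_*$ (Definition \ref{def:faithfulness}) then forces $\rs_i'\upmodels_{\gG_*}\rs_j'\mid(\rvs,\rva)$, contradicting the adjacency conclusion above. Hence $\gG_*$, and therefore $\gG$, has no lateral edge, so $\gG$ is a BCG.

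The conceptual crux, and the step I would write most carefully, is the interaction between faithfulness and adjacency: faithfulness only permits conditional independences that the graph can ``explain'' via d-separation, yet a direct edge can never be d-separated away, so the free conditional independence handed to us by the independent-transition assumption is simply incompatible with any edge inside $\rvs'$. The two routine points to verify are that the chosen conditioning set $(\rvs,\rva)$ is disjoint from the pair $\{\rs_i',\rs_j'\}$ (immediate, since they are next-state variables) and that the full factorization implies the pairwise conditional independence (a one-line marginalization, using that each omitted conditional marginal sums to one). I do not expect either to present a real obstacle.
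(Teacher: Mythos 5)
Your proof is correct and takes essentially the same route as the paper's: suppose a lateral edge between two next-state variables exists in the concretization, observe that adjacent vertices can never be d-separated by any conditioning set, and contradict this with the conditional independence supplied by Assumption~\ref{assumption:independent_transition} together with faithfulness. If anything, your write-up is slightly more careful than the paper's own: you condition on $(\rvs,\rva)$, which is what the independent-transition assumption actually licenses (the paper conditions only on $\rvs$), and you make explicit the reduction from $\gG$ to $\gG_*$ via the shared parent sets of next-state variables, which the paper leaves implicit.
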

\begin{proof}
    Assume $\gG_p$ is the concretization of $\gG$ under $p$ such that $p$ is faithful to $\gG$. If $\gG$ is not bipartite, there exist $j,k$ such that $\rs_j' \rightarrow \rs_k'$ in $\gG_p$. In this case, we have $(\rs_j' \nupmodels_{\gG} \rs_k' | \rvs)$. According to Assumption \ref{assumption:independent_transition}, we have $(\rs_j' \upmodels_{p} \rs_k' | \rvs)$. Therefore, we have that
    $$(\rs_j' \upmodels_{p} \rs_k' | \rvs) \not\Rightarrow
      (\rs_j' \upmodels_{\gG_p} \rs_k' | \rvs).$$
    That is,  $p$ is not faithful to $\gG_p$. Using reduction to absurdity, we prove that $\gG$ is a BCG.
\end{proof} 

Humans decompose the world into components based on independence. Therefore, it is rational to assume that state variables transit independently (Assumption \ref{assumption:independent_transition}), which brings many benefits: 1) The ground-truth causal graph is a BCG so that the complexity of causal discovery is reduced; 2) The ground-truth causal graph can be uniquely identified by conditional independent tests, and  3) The computation of CDM can be implemented in parallel using GPUs.

Instead of BCGs, we note that there exists research that considers learning arbitrary CGs for CDMs  \citep{zhu_offline_2022}, where the requirement of independent transition can be released. However, this kind of CDM can not be computed in parallel, and the procedure of causal discovery is much more complicated. Learning CGs is already very expensive even though we consider only BCGs. Therefore, we suggest that Assumption \ref{assumption:independent_transition} is vital to make causal discovery applicable in large-scale environments.

\subsection{Causal Discovery for Causal Dynamics Models} \label{appendix:causal_discovery_cdm}

The approach to identifying the CG representing the dynamics of the FMDP is already introduced in the paper's Theorem \ref{theorem:causal_discovery_mdp}. However, the expression of the theorem is rather vague. Given the above definitions, we now rewrite the theorem in a more rigorous way.

\begin{theorem}[Causal Discovery for FMDPs] \label{Theorem:causal_discovery_mdp}
    Consider that probability function $p$ is consistent (see Definition \ref{def:consistent_probability}) with the dynamics of an FMDP, where Assumption \ref{assumption:independent_transition} holds. Then, there exists a causal graph $\gG$ that represents the dynamics of the FMDP (see Definition \ref{def:represented_dynamics}). Assuming that $p$ is dynamically faithful to $\gG$ (see Definition \ref{def:dynamical_faithfulness}), we have
    \begin{enumerate}
        \item $\gG$ is a bipartite causal graph (see Definition \ref{def:bcg}),
        \item $\gG$ is the ground-truth causal graph (see Definition \ref{def:ground_truth_cg}) of the dynamics, and
        \item $\gG$ is uniquely identified by the rule that
        \begin{equation}
            \rx_i \in \parents(\rs_j') \Leftrightarrow (\rx_i \nupmodels_P \rs_j' \, |\, (\rvs,\rva) \setminus \{\rx_i\})
        \end{equation}
        for every $\rx_i \in (\rvs, \rva)$ and every $\rs_j' \in \rvs'$.
    \end{enumerate}
\end{theorem}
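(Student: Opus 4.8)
The plan is to dispatch the three claims in the order 1, 3, 2, because the conditional-independence characterisation in claim~3 is the engine that drives the ground-truth identification in claim~2. Existence of a representing $\gG$ is immediate from Theorem~\ref{theorem:existence_bcg}: under Assumption~\ref{assumption:independent_transition} the dynamics are matched by a CDM whose graph is a BCG, and such a graph represents the dynamics in the sense of Definition~\ref{def:represented_dynamics}. For claim~1 I would simply invoke Theorem~\ref{theorem:faithfulness_BCGs}: dynamical faithfulness of $p$ to $\gG$ together with Assumption~\ref{assumption:independent_transition} already forces $\gG$ to be bipartite, so nothing further is needed here.

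The core work is claim~3. By Definition~\ref{def:dynamical_faithfulness} there is a concretization $\gG_p$ of $\gG$ under $p$ to which $p$ is compatible and faithful, and by Theorem~\ref{theorem:concretization} this $\gG_p$ shares the parent set $\parents(\rs_j')$ with $\gG$ for every next-state variable, adds only edges inside $(\rvs,\rva)$, and has no backward edges. The forward direction is easy: if $\rx_i \in \parents(\rs_j')$ then $\rx_i \rightarrow \rs_j'$ is a direct edge of $\gG_p$, which no conditioning set can block, so $\rx_i \nupmodels_{\gG_p} \rs_j' \,|\, (\rvs,\rva)\setminus\{\rx_i\}$, and Theorem~\ref{theorem:faithfulness} transfers this to $\nupmodels_p$. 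For the converse I would argue by d-separation. Since $\gG$ is a BCG and $\gG_p$ has no backward and no lateral $\rvs'$-edges, $\rs_j'$ is a sink; hence every path from $\rx_i$ to $\rs_j'$ terminates in an edge $W \rightarrow \rs_j'$ with $W \in (\rvs,\rva)$. If $\rx_i \notin \parents(\rs_j')$ the path has length at least two and $W \neq \rx_i$, so $W \in (\rvs,\rva)\setminus\{\rx_i\}$; the triple centred at $W$ is a chain or a fork with $W$ as its middle node, and conditioning on $W$ blocks it. Thus every path is blocked, giving $\rx_i \upmodels_{\gG_p} \rs_j' \,|\, (\rvs,\rva)\setminus\{\rx_i\}$ and hence $\upmodels_p$ by Theorem~\ref{theorem:faithfulness}. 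This two-sided argument establishes the stated rule, and since a BCG is completely determined by the parent sets of $\rvs'$, these conditional independences pin down $\gG$ uniquely.

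For claim~2 I would appeal to Theorem~\ref{theorem:condition_ground_truth}: it suffices to show that whenever a consistent $p'$ is compatible with and faithful to some DAG $\gG_*$, that $\gG_*$ is a concretization of $\gG$ under $p'$. First, the same lateral-edge argument used above (the independencies $\rs_j' \upmodels_{p'} \rs_k' \,|\, (\rvs,\rva)$ forced by Assumption~\ref{assumption:independent_transition}, combined with faithfulness) shows $\gG_*$ carries no lateral $\rvs'$-edge, so it is bipartite on its next-state part and each $\rs_j'$ is again a sink. Re-running the d-separation argument inside $\gG_*$ then forces $\parents_{\gG_*}(\rs_j') = \{\rx_i : \rx_i \nupmodels_{p'} \rs_j' \,|\, (\rvs,\rva)\setminus\{\rx_i\}\}$. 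The decisive point is that these parent sets are invariant across all consistent probability functions, because Assumption~\ref{assumption:independent_transition} fixes $p'(\rs_j'\mid\rvs,\rva)$ to be the single transition function of the FMDP irrespective of the policy $p'(\rva\mid\rvs)$ or the state prior $p'(\rvs)$; hence they coincide with the parent sets of $\gG$ obtained in claim~3. Combining this with the backward-edge-free ordering guaranteed by Theorem~\ref{theorem:DAG_for_MDP} for the $(\rvs,\rva)$-part identifies $\gG_*$ as a concretization of $\gG$, which via Theorem~\ref{theorem:condition_ground_truth} yields the ground-truth property.

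I expect the main obstacle to be the converse half of claim~3 — making the path-blocking argument fully rigorous — together with the invariance step in claim~2. The delicate points are confirming that $\rs_j'$ is genuinely a sink in $\gG_p$ and in $\gG_*$ (so that the vertex adjacent to it on any path is a conditioned middle node of a chain or fork), and arguing carefully that the minimal sufficient parent set is a property of the transition function alone and does not shift when the ambient distribution over $(\rvs,\rva)$ is altered by a change of policy or state prior.
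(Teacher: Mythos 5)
Your handling of existence, claim~1, and claim~3 matches the paper's own proof: existence from the chain-rule/BCG construction (your appeal to Theorem~\ref{theorem:existence_bcg} is equivalent to the paper's direct decomposition), bipartiteness from Theorem~\ref{theorem:faithfulness_BCGs}, and the identification rule by the two-sided d-separation argument in the concretization $\gG_p$ --- indeed your converse direction (the last node $W$ before the sink $\rs_j'$ is a conditioned non-collider) correctly fleshes out what the paper states in one line. Since $\gG_p$ is a concretization by the dynamical-faithfulness hypothesis, the sink property there is guaranteed, so claim~3 is sound.

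The genuine gap is in claim~2, and it is precisely the step you flag but never resolve. You assert that the sets $\{\rx_i : \rx_i \nupmodels_{p'} \rs_j' \,|\, (\rvs,\rva)\setminus\{\rx_i\}\}$ are invariant across consistent probability functions ``because Assumption~\ref{assumption:independent_transition} fixes $p'(\rs_j'|\rvs,\rva)$.'' That justification is insufficient: the independence statement conditions on the \emph{proper} subset $\rvz = (\rvs,\rva)\setminus\{\rx_i\}$, and $p'(\rs_j'|\rvz)$ is obtained by averaging the shared transition function against $p'(\rx_i|\rvz)$, which \emph{does} change with the policy and the state prior. The paper closes exactly this hole with a short but decisive computation: if $\rx_i \upmodels_{p^*} \rs_j' \,|\, \rvz$ for some consistent $p^*$, then $p(\rs_j'|\rvz,\rx_i) = p^*(\rs_j'|\rvz,\rx_i) = p^*(\rs_j'|\rvz)$ is constant in the value of $\rx_i$, so integrating against $p(\rx_i|\rvz)$ yields $p(\rs_j'|\rvz) = p^*(\rs_j'|\rvz) = p(\rs_j'|\rvz,\rx_i)$, i.e.\ independence transfers to $p$; a reductio then forces $\parents^*(\rs_j') = \parents(\rs_j')$ and hence uniqueness. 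Without this marginalization argument, claim~2 is unproven. A secondary defect: by routing claim~2 through Theorem~\ref{theorem:condition_ground_truth} you quantify over \emph{arbitrary} DAGs $\gG_*$ that $p'$ is compatible with and faithful to, and for such graphs the sink property you need cannot be established --- compatibility plus faithfulness never rules out Markov-equivalent reorientations (for $\rs' = \rs + \epsilon$ the reversed graph $\rs' \rightarrow \rs$ is compatible and faithful but has a backward edge), and Theorem~\ref{theorem:DAG_for_MDP} only constructs one well-ordered DAG, it does not constrain $\gG_*$. The paper avoids this entirely by arguing directly from Definition~\ref{def:ground_truth_cg}, where the competing graph $\gG^*_{p^*}$ is a concretization by hypothesis, so its structural constraints (no backward or lateral edges) come for free.
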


\begin{proof}
    Since $p$ is consistent with the FMDP, then the transition function is $p(\rvs'|\rvs,\rva)$. Using the chain rule, we have
    $$p(\rvs'|\rvs,\rva) = \prod_{j=1}^{n_s} p(\rs_j'|\rvs,\rva,\rs_1',...,\rs_{j-1}').$$
    By defining $\parents(\rs_j')\subseteq (\rvs,\rva,\rs_1',...,\rs_{j-1}') $ as any subset such that
    $$p(\rs_j'|\rvs,\rva,\rs_1',...,\rs_{j-1}') = p(\rs_j'|\parents(\rs_j')).$$
    we have that $\gG$ represents the transition dynamics of the FMDP.

    We use $\gG_p$ to denote the concretization of $\gG$ under $p$. According to Theorem \ref{theorem:concretization}, $p$ is compatible with $\gG_p$. Having assumed that $p$ is dynamically faithful to $\gG$, we can further assume that $p$ is also faithful to $\gG_p$. According to Theorem \ref{theorem:faithfulness}, we have $$(\rvx\upmodels_{p}\rvy|\rvz) \Leftrightarrow (\rvx\upmodels_{\gG_p}\rvy|\rvz)$$
    for any disjoint variable groups $\rvx,\rvy,\rvz$ in $(\rvs,\rva,\rvs')$. In addition, we have that $\gG$ is a BCG according to Theorem \ref{theorem:faithfulness_BCGs}.

    Assume that $\rx_i$ is a variable  in $(\rvs,\rva)$. According to the definition of d-separation, if $\rx_i\in \parents(\rs_j')$, $\rx_i$ and $\rs_j'$ can not be d-separated by any group $\rvz$ of variables such that $\rx_i,\rs_j'\not\in \rvz$. Letting $\rvz = (\rvs,\rva)\setminus \{\rx_i\}$, we have
    $$\rx_i\in \parents(\rs_j') \Rightarrow (\rx_i \nupmodels_{\gG_p} \rs_j'|\rvz).$$

    Noticing that $\gG$ is a BCG and $\gG$ is a sub-graph of $\gG_p$ (according to \ref{theorem:concretization}), every path from $\rx_i$ to $\rs_j'$ in $\gG_p$ is blocked by $\rvz$ unless $\rx_i\in \parents(\rs_j')$. Therefore, we have
    $$\rx_i\not\in \parents(\rs_j') \Rightarrow (\rx_i \upmodels_{\gG_p} \rs_j'|\rvz).$$
    In other words, we have
    $$(\rx_i \nupmodels_{\gG_p} \rs_j'|\rvz) \Rightarrow \rx_i\in \parents(\rs_j').$$

    Combing the above conclusions, we prove that
    $$\rx_i\in \parents(\rs_j')
    \Leftrightarrow (\rx_i \nupmodels_{\gG_p} \rs_j'|(\rvs,\rva)\setminus \{\rx_i\})
    \Leftrightarrow (\rx_i \nupmodels_P \rs_j'|(\rvs,\rva)\setminus \{\rx_i\}).$$
    Therefore, we have that the causal graph $\gG$ representing the dynamics of the FMDP is uniquely identified using the above rule. 

    Now we consider replacing $p$ with any other probability $p^*$ such that $p^*$ is faithful to some concretization $\gG^*_{p^*}$. We use $\parents^*(\rs_j')$ to denote the parent set of $\rs_j' \in \rvs'$ in $\gG^*_{p^*}$.

    Consider that $\rx_i$ is a variable in $(\rvs,\rva)$ and define $\rvz := (\rvs,\rva)\setminus \{\rx_i\}$. If $\rx_i \in \parents(\rs_j')$ and $\rx_i \not\in \parents^*(\rs_j')$, using the above rule we have
    $$(\rx_i \nupmodels_{p} \rs_j'|\rvz) \wedge (\rx_i \upmodels_{p^*} \rs_j'|\rvz).$$
    In other words, we have
    \begin{align*}
        p(\rs_j'|\rvz) &\neq p(\rs_j'|\rvz,\rx_i), \\
        p^*(\rs_j'|\rvz) &= p^*(\rs_j'|\rvz,\rx_i). \\
    \end{align*}
    Noting that $(\rvz,\rx_i) = (\rvs,\rva)$, $p^*(\rs_j'|\rvz,\rx_i)$ is identical to $ p(\rs_j'|\rvz,\rx_i)$ for every $\rs_j' \in \rvs'$ as they are both given by the transition function of the FMDP. This leads to that
    $$ p(\rs_j'|\rvz) \neq p^*(\rs_j'|\rvz). $$
    However, we can also write that
    \begin{align*}
        p(\rs_j'|\rvz) &= \int_{x} p(\rs_j'|\rvz, \rx_i=x) p(\rx_i=x|\rvz) \\
        &= \int_{x} p^*(\rs_j'|\rvz, \rx_i=x) p(\rx_i=x|\rvz) \\
        &= \int_{x} p^*(\rs_j'|\rvz) p(\rx_i=x|\rvz) \\
        &= p^*(\rs_j'|\rvz) \int_{x} p(\rx_i=x|\rvz) \\
        &= p^*(\rs_j'|\rvz).
    \end{align*}
    From the above equations, we obtain the paradox that
    $$ p(\rs_j'|\rvz) = p^*(\rs_j'|\rvz). $$
    Using reduction to absurdity, $\rx_i \in \parents(\rs_j')$ implies that $\rx_i \in \parents^*(\rs_j')$. Similarly, we can prove the opposite direction of this implication. As a result, we have
    $$\rx_i \in \parents(\rs_j') \Leftrightarrow \rx_i \in \parents^*(\rs_j'),$$
    which shows that $\gG^* = \gG$. Therefore, we have proven that $\gG$ is the ground-truth causal graph.
    
\end{proof}

The following Theorem~\ref{theorem:robustness_noise} presents the robustness of the Theorem~\ref{Theorem:causal_discovery_mdp} when variables are not accurately observed. We show that Eq.~\ref{eq:causal_discovery_cdm} works well if variables are inflicted with any independent restorable perturbations (e.g., additive independent Gaussian noises). However, the robustness may not hold if the data is compressed or incomplete, and such circumstances should be further investigated in the future.

\begin{theorem} \label{theorem:robustness_noise}
    For simplicity, we use $\rvx = (\rx_1, \ldots, \rx_{n_s + n_a}) = (\rvs, \rva)$ to denote all environment variables. Assume that there exists independent noise on the observed variables:
    \begin{align}
        &\hat{\rx}_i = \rho_i(\rx_i, \epsilon_i), &X_i \in \rvx; \\
        &\hat{\rs}_j' = \rho_j'(\rs_j', \epsilon_j'), & \rs_j' \in \rvs'.
    \end{align}
    Here, $\rho_i$ and $\rho_j'$ are perturbation functions applied to the environment variables. Additionally, $\epsilon_i$ and $\epsilon_j'$ are independent noise variables. If \textbf{each perturbed variable $\hat{\rx}_i$  can be restored as $\rx_i$ given the value of $\epsilon_i$}, then
    \begin{equation}
        \rx_i \in \parents(\rs_j') \Leftrightarrow (\hat{\rx}_i \nupmodels_P \hat{\rs}_j' \, |\, (\hat{\rvs},\hat{\rva}) \setminus \{\hat{\rx}_i\}).
    \end{equation}
\end{theorem}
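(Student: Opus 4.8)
The plan is to reduce the statement to the noiseless identification result already established in Theorem~\ref{Theorem:causal_discovery_mdp}. That theorem gives $\rx_i \in \parents(\rs_j') \Leftrightarrow (\rx_i \nupmodels_P \rs_j'\mid \rvz)$ with $\rvz := (\rvs,\rva)\setminus\{\rx_i\}$, so writing $\hat{\rvz} := (\hat{\rvs},\hat{\rva})\setminus\{\hat{\rx}_i\}$ it suffices to prove that the perturbation preserves conditional (in)dependence, i.e.
\begin{equation}
(\rx_i \upmodels_P \rs_j' \mid \rvz) \Leftrightarrow (\hat{\rx}_i \upmodels_P \hat{\rs}_j' \mid \hat{\rvz}).
\end{equation}
First I would set up an augmented probability space in which the noises $\epsilon_i$, $\epsilon_j'$ and the noises $\bm{\epsilon}_{\rvz}$ attached to the conditioning variables in $\rvz$ appear as explicit exogenous roots. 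Since these noises are mutually independent and independent of $(\rvs,\rva,\rvs')$, the augmented density factorizes as $p(\rvs,\rva,\rvs')$ times the product of the noise marginals. Each perturbed variable is a deterministic function of its source variable and its own noise, and the restorability hypothesis (which I would also invoke for $\hat{\rs}_j'$) supplies inverse maps $\rx_k = g_k(\hat{\rx}_k,\epsilon_k)$.

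The robust half is that dependence is preserved, $\rx_i\in\parents(\rs_j')\Rightarrow \hat{\rx}_i\nupmodels_P\hat{\rs}_j'\mid\hat{\rvz}$. I would argue this by contraposition together with the function property of conditional independence: applying a measurable map to either side of a conditional-independence statement cannot destroy it. Conditioning additionally on all noises, restorability makes $\rx_i$, $\rs_j'$, and every variable of $\rvz$ deterministic functions of the perturbed variables and their noises, so a perturbed independence would pull back to $(\rx_i\upmodels_P\rs_j'\mid\rvz)$, contradicting $\rx_i\in\parents(\rs_j')$. The only subtlety is the bookkeeping of moving the (globally independent) noises into and out of the conditioning set, which I would handle via the explicit density factorization rather than generic graphoid axioms.

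The reverse half, that independence is preserved ($\rx_i\notin\parents(\rs_j')\Rightarrow\hat{\rx}_i\upmodels_P\hat{\rs}_j'\mid\hat{\rvz}$), is where I expect the real difficulty. The clean step is to establish the \emph{noise-augmented} independence $\hat{\rx}_i\upmodels_P\hat{\rs}_j'\mid(\hat{\rvz},\bm{\epsilon}_{\rvz})$: given $\bm{\epsilon}_{\rvz}$ the set $\hat{\rvz}$ is information-equivalent to $\rvz$ by restorability, and conditional on $\rvz$ the two perturbed variables are functions of the disjoint, mutually independent pairs $(\rx_i,\epsilon_i)$ and $(\rs_j',\epsilon_j')$, so the factorization yields independence directly. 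The hard part will be passing from this to the statement \emph{unconditional on} $\bm{\epsilon}_{\rvz}$, namely $\hat{\rx}_i\upmodels_P\hat{\rs}_j'\mid\hat{\rvz}$: one cannot freely drop variables from a conditioning set, and fixing $\hat{\rvz}=\rho(\rvz,\bm{\epsilon}_{\rvz})$ couples $\rvz$ and $\bm{\epsilon}_{\rvz}$. This is exactly the step where the restorability (rather than mere injectivity) of the perturbation must do its work, and the natural target is to show $(\hat{\rx}_i,\hat{\rs}_j')\upmodels_P\bm{\epsilon}_{\rvz}\mid\hat{\rvz}$ so that $\bm{\epsilon}_{\rvz}$ may be integrated out — in effect arguing that perturbing the conditioning variables acts as an information-preserving reparametrization that does not re-open a path $\rvz$ had blocked. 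Verifying this conditional independence of the conditioning-set noise from the tested pair is the point I would scrutinize most carefully before declaring the reduction to Theorem~\ref{Theorem:causal_discovery_mdp} complete.
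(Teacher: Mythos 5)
Your reduction and your ``clean step'' are correct: by restorability, $(\hat{\rvz}, \bm{\epsilon}_{\rvz})$ and $(\rvz, \bm{\epsilon}_{\rvz})$ generate the same information, and conditional on either, the pairs $(\rx_i, \epsilon_i)$ and $(\rs_j', \epsilon_j')$ are independent whenever $\rx_i \notin \parents(\rs_j')$, so $\hat{\rx}_i \upmodels_p \hat{\rs}_j' \,|\, (\hat{\rvz}, \bm{\epsilon}_{\rvz})$ does hold. But both of your remaining steps founder on the same rock: noise attached to the \emph{conditioning} variables cannot be moved in or out of the conditioning set. For the independence half, the lemma you single out, $(\hat{\rx}_i, \hat{\rs}_j') \upmodels_p \bm{\epsilon}_{\rvz} \,|\, \hat{\rvz}$, is false in general: given $\hat{\rvz}$, the value of $\bm{\epsilon}_{\rvz}$ pins down $\rvz \supseteq \parents(\rs_j')$, which predicts $\rs_j'$ and hence $\hat{\rs}_j'$; even the weaker condition that would suffice for your integration, $\hat{\rx}_i \upmodels_p \bm{\epsilon}_{\rvz} \,|\, \hat{\rvz}$, fails in general once $\rx_i$ and $\rvz$ are dependent. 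For the dependence half, your contraposition needs $(\hat{\rx}_i \upmodels_p \hat{\rs}_j' \,|\, \hat{\rvz}) \Rightarrow (\hat{\rx}_i \upmodels_p \hat{\rs}_j' \,|\, \hat{\rvz}, \bm{\epsilon})$, which is the same illegitimate enlargement of the conditioning set, and no density bookkeeping can repair it, because the implication you want is simply not a consequence of clean faithfulness plus restorability: in a linear-Gaussian model with restorable additive noise, writing $\rs_j' = a\rx_i + b\rz + \eta$, one can tune $b$ so that the clean distribution is faithful and has $\rx_i \nupmodels_p \rs_j' \,|\, \rvz$ while the partial covariance of $(\hat{\rx}_i, \hat{\rs}_j')$ given $\hat{\rvz}$ is exactly zero, i.e.\ $\hat{\rx}_i \upmodels_p \hat{\rs}_j' \,|\, \hat{\rvz}$. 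This half therefore requires a faithfulness assumption on the \emph{noise-augmented} joint; that assumption is the missing idea in your proposal, and it is exactly what the paper invokes.

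The paper's proof is purely graphical and never marginalizes over noises. It augments the joint with $\bm{\epsilon}$ and uses two factorizations: a forward DAG in which the noises are exogenous roots and $\hat{\rx}_k$ is a child of $(\rx_k, \epsilon_k)$, and a ``reverse'' DAG, asserted from restorability, in which the $\hat{\rx}_k$ are roots and each clean $\rx_k$ is a child of $(\hat{\rx}_k, \epsilon_k)$. Dependence follows by \emph{assuming} faithfulness of the augmented joint with respect to the forward DAG, where the path $\hat{\rx}_i \leftarrow \rx_i \to \rs_j' \to \hat{\rs}_j'$ is open given $\hat{\rvx}_{-i}$; independence follows from the Markov property of the reverse DAG, in which $\hat{\rvx}_{-i}$ d-separates $\hat{\rx}_i$ from $\hat{\rs}_j'$ once $\rx_i \notin \parents(\rs_j')$. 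You should also know that the difficulty you could not overcome is a real obstruction rather than a missed trick: the reverse factorization $p(\bm{\epsilon} \,|\, \hat{\rvx}) = \prod_k p(\epsilon_k \,|\, \hat{\rx}_k)$ that the paper writes down holds only if the clean variables $\rvx$ are mutually independent. When they are correlated, take $\rs_j' = \rx_2 + \eta$ with $c := \Cov(\rx_1, \rx_2) \neq 0$ and additive noise of variance $\sigma^2$ on each variable; then the partial covariance of $(\hat{\rx}_1, \hat{\rs}_j')$ given $\hat{\rx}_2$ equals $c\sigma^2/(\Var(\rx_2) + \sigma^2) \neq 0$, so the ``independence-preserved'' direction fails outright even though $\rx_1 \notin \parents(\rs_j')$. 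In the stated generality the step you were stuck on is not completable; closing it requires noise-free conditioning variables, mutually independent clean variables, or Markov/faithfulness conditions imposed directly on the noisy joint, as in the paper's argument.
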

\begin{proof}
    Since each $\rx_i$ can be restored from $\hat{\rx}_i$ using $\varepsilon_i$, two forms of decomposition hold:
    \begin{align}
        p(\rvx, \bm{\varepsilon}, \hat{\rvx}, \rvs')
        &=
        p(\rvx) \left(\prod_i p(\varepsilon_i) p(\hat{rx}_i|\rx_i, \varepsilon_i) \right) p(\rvs'|\rvx) \label{eq:decomposition_noised_1}\\
        &=
        p(\hat{\rvx}) \left(\prod_i p(\varepsilon_i|\hat{\rx}_i) p(rx_i|\hat{\rx}_i, \varepsilon_i) \right) p(\rvs'|\rvx) \label{eq:decomposition_noised_2}\\
    \end{align}
    Therefore, if $\rx_i \to \rs_j'$, then $\varepsilon_i \to \hat{\rx}_i \gets \rx_i \to \rs_j' \to \hat{\rs}_j$ and $(\varepsilon_i, \rx_i) \to \rx_i \to \rs_j' \to \hat{\rs}_j$ are two sub-graphs that comply with the Markov Compatibility. However, only Eq.~\ref{eq:decomposition_noised_1} satisfies the faithfulness assumption, as the unconditional independence between $\varepsilon_i$ and $\varepsilon_j'$ can not be derived from Eq.~\ref{eq:decomposition_noised_2}.

    Under the faithfulness assumption, the decomposition of Eq.~\ref{eq:decomposition_noised_1} implies that
    \begin{equation*}
        \rx_i \in \parents(\rs_j') \Rightarrow
        \left( \hat{\rx}_i \nupmodels_{\gG} \hat{\rs}_j' \middle| \hat{\rvx}_{-i} \right) 
        \Leftrightarrow 
        \left(\hat{\rx}_i \nupmodels_p \hat{\rs}_j' \middle| \hat{\rvx}_{-i} \right)
    \end{equation*}
    Meanwhile, in the decomposition of Eq.~\ref{eq:decomposition_noised_2}, $\hat{\rvx}_{-i}$ blocks all potential back-doors from $\hat{\rx}_i$ to $\hat{\rs}_j'$. Using Theorem~\ref{theorem:d-separation} we have that
    \begin{equation*}
        \rx_i \not\in \parents(\rs_j') \Rightarrow
        \left(\hat{\rx}_i \upmodels_p \hat{\rs}_j' \middle| \hat{\rvx}_{-i} \right)
    \end{equation*}
    Then, $\rx_i \in \parents(\rs_j') \Leftrightarrow \left(\hat{\rx}_i \nupmodels_p \hat{\rs}_j' \middle| \hat{\rvx}_{-i} \right)$ has been proved.
\end{proof}

\subsection{An Example of Causal Dynamics Model} \label{appendix:cdm_example}

Consider a simple kinetic system where variables include $\rt$ (time), $\rx$ (position), $\rv$ (velocity), and $\ra$ (acceleration), where $\ra$ is the action determined by the agent. Their dynamics are given in Figure~\ref{fig:cdm_example}. The dense dynamics model predicts the next-state variables using the entire input $(\rt, \rx, \rv, \ra)$. However, the CDM predicts the next-state variables using only causal parents. In Figure~\ref{fig:cdm_example}, we present the CDM with a ground-truth causal graph, and a dense dynamics model has a fully-connected structure.

\begin{figure}
    \centering
    \includegraphics[width=0.8\linewidth]{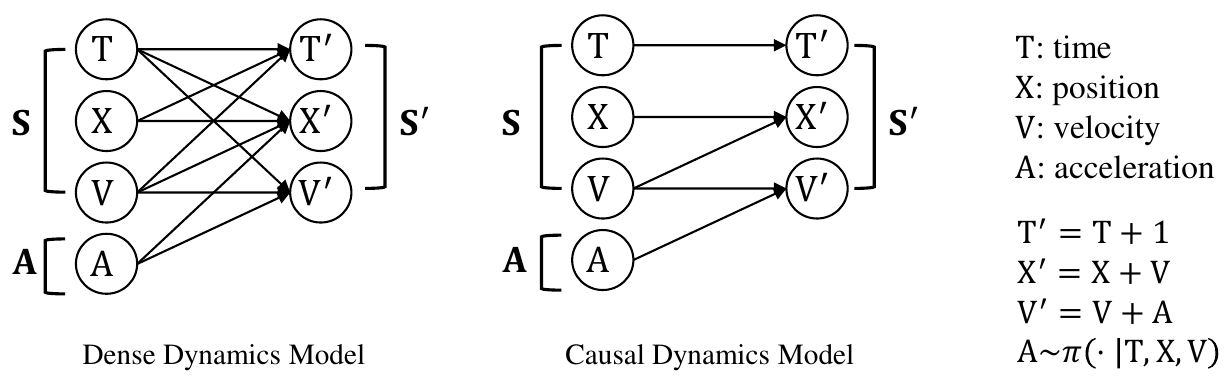}
    \caption{The dense dynamics model and causal dynamics model for a simple kinetic system.}
    \label{fig:cdm_example}
\end{figure}

Suppose that $\rx$, $\rv$, and $\rt$ all start with $0$. Assume that the agent approximately uses a deterministic policy:
\begin{equation*}
    \pi(\rx, \rv, \rt) \approx \left\{ \begin{array}{ll}
         0, & \rv = 1,  \\
         1, & \rv < 1,  \\
         -1, & \rv > 1. \\
    \end{array}
    \right.
\end{equation*}
Then $\rv$ is likely to be around $1$ except for the initial step. Then, a spurious correlation that $\rx' \approx \rt$ would emerge in the data. In a dense dynamics model, this spurious correlation will possibly be learned, leading to serious generalization errors when the agent changes its policy. However, the CDM does not have such a problem, as $\rt$ is not a parent of $\rx'$.

\section{The Theory of Object-Oriented MDPs}

In the paper's Section \ref{sec:oorl} we have introduced the concept of OOMDP, where variables are composed of the attributes of objects which are described by several classes. Now, we provide more information about OOMDPs, including rigorous definitions and the proof of the paper's Theorem \ref{theorem:cg_of_OOMDP}.

\subsection{Rigorous Definitions} \label{appendix:def:oomdp}

\begin{definition} [Class]
    A \textit{class}, usually denoted as $C$, is a tuple $\langle \mathcal{F}_s[C], \mathcal{F}_a[C], \bm{Dom}_C \rangle$. Here, $\mathcal{F}_s[C]$ and $\mathcal{F}_a[C]$ are disjoint sets of \textit{fields}, where $\mathcal{F}_s[C]$ is for \textit{state fields} and $\mathcal{F}_a[C]$ is for \textit{action fields}; the set of all fields $C$ is defined as $\mathcal{F}[C] = \mathcal{F}_a[C] \cup \mathcal{F}_s[C]$; Each \textit{field} in $\mathcal{F}[C]$ is a tuple like $\langle C, U \rangle$ (written as $C.U$ for short), where $C$ is exactly the class symbol $C$, and $U$ is the identifier of the field. $\bm{Dom}_C = \{Dom_{C.U}\}_{C.U \in \mathcal{F}[C]}$ gives the set of domains for each field.
\end{definition}

\begin{definition} [Instance and attributes]
    Consider that $\rvo \subseteq (\rvs, \rva)$ is a sub-group of variables at the current step of an FMDP, and that $C = \langle \sfields{C}, \afields{C}, \bm{Dom}_C \rangle$ is a class. If there exist:
    \begin{enumerate}
        \item a bijection $\beta^s: \sfields{C} \rightarrow \rvo\cap \rvs$ such that the domain of $\beta^s(C.U)$ is exactly $Dom_{C.U}$ for every state field $C.U\in \sfields{C}$,
        \item and a bijection $\beta^a: \afields{C} \rightarrow \rvo\cap \rva$ such that the domain of $\beta^a(C.U)$ is exactly $Dom_{C.U}$ for every action field $C.U\in = \afields{C}$,
    \end{enumerate}
    then we say that the FMDP contains an \textit{instance} $O$ (we use the corresponding, non-bold letter) of $C$, denoted as $O\in C$. Variables in $\rvo$ are called the \textit{attributes} of $O$, denoted by attribute symbols: $O.\ru := \beta^s(C.U)$ for every $C.U \in \sfields{C}$, or $O.\ru := \beta^a(C.U)$ for every $C.U \in \afields{C}$.
\end{definition}

\begin{definition} [Object-oriented decomposition for FMDP]
    Consider that $\mathcal{C}=\{C_1, \cdots, C_K\}$ is a set of classes. If $(\rvs, \rva)$ can be devided into $N$ sub-groups $(\rvo_1, \cdots, \rvo_N)$ and each $\rvo_i$ forms the attributes of an instance $O_i$ of some class in $\mathcal{C}$, we say the FMDP is \textit{decomposed} by $\mathcal{C}$ and call each instance $O_i$ as an \textit{object}. 
\end{definition}

With the paper's assumptions about the result symmetry and the causation symmetry, we finally give the definition of an OOMDP below.

\begin{definition}[Object-oriented FMDP]\label{def:oomdp}
    We say that an FMDP is an \textit{object-oriented FMDP} (OOMDP) on a set of classes $\mathcal{C}=\{C_1, \cdots, C_K\}$, if
    \begin{enumerate}
        \item the state variables transit independently (see Assumption \ref{assumption:independent_transition}),
        \item the FMDP is decomposed by $\mathcal{C}$, and
        \item Eqs. \ref{eq:result_symmetry} and \ref{eq:causation_symmetry} hold under this decomposition.
    \end{enumerate}
\end{definition}

\subsection{Causal Graph for OOMDP} \label{appendix:cg_OOMDP}

First, we prove that there always exists an OOCG to represent the dynamics of any OOMDP.

\begin{theorem}
    In an OOMDP, there always exists a causal graph $\mathcal{G}$ such that $\mathcal{G}$ represents the dynamics of the OOMDP (see Definition \ref{def:represented_dynamics}) and  $\mathcal{G}$ is an OOCG.
\end{theorem}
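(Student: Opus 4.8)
The plan is to exhibit one explicit graph, namely the fully-connected BCG, and verify the two requirements separately: that it represents the dynamics (Definition~\ref{def:represented_dynamics}) and that it is an OOCG (Definition~\ref{def:oocg}). Notice that the symmetry hypotheses (Eqs.~\ref{eq:result_symmetry} and~\ref{eq:causation_symmetry}) play \emph{no} role in mere existence; the only ingredients needed are the independent-transition Assumption~\ref{assumption:independent_transition} and the decomposition of the FMDP by the class set $\mathcal{C}$. The symmetries are reserved for the harder Theorem~\ref{theorem:cg_of_OOMDP}, which concerns the ground-truth graph rather than existence.

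First I would define $\mathcal{G}$ to be the bipartite graph in which every next-state attribute is assigned all current attributes as parents, i.e. $\parents_\mathcal{G}(O_i.\rv') = (\rvs, \rva)$ for each instance $O_i \in C_k$ and each state field $C_k.V \in \sfields{C_k}$, with no lateral edges among $\rvs'$. By construction $\mathcal{G}$ is a BCG in the sense of Definition~\ref{def:bcg}. To show it represents the dynamics, I would invoke Assumption~\ref{assumption:independent_transition}: every probability function $p^*$ consistent with the FMDP shares the transition function, which factorizes as $p^*(\rvs'|\rvs,\rva) = \prod_{j} p^*(\rs_j'|\rvs,\rva)$; and since $\parents_\mathcal{G}(\rs_j') = (\rvs,\rva)$ gives $p^*(\rs_j'|\parents_\mathcal{G}(\rs_j')) = p^*(\rs_j'|\rvs,\rva)$ trivially, the product decomposition in Definition~\ref{def:represented_dynamics} holds for \emph{every} consistent $p^*$. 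Equivalently, this is just the fully-connected instance of the matching BCG produced in Theorem~\ref{theorem:existence_bcg}. Hence $\mathcal{G}$ represents the dynamics.

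The only non-automatic step is to certify that $\mathcal{G}$ is an OOCG, i.e. that each of its edges is generated by a class-level causality. Here I would take the generating family to consist of \emph{all} local causalities $\localcaus{C_k}{U}{V}$ (over all classes $C_k$, all fields $C_k.U \in \fields{C_k}$, and all state fields $C_k.V \in \sfields{C_k}$) together with \emph{all} global causalities $\globalcaus{C_l}{U}{C_k}{V}$ (over all ordered class pairs and the corresponding fields). To see this family reproduces exactly the fully-connected edge set, fix a target $O_i.\rv'$ with $O_i \in C_k$ and split its incoming edges by source object: an edge from an attribute $O_i.\ru$ of $O_i$ itself (with $C_k.U \in \fields{C_k}$) is precisely the edge contributed by $\localcaus{C_k}{U}{V}$, while an edge from an attribute $O_j.\ru$ of a distinct object $O_j \in C_l$ with $j \neq i$ is precisely the edge contributed by $\globalcaus{C_l}{U}{C_k}{V}$. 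These two cases are disjoint and exhaustive and together enumerate all of $(\rvs,\rva)$, so the union of the chosen causalities yields exactly $\mathcal{G}$, which is therefore an OOCG.

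I expect the main obstacle to be purely bookkeeping rather than conceptual: one must attribute every within-object edge to a local causality and every between-object edge (including the same-class case $l = k$) to a global causality, ensuring the generating family neither omits nor doubly specifies any edge. The distinction between $\localcaus{C_k}{U}{V}$ and $\globalcaus{C_k}{U}{C_k}{V}$ flagged immediately after Definition~\ref{def:oocg} is exactly what makes this case split clean, and everything else follows directly from Assumption~\ref{assumption:independent_transition}.
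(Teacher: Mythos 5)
Your proposal is correct and follows essentially the same route as the paper's own proof: the paper also exhibits the fully-connected bipartite graph, factorizes $p(\rvs'|\rvs,\rva)$ over all next-state attributes via the independent-transition assumption, and notes that this full graph "will always represent the dynamics," treating its status as an OOCG as apparent. The only difference is that you spell out the bookkeeping (partitioning the incoming edges of each $O_i.\rv'$ into those generated by local causalities $\localcaus{C_k}{U}{V}$ and those generated by global causalities $\globalcaus{C_l}{U}{C_k}{V}$), which the paper leaves implicit.
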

\begin{proof}
    The proof is direct. Because variables transit independently, we have
    $$p(\rvs'|\rvs,\rva) = 
        \prod_{C\in \mathcal{C}} 
        \prod_{O\in C}
        \prod_{C.V \in \sfields{C}}
        p(O.\rv'|\rvs,\rva).
    $$
    Therefore, the full OOCG, where $\parents(O.\rv') = (\rvs, \rva)$ for each next-state attribute $O.\rv'$, will always represent the dynamics of the OOMDP.
\end{proof}

Now we prove the paper's Theorem \ref{theorem:cg_of_OOMDP} that the ground-truth causal graph (see Difinition* \ref{def:ground_truth_cg}) of an OOMDP is always an OOCG. 

\begin{proof}[Proof of the paper's Theorem \ref{theorem:cg_of_OOMDP}]
    Assume that $\mathcal{G}$ is the ground-truth causal graph of the OOMDP. Based on Assumption \ref{assumption:independent_transition} and Theorem \ref{Theorem:causal_discovery_mdp}, we have that $\mathcal{G}$ exists, is a bipartite causal graph (BCG), and can be uniquely identified. Consider any consistent probability function $p$ and a DAG $\mathcal{G}_p$ that $p$ is compatible with and faithful to. We know that this DAG is a concretization of $\mathcal{G}$ since $\mathcal{G}$ is the ground-truth causal graph. 

    Assume $O_a$ and $O_b$ are both instances of class $C$, and $C.U\in \fields{C}, C.V\in \sfields{C}$ are fields of $C$. According to Theorem \ref{theorem:faithfulness}, we have that $(O_a.\ru \upmodels_p O_a.\rv' | (\rvs, \rva) \setminus \{O_a.\ru\})$ if $O_a.\ru \not\in \parents(O_a.\rv')$. In other words, if $O_a.\ru \not\in \parents(O_a.\rv')$ we have
    $$ p(O_a.\rv'|\rvo_1,\cdots,\rvo_a,\cdots,\rvo_b, \cdots,\rvo_N) 
        = p(O_a.\rv'|\rvo_1,\cdots,\rvo_a^{-U},\cdots,\rvo_b, \cdots, \rvo_N), $$
    where $\rvo_a^{-U}$ denotes $\rvo_a \setminus \{O_a.\ru\}$.
    We define another consistent probability function $q$ such that
    \begin{gather*}
        q(\rvo_1, \cdots,\rvo_a = \bm{x},\cdots,\rvo_b = \bm{y}, \cdots, \rvo_N)
        := p(\rvo_1, \cdots,\rvo_a = \bm{y},\cdots,\rvo_b = \bm{x}, \cdots, \rvo_N), \\
        q(\rvs'|\rvs,\rva) := p(\rvs'|\rvs,\rva),
    \end{gather*}
    where $\bm{x}$ and $\bm{y}$ are vectors of values assigned to the objects' attributes (If $p = q$ we enforce $\bm{x} = \bm{y}$). We use $\bm{y}_{-U}$ to denote the vector where the value for the field $\field{C}{U} \in \fields{C}$ is missing, so that $\bm{y} = (\bm{y}_{-U}, y_U)$ where $y_U$ is the value for $\field{C}{U}$. Then, we have 
    \begin{align*}
        & q(O_b.\rv'|\rvo_1,\cdots,\rvo_a = \bm{x},\cdots,\rvo_b^{-U} = \bm{y}_{-U}, \cdots, \rvo_N) \\
        =& \int_{y_U} q(O_b.\rv'|\cdots,\rvo_a = \bm{x},\cdots,\rvo_b = \bm{y}, \cdots)
            q(O_b.\ru = y_U|\cdots,\rvo_a,\cdots,\rvo_b^{-U} = \bm{y}_{-U}, \cdots) \\
        =& \int_{y_U} p(O_b.\rv'|\cdots,\rvo_a = \bm{x},\cdots,\rvo_b = \bm{y}, \cdots)
            p(O_a.\ru = y_U|\cdots,\rvo_a^{-U} = \bm{y}_{-U},\cdots, \rvo_b = \bm{x}, \cdots). \\
    \end{align*}
   Using the result symmetry (the paper's \eqref{eq:result_symmetry}), we have (continuing from the above equations)
    \begin{align*}
        =&  \int_{y_U} p(O_a.\rv'|\cdots,\rvo_a = \bm{y},\cdots,\rvo_b = \bm{x}, \cdots)
            p(O_a.\ru = y_U|\cdots,\rvo_a^{-U} = \bm{y}_{-U},\cdots, \rvo_b = \bm{x}, \cdots) \\
        =& p(O_a.\rv'|\rvo_1,\cdots,\rvo_a = \bm{y},\cdots,\rvo_b = \bm{x}, \cdots, \rvo_N) \\
        =& q(O_a.\rv'|\rvo_1,\cdots,\rvo_a = \bm{y},\cdots,\rvo_b = \bm{x}, \cdots, \rvo_N).
    \end{align*}
    Using the result symmetry again, we have
    $$
    q(O_a.\rv'|\rvo_1,\cdots,\rvo_a = \bm{y},\cdots,\rvo_b = \bm{x}, \cdots, \rvo_N)
    = q(O_b.\rv'|\rvo_1,\cdots,\rvo_a = \bm{x},\cdots,\rvo_b = \bm{y}, \cdots, \rvo_N).
    $$
    Combining the above formulae, we have
    $$
    q(O_b.\rv'|\rvo_1,\cdots,\rvo_a = \bm{x},\cdots,\rvo_b^{-U} = \bm{y}_{-U}, \cdots, \rvo_N)
    = q(O_b.\rv'|\rvo_1,\cdots,\rvo_a = \bm{x},\cdots,\rvo_b = \bm{y}, \cdots, \rvo_N),
    $$
    which says $(O_b.\ru \upmodels_{q} O_b.\rv' \ |\  (\rvs, \rva) \setminus \{O_b.\ru\})$.

    Since $p$ is faithful to $\mathcal{G}_p$, it is easy to prove that there exists a concretization $\mathcal{G}_{q}$ that $q$ is faithful to. According to Theorem \ref{theorem:faithfulness}, we have $(O_b.\ru \upmodels_{\mathcal{G}_{q}} O_b.\rv' | (\rvs, \rva) \setminus \{O_b.\ru\})$. This leads to the corollary that $O_b.\ru \notin \parents(O_b.\rv')$.
    Therefore, we have proven that $O_a.\ru \notin \parents(O_a.\rv') \Rightarrow O_b.\ru \notin \parents(O_b.\rv')$. Similarly, we can prove that  $O_a.\ru \notin \parents(O_a.\rv') \Leftarrow O_b.\ru \notin \parents(O_b.\rv')$. As a result, it is obvious that
    \begin{equation} \label{eq*:proof:local_causality}
    O_a.\ru \in \parents(O_a.\rv') \Leftrightarrow O_b.\ru \in \parents(O_b.\rv')
    \end{equation}

    So far, we have proven the shared local causality in the CG. Now, we follow a similar methodology to prove the shared global causality (we will skip some of the similar details). Assume $O_a$ and $O_b$ are both instances of $C_k$; Assume $O_i$ and $O_j$ are both instances of $C$, where $\{i,j\}\cap \{p,q\} = \emptyset$.
    
    According to Theorem \ref{theorem:faithfulness}, we have that $(O_a.\ru \upmodels_p O_i.\rv' | (\rvs, \rva) \setminus \{O_a.\ru\})$ if $O_a.\ru \not\in \parents(O_i.\rv')$  In other words, if $O_a.\ru \not\in \parents(O_i.\rv')$ we have
    $$ p(O_i.\rv'|\rvo_a,\rvo_b,\rvo_i,\rvo_j,\cdots) 
        = p(O_i.\rv'|\rvo_a^{-U},\rvo_b,\rvo_i,\rvo_j,\cdots).$$
    We re-define probability function $q$ such that
    \begin{gather*}
        q(\rvo_a = \bm{x},\rvo_b = \bm{y},\cdots)
        := p(\rvo_a = \bm{y},\rvo_b = \bm{x},\cdots), \\
        q(\rvs'|\rvs,\rva) := p(\rvs'|\rvs,\rva). 
    \end{gather*}
    where $\bm{x}$, $\bm{y}$ are vectors of values assigned to the objects' attributes. We have
    \begin{align*}
        & q(O_i.\rv'|\rvo_a = \bm{x},\rvo_b^{-U} = \bm{y}_{-U},\cdots) \\
        =& \int_{y_U} q(O_i.\rv'|\rvo_a = \bm{x},\rvo_b = \bm{y},\cdots)
            q(O_b.\ru = y_U|\rvo_a = \bm{x},\rvo_b^{-U} = \bm{y}_{-U},\cdots) \\
        =& \int_{y_U} p(O_i.\rv'|\rvo_a = \bm{x},\rvo_b = \bm{y},\cdots)
            p(O_b.\ru = y_U|\rvo_a = \bm{y}_{-U},\rvo_b = \bm{x},\cdots). \\
    \end{align*}
    
    Using the causation symmetry (the paper's \eqref{eq:causation_symmetry}), we have (continuing the above equations)
    \begin{align*}
        =& \int_{y_U} p(O_i.\rv'|\rvo_a = \bm{y},\rvo_b = \bm{x},\cdots)
            p(O_b.\ru = y_U|\rvo_a = \bm{y}_{-U},\rvo_b = \bm{x},\cdots) \\
        =& p(O_i.\rv'|\rvo_a = \bm{y},\rvo_b = \bm{x},\cdots) \\
        =& q(O_i.\rv'|\rvo_a = \bm{y},\rvo_b = \bm{x},\cdots). \\
    \end{align*}
    Using the causation symmetry again, we obtain
    $$ q(O_i.\rv'|\rvo_a = \bm{x},\rvo_b^{-U} = \bm{y}_{-U},\cdots)
    =  q(O_i.\rv'|\rvo_a = \bm{x},\rvo_b = \bm{y},\cdots), $$
    which says $(O_b.\ru \upmodels_{q} O_i.\rv' | (\rvs, \rva) \setminus \{O_b.\ru\})$. This leads to that $O_b.\ru \not\in \parents(O_i.\rv')$. Therefore, we can prove that $O_a.\ru \not\in \parents(O_i.\rv') \Rightarrow O_b.\ru \not\in \parents(O_i.\rv')$. Similarly, we can easily prove the other direction, leading to that 
    $$O_a.\ru \not\in \parents(O_i.\rv') \Leftrightarrow O_b.\ru \not\in \parents(O_i.\rv'). $$
    
    Using the result symmetry (the paper's \eqref{eq:result_symmetry}), it is easy to get that
    \begin{align*}
        & q(O_j.\rv'|\rvo_a = \bm{x},\rvo_b^{-U} = \bm{y}_{-U},\rvo_i = \bm{z},\rvo_j = \bm{w},\cdots) \\
        =& q(O_i.\rv'|\rvo_a = \bm{x},\rvo_b^{-U} = \bm{y}_{-U}, \rvo_i = \bm{w},\rvo_j = \bm{z}, \cdots) \\
        =& q(O_i.\rv'|\rvo_a = \bm{x},\rvo_b = \bm{y}, \rvo_i = \bm{w},\rvo_j = \bm{z}, \cdots) \\
        =& q(O_j.\rv'|\rvo_a = \bm{x},\rvo_b = \bm{y},\rvo_i = \bm{z},\rvo_j = \bm{w},\cdots).
    \end{align*}
    which says $(O_b.\ru \upmodels_{q} O_j.\rv' | (\rvs, \rva) \setminus \{O_b.\ru\})$. This leads to that $O_b.\ru \not\in \parents(O_j.\rv')$. Combining with the conclusion that we have just drawn, we have
    $O_a.\ru \not\in \parents(O_i.\rv') \Rightarrow O_b.\ru \not\in \parents(O_j.\rv') \Rightarrow O_a.\ru \not\in \parents(O_j.\rv')$,
    and the other direction is proven similarly.
    
    Finally, we obtain that 
    \begin{equation} \label{eq*:proof:global_causality}
    O_a.\ru \not\in \parents(O_i.\rv') \Leftrightarrow O_b.\ru \not\in \parents(O_i.\rv') \Leftrightarrow O_a.\ru \not\in \parents(O_j.\rv') \Leftrightarrow O_b.\ru \not\in \parents(O_j.\rv').
    \end{equation}
    
    Eqs. \ref{eq*:proof:local_causality} and \ref{eq*:proof:global_causality} together indicate that the causal graph is an OOCG, according to Definition \ref{def:oocg}.
\end{proof}

\subsection{Object-Oriented Causal Discovery} \label{appendix:oo_causal_discovery}

In the main paper, we suggest using CMI for CITs, as it allows for varying numbers of instances and integrates causal discovery with model learning. The following Theorem \ref{Theorem:causal_discovery_oomdp} describes how class-level causalities can be identified using CITs, providing the theoretic basis of our causal discovery. In Eqs. \ref{eq:local_cit} and \ref{eq:global_cit}, the independence relationships in the right can be jointly tested through only one CIT, by merging the data of all concerned objects. We also note that CIT tools other than CMI are also applicable if the environment has a fixed number of instances for each class. 

\begin{theorem} [Causal discovery for OOMDPs] \label{Theorem:causal_discovery_oomdp} If the ground-truth CG $\mathcal{G}$ of an OOMDP is an OOCG, it is uniquely identified under any probability function $p$ that is dynamically faithful to $\mathcal{G}$ according to the following rules:
\begin{align}
    \localcaus{C}{U}{V} &\Longleftrightarrow \forall O\in C \big(
        O.\ru \nupmodels_p O.\rv' \ \big|\ (\rvs, \rva) \setminus \{O.\ru\}
    \big), \label{eq:local_cit} \\
    \globalcaus{C_k}{U}{C}{V} &\Longleftrightarrow \forall O_j \in C \big(
        \rvu_{C_k.U\,|\,j} \nupmodels_p O_j.\rv'
        \ \big|\ 
        \rvu_{- C_k.U\,|\,j}
    \big), \label{eq:global_cit}
\end{align}
where $\rvu_{C_k.U\,|\,j} := \{O_r.\ru\,|\,O_r\in C_k, r \neq j\}$ and $ \rvu_{- C_k.U\,|\,j} := (\rvs, \rva) \setminus \rvu_{C_k.U\,|\,j}$.
\end{theorem}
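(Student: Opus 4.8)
The plan is to lift the per-variable identification of Theorem~\ref{Theorem:causal_discovery_mdp} to the class level, using the all-or-nothing structure of an OOCG guaranteed by Theorem~\ref{theorem:cg_of_OOMDP}. I fix a concretization $\mathcal{G}_p$ of $\mathcal{G}$ under the dynamically faithful $p$ (Theorem~\ref{theorem:concretization}); because the OOMDP satisfies independent transition, $\mathcal{G}$ is bipartite, so $\mathcal{G}_p$ inherits the next-state parent sets of $\mathcal{G}$, each $O.\rv'$ is a sink whose parents lie in $(\rvs,\rva)$, and there are no lateral edges among $\rvs'$. By Theorem~\ref{theorem:faithfulness} I may translate freely between $p$-independence and $\mathcal{G}_p$-d-separation, so every claim becomes a statement about blocked paths. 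The local rule (Eq.~\ref{eq:local_cit}) is then immediate: by Definition~\ref{def:oocg}, $\localcaus{C}{U}{V}$ just means $O.\ru\in\parents(O.\rv')$ for every $O\in C$, and an edge inside a single object can only come from a local causality, so applying the per-variable equivalence of Theorem~\ref{Theorem:causal_discovery_mdp} with $\rx_i=O.\ru$, $\rs_j'=O.\rv'$ to each object gives the biconditional.

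The global rule (Eq.~\ref{eq:global_cit}) is the substantive part, since it uses one joint test of the whole group $\rvu_{C_k.U\,|\,j}$ instead of a test per object. First I would establish that this joint test detects precisely whether \emph{some} member of the group is a parent of $O_j.\rv'$. The forward implication is direct: if $O_r.\ru\to O_j.\rv'$ for some $O_r\in C_k$ with $r\neq j$, this single edge is a path that the conditioning set $\rvu_{-C_k.U\,|\,j}$ cannot block, since that set excludes $O_r.\ru$; hence d-separation fails. For the converse, assume no member of the group is a parent of $O_j.\rv'$. On any path from a group member to $O_j.\rv'$, the last edge must enter the sink $O_j.\rv'$, so its penultimate node $\rw$ is a parent of $O_j.\rv'$ lying in $(\rvs,\rva)$; as $\rw$ is not a group member it belongs to the conditioning set, and since $\rw\to O_j.\rv'$ makes $\rw$ a chain or fork node (never a collider) on the path, conditioning on $\rw$ blocks it. Thus every such path is blocked, the group is d-separated from $O_j.\rv'$, and by faithfulness it is independent.

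It then remains to upgrade ``some member is a parent'' to the class-level ``every member is a parent'' that defines $\globalcaus{C_k}{U}{C}{V}$. This is exactly where causation symmetry (Assumption~\ref{assumption:causation_symmetry}) is used: via Theorem~\ref{theorem:cg_of_OOMDP} it forces $\mathcal{G}$ to be an OOCG, so by Definition~\ref{def:oocg} the edges from $C_k.U$ into $C.V$ are present for all admissible instance pairs or for none. Combining this dichotomy with the joint-test analysis turns ``$\forall O_j\,\exists O_r$'' into ``$\forall O_j\,\forall O_r$'', which is Eq.~\ref{eq:global_cit}; uniqueness follows because an OOCG is completely determined by the local and global causalities it contains, each of which the two rules decide from $p$. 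I expect the main obstacle to be the converse d-separation argument for the \emph{group} test: one must check that blocking holds simultaneously for all paths emanating from all members of $\rvu_{C_k.U\,|\,j}$, which hinges on the bipartite/sink structure (guaranteeing the penultimate node is always a conditioned non-collider) together with the all-or-nothing property that causation symmetry supplies.
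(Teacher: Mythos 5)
Your proposal is correct and follows essentially the same route as the paper's proof: the local rule via the per-variable test of Theorem~\ref{Theorem:causal_discovery_mdp} applied to each instance, and the global rule by reducing the joint CIT to a graph-level d-separation statement, transferring it to $p$ via faithfulness, and using the all-or-nothing edge structure of an OOCG (Definition~\ref{def:oocg}) to pass between ``some member is a parent'' and the class-level causality. The only difference is one of detail, not of method: where the paper asserts the d-separation equivalence as a direct consequence of the OOCG structure, you explicitly verify it with the path-blocking argument (a direct edge is unblockable; otherwise the penultimate node is a conditioned non-collider parent), which is exactly the justification the paper leaves implicit.
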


\begin{proof}
    Using Theorem \ref{Theorem:causal_discovery_mdp} and the paper's Definition \ref{def:oocg}, it is obvious that
    \begin{equation*}\begin{aligned}
    & \localcaus{C}{U}{V}
    \Leftrightarrow \forall O\in C (O.\ru \in \parents(O.\rv'))
    \\&\qquad
    \Leftrightarrow \forall O\in C \big(
        O.\ru \nupmodels_p O.\rv' \ \big|\ (\rvs, \rva) \setminus \{O.\ru\}
    \big).
    \end{aligned}\end{equation*}

    From the paper's Definition \ref{def:oocg} we have
    $$\globalcaus{C_k}{U}{C}{V}
    \Leftrightarrow
    \forall O_r \in C_k \forall O_j \in C (r = j \vee O_r.\ru \in \parents(O_j.\rv')).
    $$
    From the paper's Theorem \ref{theorem:cg_of_OOMDP}, we know that $\mathcal{G}$ is an OOCG, which guarantees d-separations in $\mathcal{G}$:
    \begin{equation*}
    \forall O_r \in C_k \forall O_j \in C (r = j \vee O_r.\ru \in \parents(O_j.\rv'))
    \Longleftrightarrow
    \forall O_j \in C \big(
        \rvu_{C_k.U\,|\,j} \nupmodels_{\mathcal{G}} O_j.\rv'
        \ \big|\ 
        \rvu_{- C_k.U\,|\,j}
    \big).
    \end{equation*}
    Using Theorem \ref{theorem:faithfulness} then we have
    \begin{equation*}\begin{aligned}
        &\forall O_j \in C \big(
            \rvu_{C_k.U\,|\,j} \nupmodels_{\gG} O_j.\rv'
            \ \big|\ 
            \rvu_{- C_k.U\,|\,j}
        \big) \\
        &\qquad \Longleftrightarrow
        \forall O_j \in C \big(
            \rvu_{C_k.U\,|\,j} \nupmodels_{p} O_j.\rv'
            \ \big|\ 
            \rvu_{- C_k.U\,|\,j}
        \big).
    \end{aligned}\end{equation*}

    That is, we have
    $$\globalcaus{C_k}{U}{C}{V}
    \Leftrightarrow
    \forall O \in C \big(
        \rvu_{C_k.U\,|\,j} \nupmodels_p O.\rv'
        \ \big|\ 
        \rvu_{- C_k.U\,|\,j}
    \big).
    $$
\end{proof}

\subsection{Ensuring the Result and Causation Symmetries} \label{appendix:asym_oomdp}

Result symmetry (Eq. \ref{eq:result_symmetry}) and causation symmetry (Eq. \ref{eq:causation_symmetry}) may be too strong to hold true in some cases. In an \textit{asymmetric environment} (where one of the symmetries does not hold), the ground-truth causal graph may not be an OOCG, and some objects might possess their unique causal connections and dynamics, which may greatly compromise the performance of OOCDMs that strictly comply with these symmetries.

However, the dynamics cannot be modeled symmetrically typically because the attributes of objects provide insufficient information to do so. The following theorem indicates that both result symmetry and causation symmetry can be guaranteed by adding additional state attributes for the objects.

\def\extM{{\widetilde{\mathcal{M}}}}
\def\extO{{\widetilde{O}}}
\def\extC{{\widetilde{C}}}
\def\extrvs{{\widetilde{\rvs}}}
\def\extrva{{\widetilde{\rva}}}

\def\idxincls{{c}}

\begin{theorem} \label{theorem:ensure_symmetry}
    Assume $\mathcal{M}$ is an OOMDP where the classes are $\{C_1, \cdots, C_K\}$, where the result symmetry (Eq. \ref{eq:result_symmetry}) and causation symmetry (Eq. \ref{eq:result_symmetry}) may be violated. There always exist an extended OOMDP $\extM$ such that the following statements hold ($\widetilde{*}$ denotes the corresponding item in $\extM$):
    \begin{enumerate}
        \item $\sfields{C_k} \subseteq \sfields{\extC_k}$ and $\afields{C_k} = \afields{\extC_k}$ for $k = 1,\cdots,K$.
        \item There exist $\Phi$ that maps $(\extrvs, \extrva)$ into $(\rvs, \rva)$ and $\Psi$ that maps $\widetilde{\rvs}'$ into $\rvs'$.
        \item $\extM$ mirrors the dynamics of $\mathcal{M}$. In other words, $\tilde{p}(\widetilde{\rvs}'|\extrvs, \extrva) = p(\Psi(\widetilde{\rvs}')|\Phi(\extrvs, \extrva))$.
        \item Result symmetry and causation symmetry both hold in $\extM$.
    \end{enumerate}
\end{theorem}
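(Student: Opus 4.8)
The plan is to restore both symmetries by equipping every object with a \emph{static identity attribute} that tags it uniquely, and then to re-express the (possibly asymmetric) original dynamics through a single class-level function that reads these tags. This realizes the ``index-as-attribute'' idea mentioned in Section~\ref{sec:release_sym}. Concretely, for each class $C_k$ I would form $\extC_k$ by adjoining one new state field to $\sfields{C_k}$ --- an \emph{identity} field whose domain is large enough to index all instances of $C_k$ --- while leaving the action fields untouched, so that $\sfields{C_k}\subseteq\sfields{\extC_k}$ and $\afields{C_k}=\afields{\extC_k}$ hold by construction (property~1). The maps $\Phi$ and $\Psi$ are then the coordinate projections that delete the identity coordinates from $(\extrvs,\extrva)$ and from $\extrvs'$ respectively, giving property~2 for free.

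The heart of the argument is the definition of $\tilde{p}$. I would initialize the identity attributes to be pairwise distinct within each class and make them \emph{deterministically constant} in time, so that at every step the identities furnish a bijection between objects and a canonical labelling. Under Assumption~\ref{assumption:independent_transition} the original transition factorizes as $p(\rvs'|\rvs,\rva)=\prod_{i} p(O_i.\rvs'|\rvs,\rva)$, where each factor is an arbitrary, object-specific function. For a target object carrying identity value $d$, I define its extended class-level transition to first \emph{reconstruct} the full original configuration $(\rvs,\rva)$ from the identity-tagged bundles $\{(\rvo_j,d_j)\}_j$ (the tags recover the canonical order uniquely), then return $p(O_d.\rvs'|\rvs,\rva)$, copying the identity coordinate forward deterministically. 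By design this single function, applied with the target's own identity read off from its attributes, reproduces every object's original dynamics, which yields property~3: on the support where identities are preserved the identity factors equal $1$ and $\tilde{p}(\extrvs'|\extrvs,\extrva)=p(\Psi(\extrvs')|\Phi(\extrvs,\extrva))$.

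Verifying property~4 is where the construction earns its keep. Result symmetry (Eq.~\ref{eq:result_symmetry}) holds because the transition of every instance of $\extC_k$ is literally the \emph{same} function of $(\widetilde{\rvo}_i;\{\widetilde{\rvo}_j\}_{j\ne i})$: the per-object differences are no longer baked into distinct functions but are selected at evaluation time through the identity argument now sitting inside $\widetilde{\rvo}_i$. Causation symmetry (Eq.~\ref{eq:causation_symmetry}) is the decisive point: swapping two same-class objects $\extO_a,\extO_b$ exchanges their \emph{entire} bundles, identity included, so the unordered collection $\{(\rvo_j,d_j)\}$ feeding the reconstruction step is unchanged; hence the reconstructed $(\rvs,\rva)$, and therefore the target's transition, is invariant. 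The main obstacle I anticipate is making the reconstruction map well-defined and symmetric simultaneously: it must depend only on the multiset of tagged bundles (for causation symmetry) yet recover an ordered configuration (to invoke the asymmetric $p$), which works precisely because distinct static identities make the multiset-to-configuration correspondence a bijection. I would also check the routine bookkeeping that $\extM$ remains a legitimate OOMDP under Definition~\ref{def:oomdp}, since the identity field only adds an independent, deterministic coordinate and so preserves both decomposability and independent transition.
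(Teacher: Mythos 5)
Your proposal is correct and matches the paper's own proof in every essential respect: the paper likewise adjoins a unique, time-invariant identity state field $C_k.Id$ to each class, keeps action fields untouched, defines the class-level transition by using the tags to look up (via the index map $\phi(k,c)$) the corresponding original object's dynamics, and establishes causation symmetry by exactly your key observation that swapping two same-class bundles, identities included, leaves the tag-based reconstruction of $(\rvs,\rva)$ unchanged. The only cosmetic difference is that the paper builds the tag-based reordering into the maps $\Phi$ and $\Psi$ themselves rather than into the transition function, which lets property~3 hold without your restriction to canonically-labelled states.
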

\begin{proof}
Let $N_k$ denote the number of instances of $C_k$. Then, we define $\sfields{\extC_k} = \sfields{C_k} \cup \{C_k.Id\}$, where $Dom_{C_k.Id} = \{1, 2, \cdots, N_k\}$. The distribution of the start state in $\extM$ ensures that each instance $\extO$ has a unique $\extO.\attr{Id}$ among all instances of $\extC_k$.

We define $\phi(k, \idxincls)$ as the function that finds the index $i \in \{1,2,\cdots,N\}$ such that $\extO_{\phi(k, \idxincls)}$ is an instance of $\extC_k$ and $\extO_{\phi(k, \idxincls)}.\attr{Id} = \idxincls$. In other words, $\phi(k, \idxincls)$ outputs the overall index of the $\idxincls$-th instance of $\extC_k$ in $\extM$. Meanwhile, we use $RemoveId(\cdot)$ to remove all variables like $\widetilde{O}.\attr{Id}$ in the given variables.

In $\mathcal{M}$, we assume $O_i$ is the $c_i$-th instance of class $C_{k_i}$. Then we may define $\Phi$ and $\Psi$ as:
\begin{equation*}
    (\rvs, \rva) = \Phi(\extrvs, \extrva) := \left(
        RemoveId(\widetilde{\rvo}_{\phi(k_1, c_1)}),
        \cdots,
        RemoveId(\widetilde{\rvo}_{\phi(k_N, c_N)})
    \right)
\end{equation*}
\begin{equation*}
    \rvs' = \Psi(\widetilde{\rvs}') := \left(
        RemoveId(\widetilde{O}_{\phi(k_1, c_1)}.\rvs'),
        \cdots,
        RemoveId(\widetilde{O}_{\phi(k_N, c_N)}.\rvs')
    \right)
\end{equation*}

Then we directly define the dynamics of $\extM$ by
\begin{equation*}
    \widetilde{O}_i.\attr{Id}' \equiv \widetilde{O}_i.\attr{Id}, \qquad i=1,\cdots,N.
\end{equation*}
\begin{equation*}
    \tilde{p} \left( RemoveId(\widetilde{\rvs}') |\extrvs, \extrva) := p(\Psi(\widetilde{\rvs}')|\Phi(\extrvs, \extrva) \right),
\end{equation*}
Then we will have $\tilde{p}(\widetilde{\rvs}' |\extrvs, \extrva) := p(\Psi(\widetilde{\rvs}')|\Phi(\extrvs, \extrva))$.

Therefore, assuming $\widetilde{O}_i \in \extC_k$, we have
\begin{equation*}
    \tilde{p}(\widetilde{O}_i.\rvs'|\extrvs, \extrva) = \tilde{p}_{\extC_k}(\widetilde{O}_i.\rvs'|\extrvs, \extrva)
    := p \left( O_{\phi(k, \tilde{O}.\attr{Id})}.\rvs'|\Phi(\extrvs, \extrva) \right).
\end{equation*}
Therefore, $\extM$ satisfies the result symmetry.

Moreover, it is easy to prove that $\extM$ satisfies the causation symmetry. Assuming $\widetilde{O}_x, \widetilde{O}_y \in \extC_l$, swapping their attributes (which include $\widetilde{O}_x.\attr{Id}$ and $\widetilde{O}_x.\attr{Id}$) does not affect $\phi(l, c)$ for any $c\in \{1,\cdots,N_l\}$. Therefore, it does not affect the result of $\Phi(\extrvs, \extrva)$ and $\Psi(\widetilde{\rvs})$. Eventually, swapping the attributes of $\widetilde{O}_x$ and $\widetilde{O}_y$ has no influence on $\tilde{p}(\widetilde{O}_i.\rvs'|\extrvs, \extrva)$.
\end{proof}

The proof provides an easy way to ensure the result and causation symmetries -- the OOMDP can simply include an \textit{identity attribute} $O.\attr{Id}$ which gives the unique index of the object among all instances of its class. This is always plausible since no additional feature must be observed. Meanwhile, these identity attributes are fixed throughout an episode, and thus we do not need to learn their predictors in the implementation of OOCDM.

\section{Details of Implementation}
\label{appendix:implementation}

\subsection{Structure of Object-Oriented Causal Dynamics Models} \label{appendix:oocdm_structure}

In an OOMDP, each attribute (variable) may contain one or several scalars. To handle the heterogeneous nature of different attributes, the OOCDM uses an \textit{attribute encoder} $\mathrm{AttrEnc}_{C.U}: Dom_{C.U} \rightarrow \mathbbm{R}^{d_{e}}$ for each field $C.U\in \mathcal{F} = \bigcup_{C\in \mathcal{C}} \mathcal{F}[C]$. It maps the attribute $O.\ru$ of every instance $O\in C$ to a $d_e$ dimensional \textit{attribute-encoding vector}. All attribute encoders are implemented by a multi-layer perceptron where we use ReLU as the activation function.

Consider that $f_{C.V}$ is the predictor for the state field $C.V \in \mathcal{F}_s$ in an OOCDM. To compute $f_{C.V}( O_j.\rv' | \rvo_j; \rvu_{-O_j}; \gG)$ for any $O_j\in C$, we first use the above encoders to encode all observed variables. Assume that the value of the attribute $O_i.\ru$ of an object $O_i \in \mathcal{O}$ is observed to be $O_i.u$ (the corresponding lower-case letter is used) and the class of $O_i$ is $C_k$, Then, this attribute is encoded into the attribute-encoding vector denoted as:
$$O_i.\bm{u} := \mathrm{AttrEnc}_{C_k.U}(O_i.u) \in \mathbbm{R}^{d_{e}}, \quad U \in \mathcal{F}[C_k].$$
We now mask off the encoding vector if the attribute is not a parent variable for $O_j.\rv'$ based on the OOCG $\gG$. That is, we define the \textit{masked attribute-encoding vector} of attribute $O_i.\ru$ for $O_j.\rv'$ as:
$$
[O_i.\bm{u}]_{O_j.\rv'} := \left\{ \begin{array}{ll}
     \bm{0}, &  \textrm{if } j \neq i \textrm{ and } \globalcaus{C_k}{U}{C}{V} \\
     \bm{0}, &  \textrm{if } j = i \textrm{ and } \localcaus{C}{U}{V} \\
     O_i.\bm{u}, & \textrm{otherwise.}
\end{array} \right.
$$
We concatenate all masked attribute-encoding vectors of $O_i$, and then we obtain a $(|\mathcal{F}[C_k]|d_e)$-dimensional vector called the \textit{object-encoding vector} of $O_i$, denoted as ${\vx_i}$:
$$
\vx_i = Concat\left([O_i.\bm{u}]_{O_j.\rv'}\ \for C_k.U \in \mathcal{F}[C_k]\right).
$$

Then, we apply a \textit{query encoder}, denoted as $\mathrm{QEnc}_{C.V}$ that maps $\vx_j$ to the \textit{query vector} $\bm{q}$:
$$\bm{q} = \mathrm{QEnc}_{C.V}(\vx_j) \in \mathbbm{R}^{d_k}.$$
For every other object $O_i$ such that $j \neq i$ (we denote the class of $O_i$ as $C_k$), we apply a \textit{key encoder} $\mathrm{KEnc}_{C_k \rightarrow C.V}$ and a \textit{value encoder} $\mathrm{VEnc}_{C_k \rightarrow C.V}$ that respectively map $\vx_i$ to a key-vector $\bm{k}_i$ and a value-vector $\bm{v}_i$:
\begin{gather*}
    \bm{k}_i = \mathrm{KEnc}_{C_k \rightarrow C.V}(\vx_i) \in \mathbbm{R}^{d_k}, \\
    \bm{v}_i = \mathrm{VEnc}_{C_k \rightarrow C.V}(\vx_i) \in \mathbbm{R}^{d_v}.
\end{gather*}
Then, we perform the key-value attention \citep{vaswani_attention_2017}:
\begin{gather*}
    \alpha_i = \frac{
        \exp{(\bm{q}^T \bm{k}_i  / \sqrt{d_k})} 
    }{
        \sum_{r \neq i} \exp{(\bm{q}^T \bm{k}_r / \sqrt{d_k})} 
    },\\
    \bm{h} := (\bm{q}, \sum_{j \neq i} \alpha_i\bm{v}_i) \in \mathbbm{R}^{d_k + d_v},
\end{gather*}
where $\bm{h}$ is called the \textit{distribution embedding} of $O_j.\rv'$.

Finally, we use a distribution decoder $\mathrm{Dec_{C.V}}$ to map $\bm{h}$ into the distribution of $\hat{p}(O_j.\rv'|\parents(O_j.\rv'))$. If $Dom_{C.V}$ is continuous, it outputs the mean and standard variance of a normal distribution:
$$(\mu, \sigma) = Dec_{C.V}(\bm{h});\quad p(O_j.\rv'|\parents(O_j.\rv')) \sim \mathcal{N}(\mu, \sigma).$$
If $Dom_{C.V}$ is discrete (we assume that $Dom_{C.V}$ has $m$ elements), then $\mathrm{Dec_{C.V}}$ outputs the probability of each choice:
$$(p_1, \cdots, p_m) = Dec_{C.V}(\bm{h});\quad p(O_j.\rv'|\parents(O_j.\rv')) \sim \mathrm{Categorical}(p_1, \cdots, p_m).$$

The illustration of the structure of such a predictor is presented in Figure \ref{fig:predictor} of the main paper, where $i=1$. So far, we have described the structure of one single predictor $f_{C.V}$, and other predictors follow the same design as $f_{C.V}$. In addition, it is possible to compute $\hat{p}(O_j.\rv'|\parents(O_j.\rv'))$ for all $O_j \in C$ in parallel. Therefore, the predictor $f_{C.V}$ actually outputs $\hat{p}(O_j.\rv'|\parents(O_j.\rv'))$ for all $O_j \in C$ once-for-all in our implementation (read our code for more detail).

\subsection{Augmented OOCDM for Asymmetric Environments} \label{appendix:asym_oocdm}

Appendix \ref{appendix:asym_oomdp} introduces a way to ensure the result and causation symmetries by adding additional attributes about the identity of objects. To extend OOCDM to asymmetric environments, we implement built-in auxiliary attributes by augmenting the predictors in Appendix~\ref{appendix:oocdm_structure}. The augmented predictors are able to handle the asymmetric dynamics without explicitly modifying the representation of the OOMDP. The new architecture is illustrated in Figure~\ref{fig:asym_predictor}. Each class contains a bi-directional Gated Recurrent Unit (GRU) to generate the hidden encodings of auxiliary attributes for each instance. The GRU of $C_k$ recurrently produces the hidden encodings of all instances, starting from a trainable initial encoding $\vh_{init}^k$. Let $k=1$ for example. Assuming $N_1$ is the number of instances of $C_1$, we have
$$
    (\vh_1, \cdots, \vh_{N_1}) = \mathrm{GRU}_1(\vh_{init}^1)
$$
The hidden encodings are integrated into the object-encoding vectors:
$$
\vx_i = Concat\left([O_i.\bm{u}]_{O_j.\rv'}\ \for C_k.U \in \mathcal{F}[C_k]; \vh_i \right).
$$
The other parts are the same as the architecture in Appendix \ref{appendix:oocdm_structure}. For simplicity, all predictors in the OOCDM share the same set of GRUs to produce hidden attributes.

\begin{figure}
    \centering
    \includegraphics[width=0.75\linewidth]{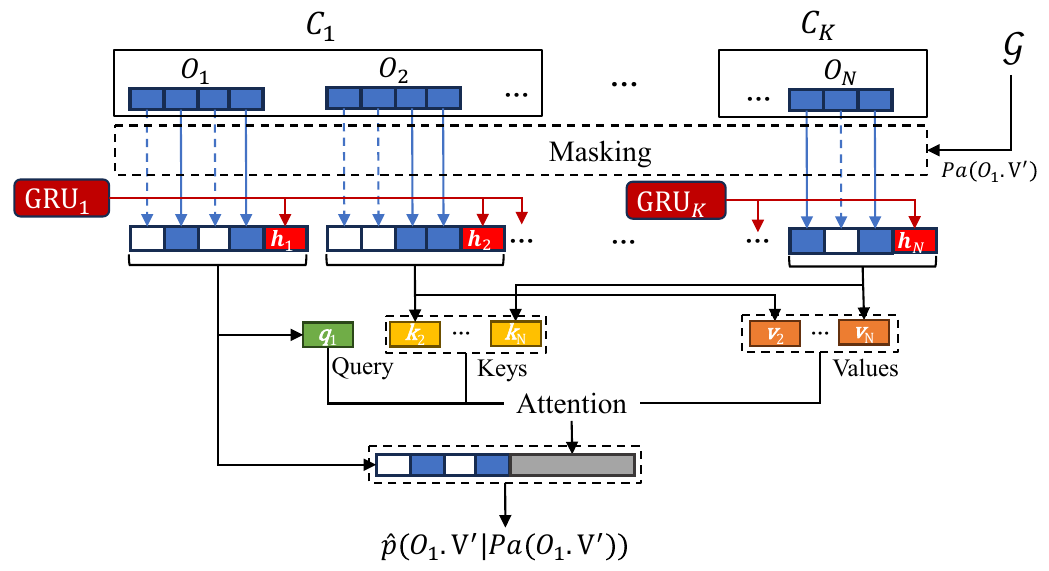}
    \caption{The illustration of $f_{C_1.V}(O_1.\rv'|\rvo_1, \vh_1, \cdots, \rvo_N, \vh_N; \gG)$ in the augmented OOCDM.}
    \label{fig:asym_predictor}
\end{figure}

\subsection{The Algorithm of Object-Oriented Causal Discovery}
\label{appendix:alg_oo_causal_discovery}

We define the following notations:
\begin{enumerate}
    \item $\vs_t$, $\va_t$, and $\vs_{t+1}$ are the observed values of $\rvs$, $\rva$, and $\rvs'$ at step $t$.
    \item $O_j.v_{t+1}$ to denote the observed value of attribute $O_j.\rv$ at step $t+1$.
    \item $C^t$ denotes the set of instances of class $C$ at step $t$.
\end{enumerate}
Then, the pseudo-code of object-oriented causal discovery is provided in Algorithm \ref{alg:oo_causal_discovery}.

\begin{algorithm}[tb]
\caption{Object-oriented causal discovery}\label{alg:oo_causal_discovery}
    \begin{algorithmic}[1]
        \REQUIRE The dataset $\mathcal{D} = \{(\vs_t,\va_t,\va_{t+1})\}_{t=1}^T$, predictors $\{f_{C.V}\}_{C\in \mathcal{C}, C.V\in \sfields{C}}$, and $\varepsilon \geq 0$.
        \STATE {Initialize $\gG \longleftarrow $ empty OOCG.}
        \FOR{$C.V$ in $\bigcup_{C\in \mathcal{C}} \sfields{C}$}
            \STATE{$\mathcal{L} \leftarrow
                \sum_{t=1}^T
                \sum_{O_j \in C^t}
                \log f_{C.V}(O_j.v_{t+1}| \vs_t, \va_t; \gG_1)$.
            }
            \FOR{$C.U$ in $\fields{C}$}
                \STATE{$\tilde{\mathcal{L}} \leftarrow
                    \sum_{t=1}^T
                    \sum_{O_j \in C^t}
                    \log f_{C.V}(O_j.v_{t+1}| \vs_t, \va_t; \gG_{\nolocalcaus{C}{U}{V}})$.
                }
                \STATE{$\mathcal{I}_{\mathcal{D}}^{\localcaus{C}{U}{V}} \leftarrow 
                    \frac{1}{\sum_{t=1}^T |C^t|}(\mathcal{L} - \tilde{\mathcal{L})}$.}
                \STATE{Add $\localcaus{C}{U}{V}$ into $\gG$ if $\mathcal{I}_{\mathcal{D}}^{\localcaus{C}{U}{V}} > \varepsilon$.}
            \ENDFOR
            \FOR{$C_k.U$ in $\bigcup_{C_k \in \mathcal{C}} \fields{C_k}$}
                \STATE{$\tilde{\mathcal{L}} \leftarrow
                    \sum_{t=1}^T
                    \sum_{O_j \in C^t}
                    \log f_{C.V}(O_j.v_{t+1}| \vs_t, \va_t; \gG_{\noglobalcaus{C_k}{U}{C}{V}})$.
                }
                \STATE{$\mathcal{I}_{\mathcal{D}}^{\globalcaus{C_k}{U}{C}{V}} \leftarrow 
                    \frac{1}{\sum_{t=1}^T |C^t|}(\mathcal{L} - \tilde{\mathcal{L}})$.}
                \STATE{Add $\globalcaus{C_k}{U}{C}{V}$ into $\gG$ if $\mathcal{I}_{\mathcal{D}}^{\globalcaus{C_k}{U}{C}{V}} > \varepsilon$.}
            \ENDFOR
        \ENDFOR
        \STATE {\bfseries return} $\gG$
    \end{algorithmic}
\end{algorithm}

\subsection{The Algorithm of Model Learning}

\label{appendix:alg_learning}

The model is learned by optimizing the target function defined in the paper's \eqref{eq:target} $J(\mathcal{D})$, with a given data-set $\mathcal{D}$. However, it is impractical and expensive to compute $J(\mathcal{D})$ if $\mathcal{D}$ contains too many samples. Therefore, we use stochastic gradient ascent, in which we repeatedly sample a batch $\mathcal{B}\subset \mathcal{D}$ and maximize $J(\mathcal{B})$. The pseudo-code of learning our OOCDM is provided in Algorithm \ref{alg:model_learning}. In this algorithm, we consider both online and offline settings, although in our experiments we only adopt offline learning to best exploit the advantage of generalization.

\begin{algorithm}[tb]
\caption{Learning Object-oriented Causal Dynamics Model}\label{alg:model_learning}
    \begin{algorithmic}[1]
        \REQUIRE The dataset $\mathcal{D}$, number $n_{iter}$ of iterations, and number $n_{batch}$ of batches in each iteration.

        \STATE{Initialize predictors $f_{C.V}$ for every $C.V \in \bigcup_{C\in \mathcal{C}} \sfields{C}$.}
        \FOR{$i_{iter} = 1,\cdots, n_{iter}$}
            \STATE{Obtain $\hat{\gG}$ using causal discovery (Algorithm \ref{alg:oo_causal_discovery}).}
            \FOR{$i_{batch} = 1,\cdots,n_{batch}$}
                \STATE{Sample batch $\mathcal{B} \subset \mathcal{D}$.}
                \STATE{Perform gradient ascent on $\mathcal{J}(\mathcal{B})$ defined in the paper's \eqref{eq:target}.}
            \ENDFOR
            \STATE {Optionally, collect new data into $\mathcal{D}$ using the latest policy.} \COMMENT{for online learning only}
        \ENDFOR
        \STATE{\bfseries return} predictors $\{f_{C.V}\}_{C\in \mathcal{C}, C.V\in \sfields{C}}$ and $\hat{\gG}$.
    \end{algorithmic}
\end{algorithm}

\subsection{Planning with Dynamics Models}
\label{appendix:planning}

We combine dynamics models with Model Predictive Control (MPC) \citep{camacho_model_1999}, where the Cross-Entropy Method (CEM) \citep{botev_cross-entropy_2013} is used as the planning algorithm to determine the agents' actions. Given a planning horizon $H$, the following process is repeated several times: 1) First, we sample $k$ action sequences with lengths of $H$ from a distribution $p_{\bm{\Theta}}(\rva_1,\cdots,\rva_H)$ parameterized by $\bm{\Theta}$; 2) then, we use the dynamics models to perform counterfactual reasoning with these action sequences, which generates $k$ $H$-step trajectories; 3) among these trajectories, we choose the top-$k^*$ (we have $k^* < K$) trajectories with the highest returns to update the parameter $\bm{\Theta}$. In the final iteration, we return the first action in the trajectory that produces the highest return. 

Since our work only focuses on the dynamics, we assume that true reward function $R(\rvs, \rva, \rvs')$
of the environment is given so that an extra reward model is not required. This makes sure that no reward bias is introduced in our comparison between different kinds of dynamics models. We present the pseudo-code of planning in Algorithm \ref{alg:planning}.

\begin{algorithm}[tb]
\caption{Planning with Cross Entropy Method}\label{alg:planning}
    \begin{algorithmic}[1]
        \REQUIRE The dynamics model $\hat{p}$, the reward function $R$, the current state $\bm{s}$, the planning horizon $H$, the number $n_{plan}$ of iterations, the number $k$ of samples, the number $k^*$ of elite samples, and the discount factor $\gamma$.

        \STATE Initialize the parameter $\bm{\Theta}$.
        \FOR{$i = 1,\cdots, n_{plan}$}
            \FOR{$j = 1,\cdots,k$}
                \STATE{Sample the $j$-th $H$-step action sequences $(\bm{a}_1^{(j)}, \cdots, \bm{a}_H^{(j)})$ with $p_{\bm{\Theta}}(\rva_1,\cdots,\rva_H)$.}
                \STATE{$\bm{s}^{(j)}_1 \gets \bm{s}$.}
                \FOR{$t = 1,\cdots,H$}
                    \STATE{Sample $\bm{s}^{(j)}_{t+1}$ using $\hat{p}(\rvs' |\rvs=\bm{s}^{(j)}_t, \rva=\bm{a}^{(j)}_t)$.}
                    \STATE{Compute the reward $r_t^{(j)} \gets R(\bm{s}_t, \bm{a}_t, \bm{s}_{t+1})$.}
                \ENDFOR
                \STATE{Compute the return $r_j \gets \sum_{t=1}^H \gamma^{t-1} r_t^{(j)}$.}
           \ENDFOR
           \IF{$i < n_{plan}$}
                \STATE{$\bm{E} \gets$ the set of top-$k^*$ action sequences with the highest return $r_j$ ($j \in \{1,\cdots,k\}$).}
                \STATE{$\bm{\Theta} \gets \textrm{Maximum-Likelihood-Estimation}(\bm{E})$.}  
           \ELSE
                \STATE{$j^* \gets \mathop{\arg\max}\limits_{j}r_j$.}
                \STATE{\bfseries return } $\bm{a}_1^{(j^*)}$.
           \ENDIF
        \ENDFOR
    \end{algorithmic}
\end{algorithm}

\section{Complexity Analysis}
\label{appendix:complexity}

In this section, we only consider one OOMDP so that the numbers of the instances of classes are fixed. The following symbols are used in this section:
\begin{enumerate}
    \item $N_i$ denotes the number of instances of the $i$-th class $C_i$.
    \item $K$ denotes the number of classes.
    \item $N := \sum_{i=1}^K N_i$ denotes the number of objects.
    \item $m_i := |\fields{C_i}|$ denotes the number of fields of the $i$-th class $C_i$.
    \item $m := \sum_{i=1}^{K} m_i$ denotes the overall number of fields in the OOMDP.
    \item $n := \sum_{i=1}^K N_im_i$ denotes the number of variables (attributes) at each step in the OOMDP.
    \item $k$ denotes the number of samples used in predicting, causal discovery, or planning.
    \item $k^*$ denotes the number of elite samples used in the Cross-Entropy Method (CEM) for planning.
    \item $H$ denotes the planning horizon in Model Predictive Control (MPE) for planning.
    \item $l$ denotes the number of iterations in CEM.
    \item Most importantly, $O$ becomes the symbol for an asymptotic boundary rather than an object (in this section only).
\end{enumerate}

It is obvious that $n \geq N$ and $n \geq m$ hold in all OOMDPs. Especially, in large-scale environments, we have $n >> m$. We assume that the number of fields of each class is bounded by $m_{\max}$, and then $n$ and $N$ has the same magnitude since $ N \leq n \leq m_{\max}N$.

The theorems about the complexities of our OOCDM (implemented as described in Appendix \ref{appendix:implementation}) are presented and proven in the following.

\begin{theorem} \label{theorem:complexity:predicting} The time complexity of predicting the next states using our OOCDM is $O(nNk)$.
\end{theorem}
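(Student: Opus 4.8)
The plan is to trace the running time of a single call to each field predictor $f_{C.V}$ (whose architecture is fixed in Appendix~\ref{appendix:oocdm_structure}) and then sum over all state fields and all prediction targets, treating the network widths $d_e,d_k,d_v$ and the per-class field count $m_{\max}$ as constants. First I would decompose one prediction, for one input sample, into four stages: (i) encoding every observed attribute $O_i.\ru$ into $O_i.\bm{u}$ via $\mathrm{AttrEnc}$; (ii) forming the masked object-encoding vectors $\vx_i$ relative to a target $O_j.\rv'$; (iii) computing the query, keys and values and running the key--value attention; and (iv) decoding the embedding $\bm{h}$ into the output distribution. Stage (i) touches each of the $n$ attributes exactly once, so it costs $O(n)$ per sample and will turn out to be subdominant.

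The observation that keeps the bound tight is that, for a \emph{fixed} target state field $C.V$, the masking that produces the \emph{source} encodings $\vx_i$ (the case $i\neq j$) is governed only by the class-level global causalities $\globalcaus{C_k}{U}{C}{V}$; it therefore depends on the source object $O_i$ and on the target field $C.V$, but not on which particular instance $O_j\in C$ is being predicted. Consequently the keys $\bm{k}_i$ and values $\bm{v}_i$ for all $N$ source objects can be built once per field $C.V$, at cost $O(N)$, and reused across every target instance of that field, avoiding an otherwise spurious extra factor of $N$.

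With this sharing in place I would count the dominant cost. For each state field $C.V$ of class $C_i$: building the source keys/values costs $O(N)$; then, for each of the $N_i$ target instances $O_j\in C_i$, forming the local-masked query and attending over the $N-1$ sources costs $O(N)$, and decoding is $O(1)$. This is $O(N)+N_i\cdot O(N)=O(N_iN)$ per field, the source construction being absorbed since $N_i\ge 1$. Summing over all state fields gives $\sum_{C.V\in\mathcal{F}_s} N_{\mathrm{class}(C.V)}\cdot N = N\sum_i |\sfields{C_i}|\,N_i$, and since $|\sfields{C_i}|\le|\fields{C_i}|=m_i$ the inner sum is at most $\sum_i m_iN_i=n$. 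Hence one sample costs $O(nN)$, dominating stage (i); multiplying by the $k$ samples yields the claimed $O(nNk)$.

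The main obstacle I anticipate is bookkeeping discipline rather than any genuine difficulty: one must verify that the target-dependence of the masks does not covertly force recomputing keys/values per target instance (resolved by the sharing observation above), and one must apply the counting identity $\sum_{C.V\in\mathcal{F}_s} N_{\mathrm{class}(C.V)} = \sum_i |\sfields{C_i}|\,N_i \le n$ to the number of prediction \emph{targets} rather than to $n$ or $m$ directly, so that no stray factor of $m$ or $K$ creeps into the final bound.
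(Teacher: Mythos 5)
Your proof is correct and follows essentially the same route as the paper's: a stage-wise cost accounting of the attention-based predictor (attribute encoding, masking/object encoding, key/value/query derivation, attention, decoding), a per-field cost of $O(N_iNk)$, and the summation $\sum_{i} m_i N_i = n$ to conclude $O(nNk)$. The only notable difference is that you explicitly justify reusing the masked source keys and values across all target instances of a given field (since the global-causality masks do not depend on the target instance), an amortization that the paper's per-field cost breakdown assumes implicitly.
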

\begin{proof}
    Since attribute encoders are shared by encoders, then computing all attribute-encoding vectors costs $O(nk)$.
    Then, for every state field $\field{C_i}{V} \in \sfields{C_i}$ of every class $C_i$, the predictor spends:
    \begin{enumerate}
        \item $O(nk)$ in applying masks to and concatenating attribute-encoding vectors into object-encoding vectors;
        \item $O(Nk)$ in deriving key, value, and query vectors from object-encoding vectors;
        \item $O(N_i(N-N_i)k + N_ik) = O(N_iNk)$ in the attention operation;
        \item $O(N_ik)$ in decoding the distribution embedding. 
    \end{enumerate}
    Therefore, each state field in $f_{C_i}$ leads to a cost of $O(nk) + O(Nk) + O(N_iNk) + O(N_ik) = O(N_iNk)$.
    By summing up the costs of all state fields, the cost of predicting the next states is
    \begin{align*}
        ~& O(nk) + \sum_{i=1}^{K} m_i \cdot O(N_iNk) \\
        =& O(nk) + O\left( N\sum_{i=1}^{K} m_iN_ik \right) \\
        =&  O(nk) + O(nNk) \\
        =& O(nNk).
    \end{align*}
\end{proof}

\begin{theorem} The time complexity of causal discovery using our OOCDM is $O(nmNk)$.
\end{theorem}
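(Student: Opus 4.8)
The plan is to reduce the cost of causal discovery to repeated applications of the single-field prediction bound established in Theorem~\ref{theorem:complexity:predicting}. First I would recall from the proof of that theorem that evaluating the predictor $f_{C_i.V}$ for a single state field $\field{C_i}{V}\in\sfields{C_i}$ over $k$ samples costs $O(N_iNk)$, where $N_i$ is the number of instances of $C_i$. Every conditional-mutual-information term computed in Algorithm~\ref{alg:oo_causal_discovery} --- whether for a local causality $\localcaus{C}{U}{V}$ via $\gG_{\nolocalcaus{C}{U}{V}}$ or a global causality $\globalcaus{C_k}{U}{C}{V}$ via $\gG_{\noglobalcaus{C_k}{U}{C}{V}}$ --- amounts to one such single-field evaluation on a graph that differs from $\gG_1$ only in its masking pattern. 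Since the extra masking does not change the asymptotic cost, each CMI term also costs $O(N_iNk)$, and the subtraction-and-normalization that turns the two log-likelihoods into $\mathcal{I}^\varsigma_\mathcal{D}$ is dominated by the evaluation itself.

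Second, I would count the CMI terms evaluated for a fixed state field $\field{C_i}{V}$. The inner loops test one local causality for each field $\field{C_i}{U}\in\fields{C_i}$, giving $m_i$ terms, and one global causality for each field $\field{C_k}{U}$ ranging over all classes, giving $m=\sum_l m_l$ terms. Because $m_i\le m$, the number of CMI terms per state field is $O(m)$, so the work attributable to $\field{C_i}{V}$ is $O(m)\cdot O(N_iNk)=O(mN_iNk)$.

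Third, I would sum over all state fields, grouping by class. Since each class $C_i$ has at most $m_i$ state fields, the total cost is bounded by
\[
    \sum_{i=1}^{K} m_i \cdot O(mN_iNk)
    = O\!\left( mNk \sum_{i=1}^{K} m_i N_i \right)
    = O(mNk\cdot n) = O(nmNk),
\]
using the definition $n=\sum_{i=1}^{K}N_i m_i$. The only overhead outside these evaluations is the one-time computation of the attribute-encoding vectors, costing $O(nk)$, which is dominated.

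The main obstacle I anticipate is the bookkeeping rather than any deep argument: one must establish cleanly that each modified-graph CMI evaluation inherits the $O(N_iNk)$ bound (arguing that masking off the removed parent does not inflate the asymptotics), and one must correctly tally that there are $\Theta(m)$ global causalities per field --- \emph{not} $\Theta(n)$ --- which is exactly where the object-oriented sharing produces the saving over the $O(n^3)$ non-OO baseline. Once those two points are pinned down, the collapse of $\sum_i m_i N_i$ to $n$ is routine.
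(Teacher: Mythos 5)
Your proposal is correct and follows essentially the same route as the paper's proof: both invoke the $O(N_iNk)$ per-field prediction bound from Theorem~\ref{theorem:complexity:predicting}, count $O(m_i)$ local and $O(m)$ global CMI evaluations per state field of class $C_i$ (the paper tallies these per class as $m_i^2$ and $m_i m$, which is the same count organized differently), and collapse the sum via $\sum_i m_i N_i = n$. Your explicit remarks that masking does not inflate the per-evaluation asymptotics and that the $\gG_1$ baseline and encoding overhead are dominated are points the paper leaves implicit, but they do not constitute a different argument.
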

\begin{proof}
    In the process of proving Theorem \ref{theorem:complexity:predicting}, we know that each predictor costs $O(N_iNk)$ to predict the next-state attribute.

    First, we consider the local causalities. For each class $C_i$, we have $m_i^2$ local causalities. For each local causality expression $localcaus{C_i}{U}{V}$, the predictor $f_{C_i.V}$ is used twice for each sample to estimate $\mathcal{I}_{\mathcal{D}}^{localcaus{C_i}{U}{V}}$. Therefore, the complexity of discovering all local causalities shared by $C_i$ is $ O(m_i^2N_iNk) $

    Then, we consider the global causalities. For each class $C_i$, we have $m_im$ global causalities. For each global causality expression like $\globalcaus{C_j}{U}{C_i}{V}$, the predictor $f_{C_i.V}$ is used twice for each sample to estimate $\mathcal{I}_{\mathcal{D}}^{\globalcaus{C_j}{U}{C_i}{V}}$. Therefore, the complexity of discovering all global causalities shared by $C_i$ is $ O(m_imN_iNk) $

    Combing the above results, all causalities (local and global) shared by $C_i$ cost
    $$ O(m_i^2N_iNk) + O(m_imN_iNk) = O(m_imN_iNk).$$
    Finally, the time complexity for causal discovery is
    $$ \sum_{i=1}^K O(m_imN_iNk) = O\left(mNk \sum_{i=1}^K O(m_iN_i)\right) = O(nmNk).$$
\end{proof}

\begin{theorem} The time complexity of planning for an action using our OOCDM is $O(lk(HnN + \log k^*))$.
\end{theorem}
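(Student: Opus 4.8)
The plan is to walk through \Algref{alg:planning} loop by loop and accumulate costs, leaning on \Cref{theorem:complexity:predicting} for the per-step simulation cost. The outermost CEM loop runs $l$ times (this is $n_{plan}$ in the pseudo-code), so it suffices to bound the cost of a single iteration and then multiply by $l$. I would isolate two contributions within one iteration: the forward simulation of candidate trajectories, and the selection-plus-update of the sampling distribution $\bm\Theta$.

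First I would bound the trajectory-simulation cost. Each iteration samples $k$ action sequences of horizon $H$ and rolls every sequence forward through the dynamics model. The key observation is that all $k$ samples can be advanced one step simultaneously, and by \Cref{theorem:complexity:predicting} a single parallel prediction step over $k$ samples costs $O(nNk)$. Repeating this across the $H$ planning steps gives $O(HnNk)$. I would then check that the remaining per-step bookkeeping is dominated: drawing the action sequences from $p_{\bm\Theta}$ costs at most $O(Hnk)$ (bounded by the action dimensionality, which is at most $n$), each reward evaluation $R(\bm s_t,\bm a_t,\bm s_{t+1})$ touches the full state and so is $O(n)$, giving $O(Hnk)$ in total, and forming the discounted $H$-step returns costs $O(Hk)$ — all absorbed into $O(HnNk)$.

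Next I would bound the elite-selection and distribution-update step. Choosing the top-$k^*$ action sequences by return does not require a full sort of the $k$ candidates; maintaining a min-heap of size $k^*$ while scanning the $k$ returns costs $O(k\log k^*)$, which is exactly the origin of the $\log k^*$ term. The final iteration instead performs a single $\argmax$ over the $k$ returns, which is $O(k)$ and hence no worse. The subsequent maximum-likelihood update of $\bm\Theta$ from the $k^*$ elites visits each elite sequence once and is $O(k^* H)$ up to parameter dimensionality; since $k^* \le k$ this is dominated by $O(k\log k^*)$ together with the simulation term. Summing the per-iteration cost as $O(HnNk) + O(k\log k^*) = O\!\left(k(HnN + \log k^*)\right)$ and multiplying by the $l$ CEM iterations yields the claimed bound $O\!\left(lk(HnN + \log k^*)\right)$.

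The main obstacle I anticipate is pinning down the $\log k^*$ factor: one must recognize that elite selection is implemented as a partial, heap-based selection costing $O(k\log k^*)$ rather than a full sort costing $O(k\log k)$, and separately confirm that the MLE update and the reward evaluations are genuinely dominated, introducing no hidden dependence on $n$, $N$, or the action dimension that would break the stated bound. The simulation term itself is a straightforward application of \Cref{theorem:complexity:predicting}, so the delicate part is the accounting around the selection step.
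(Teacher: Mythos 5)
Your proposal is correct and follows essentially the same route as the paper's proof: decompose each CEM iteration into sampling, $H$-step model rollout costing $H\cdot O(nNk)=O(HnNk)$ via the prediction-complexity theorem, top-$k^*$ selection costing $O(k\log k^*)$, and the $O(Hk^*)$ parameter re-estimation, then sum and multiply by the $l$ iterations. Your additional details (heap-based partial selection, explicit absorption of reward evaluation and action-sampling costs) merely flesh out steps the paper states without elaboration.
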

\begin{proof}
    In each iteration, we have $k$ action sequences with lengths of $H$. Therefore, sampling the action sequences costs $O(kH)$. Then, using models simulating trajectories and computing returns cost $H\cdot O(nNk) = O(HnNk)$. Identifying the top-$k^*$ trajectories costs $O(k\log{k^*})$. Re-estimating parameters costs $O(k*H)$. Therefore, the cost of each iteration is
    $$O(kH) + O(HnNk) + O(k\log{k^*}) + O(Hk^*) = O(k(HnN + \log{k^*})).$$
    Finally, considering $l$ iterations, the time complexity of planning for an action of our OOCDM is $O(lk(HnN + \log{k^*}))$
\end{proof}

\begin{theorem} The space complexity of model weights of our OOCDM is $O(m^2)$.
\end{theorem}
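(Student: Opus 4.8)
The plan is to enumerate every parametric sub-module described in Appendix~\ref{appendix:oocdm_structure}, bound the weight count of each, and sum. Throughout I would treat the architectural hyperparameters $d_e$, $d_k$, $d_v$, the per-attribute domain dimensions, and the (constant) depth of each MLP as $O(1)$, so that a fully-connected map between layers of widths $a$ and $b$ contributes $O(ab)$ weights. The key structural fact to exploit is that all weights are \emph{shared across objects} by class-level design, so the count never scales with the number of objects $N$. The weight-bearing pieces are: the attribute encoders $\mathrm{AttrEnc}_{C.U}$, one per field $C.U \in \mathcal{F}$; and, for every state field $C.V \in \mathcal{F}_s$, the predictor $f_{C.V}$, which comprises a query encoder $\mathrm{QEnc}_{C.V}$, a key encoder $\mathrm{KEnc}_{C_k \to C.V}$ and a value encoder $\mathrm{VEnc}_{C_k \to C.V}$ for each of the $K$ classes $C_k$, and a decoder $\mathrm{Dec}_{C.V}$.

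First I would dispose of the attribute encoders: each maps a bounded-dimension domain into $\mathbb{R}^{d_e}$ and hence carries $O(1)$ weights, and there are $m = |\mathcal{F}|$ of them, for $O(m)$ total. Next, fixing a single predictor $f_{C_i.V}$, I would observe that its object-encoding vectors have dimension $m_i d_e = O(m_i)$, so $\mathrm{QEnc}_{C_i.V}$ (output width $d_k$) carries $O(m_i)$ weights and $\mathrm{Dec}_{C_i.V}$ (input width $d_k + d_v$) carries $O(1)$. The decisive contribution comes from the inter-object encoders: $\mathrm{KEnc}_{C_k \to C_i.V}$ and $\mathrm{VEnc}_{C_k \to C_i.V}$ ingest an object-encoding vector of a class-$C_k$ object, of dimension $m_k d_e = O(m_k)$, so each carries $O(m_k)$ weights. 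Summing over the $K$ source classes gives $\sum_{k=1}^{K} O(m_k) = O(m)$, whence a single predictor has $O(m_i) + O(m) + O(1) = O(m)$ weights.

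Finally I would sum over predictors. There is one predictor per state field, and the number of state fields is $\sum_{i} |\mathcal{F}_s[C_i]| \le \sum_i m_i = m$. Since each predictor carries $O(m)$ weights, the predictors together contribute at most $m \cdot O(m) = O(m^2)$, and adding the $O(m)$ from the attribute encoders leaves the total at $O(m^2)$, as claimed.

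The main obstacle is purely combinatorial bookkeeping, concentrated in the key/value encoders, which are indexed by both the target state field $C_i.V$ and the source class $C_k$, yielding a double sum $\sum_{C_i.V}\sum_{k} O(m_k)$. The pitfall is to over-count by a spurious factor of $K$; the resolution is that the inner sum collapses via $\sum_{k=1}^K m_k = m$ \emph{independently} of the target field, so the double sum is $(\#\text{ state fields}) \cdot O(m) = O(m^2)$ and nothing larger. I would also remark that the augmented variant of Appendix~\ref{appendix:asym_oocdm} adds one GRU per class with $O(1)$ weights each, contributing only $O(K) \le O(m)$ and hence preserving the $O(m^2)$ bound.
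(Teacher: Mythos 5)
Your proposal is correct and follows essentially the same argument as the paper: attribute encoders contribute $O(m)$, each predictor's per-class key/value encoders contribute $\sum_{k} O(m_k) = O(m)$ (with query encoder $O(m_i)$ and decoder $O(1)$), and summing the $O(m)$ per-predictor cost over at most $m$ state fields gives $O(m^2)$. The only cosmetic difference is that the paper groups the sum by class, writing $\sum_{i=1}^K m_i \cdot O(m)$, whereas you sum directly over state fields; the bookkeeping is identical, and your closing remark on the GRU-augmented variant is a harmless addition beyond the paper's scope.
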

\begin{proof}
    The space complexity of attribute encoders is $O(m)$. In each predictor, there exists $K$ key encoders, $K$ value encoders, one query encoder, and one distribution decoder. Here, the space complexity of the key-encoder, value-encoder, or query-encoder for each class $C_i$ is $O(m_i)$; and the space complexity of the distribution decoder is $O(1)$.
    
    Finally, the space complexity of the entire OOCDM is
    \begin{align*}
        ~& \sum_{i=1}^K m_i \left( 2\sum_{j=1}^K O(m_j) + 2\cdot O(m_i) + O(1) \right) + O(m) \\
        =& \sum_{i=1}^K m_iO(m) + O(m) \\
        =& O(m^2)
    \end{align*}
\end{proof}

In Table \ref{tab:complexity}, we further compare our OOCDM with the state-of-the-art CDMs in terms of the above-mentioned aspects of computational complexity. These baselines include 1) \textbf{CDL}, which learns the SCM underlying the environmental dynamics by estimating CMIs like us \cite{wang_causal_2022}, and 2) \textbf{GRADER}, which uses Fast CIT to discover causalities and uses GRUs to fit structural equations \citep{ding_generalizing_2022}. We can see that our OOCDM utilizes object-oriented information to share sub-models (predictors) and causality among objects of each class, leading to a great reduction of computational complexity, especially the scale of model weights and the time complexity of causal discovery. It is worth noting that all predictors are implemented in parallel in practice, making our OOCDM even more computationally efficient if GPUs are used.

\begin{table*}[tb]
    \centering
    \caption{Comparison of computational complexity between our OOCDM and the state-of-the-art CDMs. Here ``t.c.'' means ``time complexity'' and ``s.c.'' means ``space complexity''. }
    \vspace{9pt}
    \begin{tabular}{c|c|c|c}
        \hline
                                         & \textbf{OOCDM}      & CDL               & GRADER           \\ 
        \hline
         t.c. of predicting             & $O(nNk)$          & $O(n^2k)$         & $O(n^2k)$             \\
         t.c. of causal discovery       & $O(nmNk)$         & $O(n^3k)$         & $O(n^3 k\log k)$      \\
         t.c. of planning               & {\small $O(lk(HnN + \log{k^*}))$} & 
            {\small $O(lk(Hn^2 + \log{k^*}))$} & {\small $O(lk(Hn^2 + \log{k^*}))$} \\
         s.c. of model weights          & $O(m^2)$          & $O(n^2)$          & $O(n^2)$              \\
         \hline
    \end{tabular}
    \label{tab:complexity}
\end{table*}
\section{Environments} \label{appendix:envs}

\begin{figure}
    \centering
    \subfigure[Mouse]{
        \includegraphics[height=4.5cm]{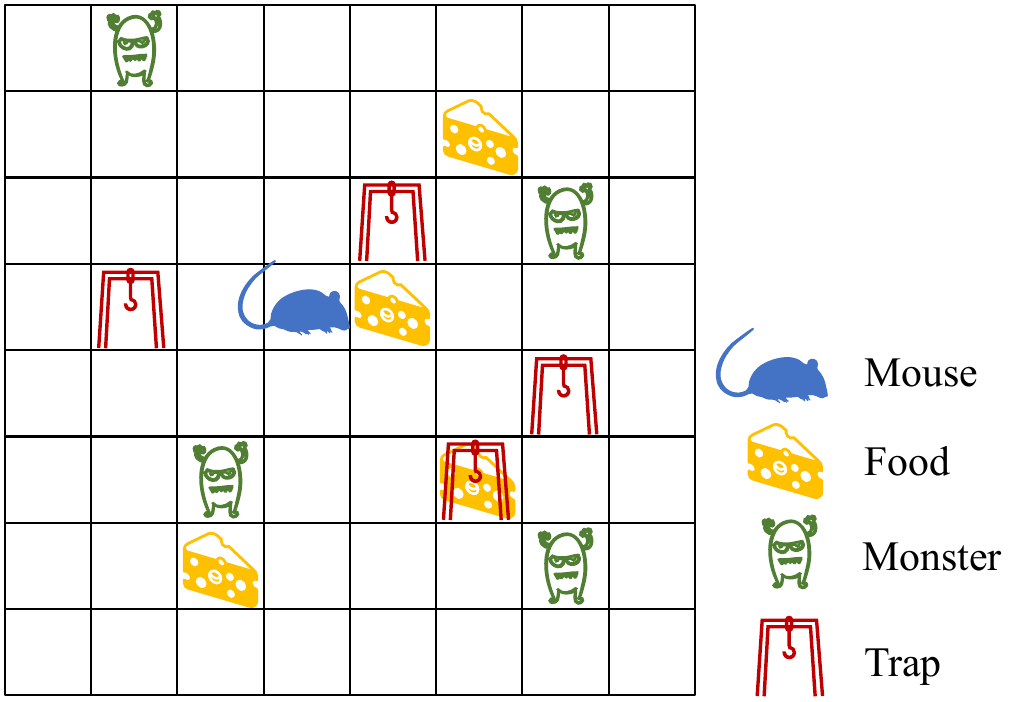}
    }

    \subfigure[CMS]{
        \includegraphics[height=3cm]{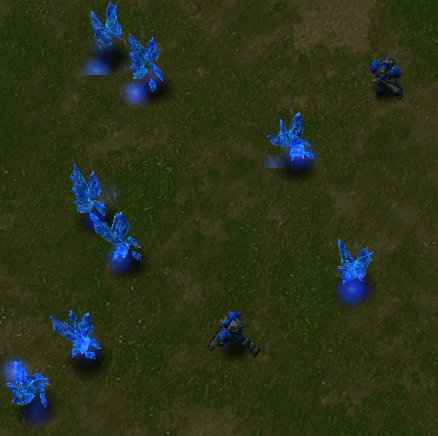}
    }
    \subfigure[DZB]{
        \includegraphics[height=3cm]{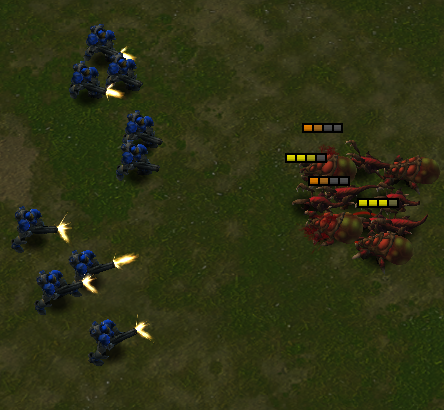}
    }
    \caption{Illustrations of the (a) Mouse, (b) Collect-Mineral-Shards, and (c) Defeat-Zerglings-Banelings environments.}
    \label{fig:environments}
\end{figure}

\subsection{Block} \label{appendix:block}
The Block environment is a simple environment designed to validate the effectiveness of causal discovery for different numbers of environmental variables. It contains two classes: $\mathcal{C} = \{Block, Total\}$. The fields of these classes are given by
\begin{itemize}
    \item $\sfields{Block} = \{Block.S_1, Block.S_2, Block.S_3\}$
    \item $\afields{Block} = \{Block.A\}$,
    \item $\sfields{Total}= \{Total.S_1, Total.S_2, Total.S_3, Total.T\}$,
    \item $\afields{Total} = \emptyset$.
\end{itemize}
The transition of each $O\in Block$ follows a linear transform:
\begin{equation}
    \begin{pmatrix} O.\rs_1' \\ O.\rs_2' \\ O.\rs_3' \end{pmatrix} = 
    \begin{bmatrix}
        1 & 0 & 0 & -0.3 \\
        0.5 & 1.0 & 0 & 0 \\
        0 & 0.25 & 0.75 & 1.0 \\
    \end{bmatrix}
    \begin{pmatrix} O.\rs_1 \\ O.\rs_2 \\ O.\rs_3 \\ \tanh{O.\ra} \end{pmatrix}
    + \mathcal{N}(\bm{0}, 0.01^2 \bm{I}).
\end{equation}
The transition of the instance of $Total$ follows that
\begin{align}
    O.\rs_j' &= \frac{1}{2} O.\rs_j + \frac{1}{2} \max\limits_{O_i\in Block} O_i.\rs_j, \qquad j=1,2,3, \\
    O.\rt' &= O.\rt + 1 + \mathcal{N}(0, 0.01^2).
\end{align}

The Block environment contains no rewards. That is, $R(\rvs,\rva,\rvs') \equiv 0$.

% \begin{gather*}
%     Block[(S_1,\,A)\rightarrow S_1';\ (S_1,\,S_2)\rightarrow S_2';\ (S_2,\,S_3,\,A) \rightarrow S_3'], \\
%     Total[(S_1\,Block.S_1)\rightarrow S_1';\ (S_2,\,Block.S_2)\rightarrow S_2';\ (S_3,\,Block.S_3)\rightarrow S_3';\ T \rightarrow T'].
% \end{gather*}
% See Appendix \ref{appendix:extended_oocexpr} for meanings of the above denotations.

At the beginning of each episode, We initialize the attributes of each $Block$ object by
\begin{equation}
    (O.\rs_1,\ O.\rs_2,\ O.\rs_3)^T \sim  \mathcal{N}\left(
    (1, 0, 0)^T,
    \mathrm{diag}\left(0.25, 1, 1 \right)
    \right),
\end{equation}
and initialize the $Total$ instance by
\begin{equation}
    (O.\rs_1,\ O.\rs_2,\ O.\rs_3,\ O.\rt)^T \sim 
    \mathcal{N}\left(
    \bm{0},
    \mathrm{diag}\left(0.01^2, 0.01^2, 0.01^2, 0\right)
    \right).
\end{equation}
We use a random policy (which produces Gaussian actions) to obtain the training data. Therefore, $O.\rs_1$ for every $O \in Block$ is likely to stay close to $1$. Further, this leads to spurious correlations such as $\globalcaus{Total}{T}{Block}{S_2}$.

The ground-truth causal graph of the Block environment is an OOCG, which we visualize in Figure~\ref{fig:oocg_block}.

\begin{figure}
    \centering
    \includegraphics[width=0.6\linewidth]{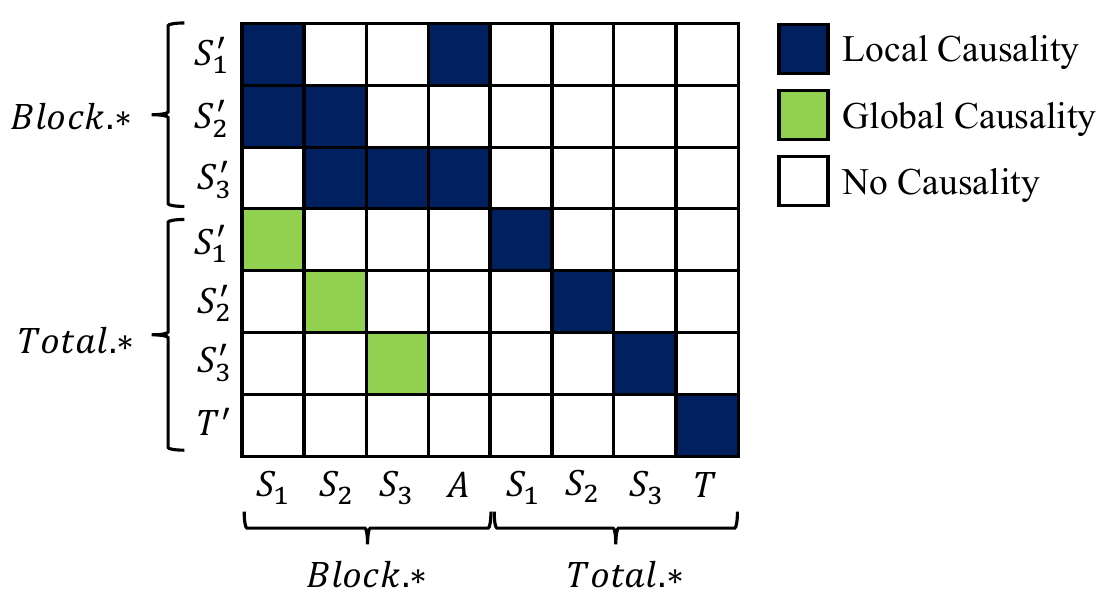}
    \caption{The visualization of the ground-truth OOCG (the adjacency matrix of class-level causalities) of the Block environment.}
    \label{fig:oocg_block}
\end{figure}

\subsection{Mouse} \label{appendix:mouse}

The Mouse Environment aims to validate the performance of dynamics models in a more complicated OOMDP. It contains four classes: $\mathcal{C} = \{Mouse, Food, Monster, Trap\}$, whose fields are
\begin{itemize}
    \item $\sfields{Mouse} = \{Mouse.Health, Mouse.Hunger, Mouse.Position\}$,
    \item $\afields{Mouse} = \{Mouse.Move\}$,
    \item $\sfields{Food} = \{Food.Amount, Food.Position\}$,
    \item $\sfields{Monster} = \{Monster.Noise, Monster.Position\}$,
    \item $\sfields{Trap} = \{Trap.Duration, Trap.Position\}$,
    \item $\afields{Food} = \afields{Monster} = \afields{Trap} = \emptyset$
\end{itemize}
All objects are located in an $8\times 8$ grid world. That is, the domain of the field $Position$ of every class is in $\{0,1,\cdots,7\}^2$. Typically, the environment contains only one instance of $Mouse$ and arbitrary numbers of instances of other classes. We illustrate the Mouse environment in Figure \ref{fig:environments}(a).

The instance $O_{mouse}$ of $Mouse$ has an attribute $O_{mouse}.\attr{Health} \leq 10$ and $O_{mouse}.\attr{Hunger} \in [0,100]$. The hunger point $O_{mouse}.\attr{Hunger}$ is reduced by $1$ for each step unless the mouse reaches any instance of food. For each $O_{food} \in Food$ that is reached by the mouse (i.e., $O_{food}.\attr{Position} = O_{mouse}.\attr{Position}$), the mouse consumes all amount of the food ($O_{food}.\attr{Amount}' \leftarrow 0$) and restores the equal amount of $O.Hunger$. If the mouse is starving ($O_{mouse}.\attr{Hunger} < 25$), it loses one point of $O.\attr{Health}$ for each step. However, if the mouse is full ($O_{mouse}.\attr{Hunger} > 75$), it restores one point of $O.\attr{Health}$ for each step. If the health $O_{mouse}.\attr{Health}$ drops below $0$, the episode terminates because the mouse is dead.

The mouse has an action attribute $O_{mouse}.\attr{Move}$, which can be chosen from $5$ choices: North, South, East, West, and Staying. Except for Staying, the mouse's position $O_{mouse}.\attr{Position}$ changes to the nearby grid based on the chosen direction (unless it reaches the boundary of the world). However, if the mouse is trapped by a trap $O_{trap} \in Trap$ (i.e., $O_{trap}.\attr{Position} = O_{mouse}.\attr{Position}$ and $O_{trap}.\attr{Duration} > 0$), then the mouse's position will not be changed no matter what $O_{mouse}.\attr{Move}$ is chosen, and $O_{trap}.\attr{Duration}$ is reduced by $1$.

The positions of $Food$ instances are fixed after being randomly initialized. The amount $O_{food}.\attr{Amount}$ of an instance $O_{food}$ slowly accrues with time. That is $O_{food}.\attr{Amount}' \gets O_{food}.\attr{Amount} + \mathcal{N}(1, 0.01)$ unless it is consumed by the mouse. We note that $O_{food}.\attr{Amount}$ increases slower than that $O_{mouse}.\attr{Hunger}$ decreases. Therefore, the mouse must constantly navigate from one food to another to prevent from starving.

An instance $O_{monster}$ of $Monster$ randomly wanders in the world. That is, its position randomly changes into a nearby grid at each step. If the mouse is reached by a monster (i.e., $O_{monster}.\attr{Position} = O_{mouse}.\attr{Position}$), the mouse directly loses 5 points of $O_{mouse}.\attr{Health}$. Each monster also contains an attribute $O_{monster}.\attr{Noise}$ of noise, which is used to create spurious correlations and confuse non-causal dynamics models. The transition of $O_{monster}.\attr{Noise}$ is given by $O_{monster}.\attr{Noise}' = O_{monster}.\attr{Noise} + \mathcal{N}(0, 0.01)$.

The goal of the agent is to make the mouse live as long as possible and stay away from starving. Therefore, the reward function is given by:
\begin{equation}\begin{aligned}
    R(\rvs,\rva,\rvs') =& 0.01 \cdot O_{mouse}.\attr{Hunger} + (O_{mouse}.\attr{Health}' - O_{mouse}.\attr{Health} \\
    &\qquad + 0.05 \cdot (O_{mouse}.\attr{Hunger}' - O_{mouse}.\attr{Hunger}).
\end{aligned}\end{equation}

% The ground-truth causal graph of the Mouse environment is an OOCG, given by
% \begin{gather*}
%     Mouse[(Position, Move, Trap.Position, Trap.Duration) \rightarrow Position'], \\
%     Mouse[(Position, Hunger, Food.Position, Food.Amount) \rightarrow Hunger'], \\
%     Mouse[(Position, Hunger, Health, Monster.Position) \rightarrow Health'], \\
%     Food[Position \rightarrow Position';\ (Position, Mouse.Position, Amount) \rightarrow Amount'], \\
%     Trap[Position \rightarrow Position';\ (Position, Mouse.Position, Duration) \rightarrow Duration'], \\
%     Monster[Position \rightarrow Position';\ Noise \rightarrow Noise']. \\
% \end{gather*}
% See Appendix \ref{appendix:extended_oocexpr} for meanings of the above denotations.

The ground-truth causal graph of the Mouse environment is an OOCG, which we visualize in Figure~\ref{fig:oocg_mouse}.

\begin{figure}
    \centering
    \includegraphics[width=0.85\linewidth]{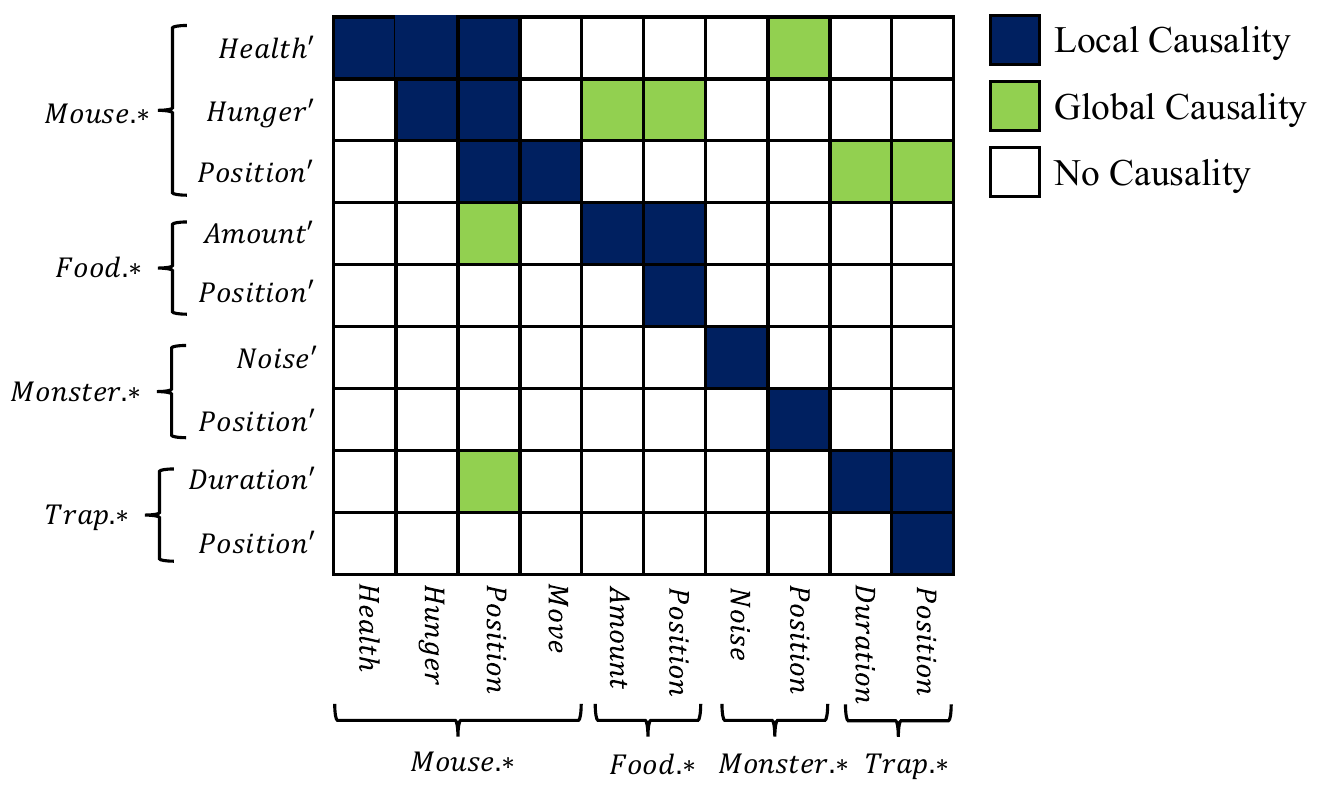}
    \caption{The visualization of the ground-truth OOCG (the adjacency matrix of class-level causalities) of the Mouse environment.}
    \label{fig:oocg_mouse}
\end{figure}

\subsection{StarCraft Mini-Games}

Our experiments consider two StarCraft mini-games as environments. We formulate these environments as OOMDPs merely based on our intuition. That is, the objects correspond to units in the StarCraftII game and classes correspond to the type of the unit. The values of the attributes are observed through the PySC2 \citep{vinyals_starcraft_2017} interface.

The true dynamics of these environments are implemented in the StarCraftII engine. Being non-developers of the game, we do not know the precise dynamics of these environments. For example, we observe that if a marine chooses $\mathrm{NOOP}$ as its action, it automatically attacks a hostile unit (if any) in its attacking range. However, we have no clue based on what rule it chooses the unit that it attacks. Therefore, we do not know whether the Definition* \ref{def:oomdp} of OOMDP is strictly satisfied (possibly not), not to mention the ground-truth causal graph of these environments.

We believe that 1) humans factorize the world into components (variables) based on independent relationships, and 2) We discriminate and categorize objects based on structural and dynamical similarity. Therefore, we believe that the Definition* \ref{def:oomdp} is roughly satisfied, even though the object-oriented description is provided by non-experts. Through these StarCraft mini-games, we hope to show that our OOCDM is applicable to a wide range of RL problems.

\subsubsection{Collect-Mineral-Shards}

The Collect-Mineral-Shards (CMS) environment is a StarCraftII mini-game. The game contains 2 marines and 20 mineral shards. The player (agent) needs to control the movement of the marines to collect all mineral shards as fast as possible. We illustrate the CMS environment in Figure \ref{fig:environments}(b).

We decompose the environment by 2 classes $\mathcal{C} = \{Marine, Mineral\}$ such that
\begin{itemize}
    \item $\sfields{Marine} = \{Marine.Position\}$, $\afields{Marine} = \{Marine.Move\}$,
    \item $\sfields{Mineral} = \{Mineral.Position, Mineral.Collected\}$, $\afields{Mineral} = \emptyset$,
\end{itemize}
where we set
\begin{itemize}
    \item $Dom_{Marine.Position} = Dom_{Mineral.Position} = [-99,99]^2$,
    \item $Dom_{Marine.Move} = \{\mathrm{North},\mathrm{East},\mathrm{South},\mathrm{West},\mathrm{NOOP}\}$,
    \item and $Dom_{Mineral.Collected} = \{\mathrm{True},\mathrm{False}\}$.
\end{itemize}

We define the reward function as the number of collected mineral shards in each step:
\begin{equation}
    R(\rvs,\rva,\rvs') = \sum_{O \in Mineral} \left\{\begin{array}{ll}
         1, & O.\attr{Collected}' \wedge \neg O.\attr{Collected}, \\
         0, & \textrm{otherwise}.
    \end{array}
    \right.
\end{equation}

\subsubsection{Defeat-Zerglings-Banelings}

The Defeat-Zerglings-Banelings (CMS) environment is also a StarCraftII mini-game. The game contains 9 marines (controlled by the player), 6 zerglings (hostile), and  4 banelings (hostile). The player needs to control the marines to deal as much damage as possible to the hostile zerglings and banelings. We illustrate the DZB environment in Figure \ref{fig:environments}(c).

We decompose the environment by 3 classes $\mathcal{C} = \{Marine, Zergling, Baneling\}$ such that
\begin{itemize}
    \item $\sfields{C} = \{C.Position, C.Health, C.Alive\}$ for every $C\in \mathcal{C}$,
    \item $\afields{Marine} = \{Marine.Move\}$.
\end{itemize}
where we set
\begin{itemize}
    \item $Dom_{C.Position} = [-99,99]^2$ for every $C \in \mathcal{C}$.
    \item $Dom_{C.Health} = [-1,999]$ for every $C \in \mathcal{C}$.
    \item $Dom_{C.Alive} = \{\mathrm{True},\mathrm{False}\}$ for every $C \in \mathcal{C}$.
    \item $Dom_{Marine.Move} = \{\mathrm{North},\mathrm{East},\mathrm{South},\mathrm{West},\mathrm{NOOP}\}$.
\end{itemize}

We define the reward function as the total damage dealt to the hostile zerglings and banelings in each step:
\begin{equation}
    R(\rvs,\rva,\rvs') = \sum_{O \in Zergling} (O.\attr{Health} - O.\attr{Health}')  + \sum_{O \in Baneling} (O.\attr{Health} - O.\attr{Health}').
\end{equation}

\section{Additional Information of Experiments} \label{appendix:experiments}

\subsection{Experiment Settings}
    In all experiments, the dynamics models are trained using offline data that is collected by a random policy that produces uniform actions.  However, it should be noted that data generated during the application of the OOCDM can also contribute to further training in practice \citep{ding_generalizing_2022}.

    All models are trained and evaluated using one GPU (NVIDIA TITAN XP). The only exception is the causal discovery for GRADER, which is implemented by an open-source toolkit called Fast CIT and computed on 4 CPUs in parallel in our experiments. All experiments were repeated 5 times using different random seeds; the means and standard variances of the performances are reported.
    
\subsection{Implementation of Baselines}

    \paragraph{CDL} We perform causal discovery based on Theorem \ref{theorem:causal_discovery_mdp} and the conditional independence tests are implemented using Conditional Mutual Information (CMI), which is estimated by the model. First, each variable in $(\rvs, \rva)$ is encoded into an encoding vector. Then, in the predictor of each state variable (attribute), the encoding vectors of parental variables are aggregated by an element-wise maximum operation. Finally, the aggregated encoding is mapped to the distribution of the next-state variable by an MLP.

    \paragraph{CDL-A} Most of the parts are identical to the original CDL. However, the encoding vectors of parental variables are aggregated by attention operation instead of max-pooling. Each input variable's encoding vector is transformed into a value and a key vector, and we learn a query vector for each output variable. Key-value attention is performed to obtain the aggregated encoding of the output variable (the attention weights of non-parental variables masked to $0$).

    \paragraph{GRADER} We perform causal discovery based on Theorem \ref{theorem:causal_discovery_mdp} and the conditional independent tests are implemented using Fast CIT \citep{chalupka_fast_2018}. The model contains an individual predictor for each state variable (attribute), which aggregates all parents by a 2-directional Gated Recurrent Unit and then produces the distribution of the next-state variable.

    \paragraph{TICSA} This algorithm learns a probability matrix $M$ that stores the probability of each causal edge in the BCG. To infer the next state, the model first samples the causal graph from $M$.  Then, it masks off non-parental input features and predicts next-state variables using an MLP architecture. To learn $M$, the loss function includes a sparsity penalty $\Vert M \Vert_1$, and the causal graphs are sampled using Gumbel softmax \citep{jang_categorical_2017} during the training phase.

    \paragraph{MLP} In the MLP model, all input variables are concatenated into a vector. Then, we pass this vector into a 3-layer multi-layer perceptron and obtain an embedding vector $\bm{x}$. Finally, each variable (attribute) is decoded into the posterior distribution of $\hat{p}(O.V'|\bm{S},\bm{A})$ by applying an individual transform on $\bm{x}$. 

    \paragraph{GNN} The model architecture follows the design of Structural World Model \citep{kipf_contrastive_2020}. We encode objects into \textit{object state encodings} and \textit{object action encodings} using individual encoders. Then, we transform these object encodings via a GNN based on a complete graph, where objects correspond to the nodes: 1) We compute the edge embeddings with the state encodings of each pair of objects; 2) we compute the node embedding for each object $O_i$, using its state encoding, its object action encoding, and all edge embeddings of in-degrees; 3) we decode the node embeddings of $O_i$ to the distributions of its next-state attributes.
    
    \paragraph{OOFULL} The model follows identical structure as described in Appendix \ref{appendix:implementation}. However, the training loss only contains $\mathcal{L}_{\mathcal{G}_1}$ (Eq.~\ref{eq:AILL_function}) and always uses the full OOCG $\mathcal{G}_1$ in evaluation.

    To make our comparison fair, we do not want these baselines to perform badly in large-scale environments simply due to insufficient model capacity. Therefore, the number of hidden units in the non-object-oriented models (MLP, GRADER, and CDL) are adjusted according to the scale of the environment, making sure that the capacity of these models is pertinent to the complexity of the environments. However, our object-orient models (OOFULL and OOCDM) have a fixed number of parameters as long as the classes are fixed, no matter how many instances are in the environment.
    
\subsection{Out-of-Distribution Data} \label{appendix:ood_data}

We construct o.o.d. data by changing the distribution of initial states of episodes, which is easy to implement in the Block and Mouse environments. However, the CMS and DZB environments are StarCraftII mini-games provided by the PySC2 platform \citep{vinyals_starcraft_2017}. The platform offers limited access to the StarcraftII engine, and thus modifying the initialization process of episodes is very difficult. Therefore, we did not construct o.o.d. data for CMS and DZB.

\paragraph{Block} To obtain the o.o.d. data, we initialize the attributes of each $Block$ object from a new distribution at the beginning of each episode:
\begin{equation}
    (O.\rs_1,\, O.\rs_2,\, O.\rs_3)^T \sim  \mathcal{N}\left(
    (0.5, 0, 0)^T,
    \mathrm{diag}\left(0.25, 4, 4 \right)
    \right).
\end{equation}

\paragraph{Mouse} In the i.d. data, the attributes from the field $Monster.Noise$ are initialized from a normal distribution $\mathcal{N}(0, 1)$ at the beginning of each episode. To construct the o.o.d. data, we increase the standard variance to $3$. To confuse non-causal models, the initialization of $O_{food}.Amount$ is correlated to $O_{food}.Position$. During training, $O_{food}.Amount$ will be assigned with a larger value if $O_{food}.Position$ is in the east of the world; in the o.o.d. data, however, $O_{food}.Amount$ will be assigned with a larger value if $O_{food}.Position$ is in the north.

\subsection{Unseen Tasks} \label{appendix:unseen_tasks}

In the Mouse environment, we sample the numbers of food, monsters, and traps respectively from $[3,6]$, $[1,5]$, and $[1,5]$. Thereby, we obtain a task pool containing $4 \times 5 \times 5 = 100$ tasks. We randomly split these tasks into 47 seen tasks and 53 unseen tasks. The dynamics models are trained using offline data collected in seen tasks and then transferred into unseen tasks without further training.

\subsection{Computational Costs} \label{appendix:results_cost}

We provide additional results about the model size (see Table \ref{tab:model_size}), and the computation time of causal discovery is shown in Table \ref{tab:causal_discovery_time}. These results show that our OOCDM greatly reduces the model complexity in large-scale environments. We stress that these results are for reference only, as they are affected by many factors, including the implementation details, software (we use Python and PyTorch here), and computation devices. We are also aware that some of the comparisons made here are not perfectly fair, as these CDMs perform causal discovery using different devices. However, Our approach shows the advantages of several orders of magnitude compared to GRADER, reducing the computation time from multiple hours to several seconds. Such a huge gap in these results cannot be caused solely by the differences in devices. Combining the results in the paper's Table \ref{tab:graph_accuracy}, we conclude that our OOCDM uses relatively fewer parameters and the least computation time to discover the most accurate causal graphs. 

\begin{table*}[tb]
    \centering
    \setlength{\tabcolsep}{3pt}
    \caption{The total sizes (bytes) of model parameters. $n$ denotes the number of variables in the environment, and $m$ denotes the number of all fields. OOCDM$^+$ means the augmented OOCDM described in Appendix~\ref{appendix:asym_oocdm}, 
    which is designed for asymmetric environments, i.e. AsymBlock and Walker. The model sizes of OOFULL are the same as those of OOCDM.\\}
    
    \begin{tabular}{c|cc|cccccccc}
        \hline
        ~ & $n$ & $m$ & GRADER & CDL & CDL-A & TICSA & GNN & MLP & \textbf{OOCDM} & \textbf{OOCDM}$^+$ \\
        \hline
        Block-2 & 12 & 8 & 2.3M & 319.7K & 348.6K & 184.8K & 66.4K & 99.4K & 401.6K & - \\
        Block-5 & 24 & 8 &  6.0M & 1.0M & 1.1M & 582.3K & 100.0K & 224.4K & 401.6K & - \\
        Block-10 & 44 & 8 & 15.7M & 3.1M & 3.2M & 2.0M & 147.9K & 538.0K & 401.6K & - \\
        Mouse & 28 & 10 & 15.1 M & 2.7M & 3.0M & 720.4K & 432.3K & 701.4K & 546.6K & - \\
        CMS & 44 & 4 & 23.5M & 4.3M & 4.6M & 1.4M & 250.7K & 1.1M & 140.3K & - \\
        DZB & 66 & 10 & 38.8M & 7.7M & 8.2M & 2.6M & 616.3K & 1.7M & 549.8K & -  \\
        AsymBlock & 40 & 8 & 11.0M & 1.7M & 1.8M & 1.4M & 144.3K & 272.2K & 401.6K & 624.9K  \\
        Walker & 16 & 8 & 8.1M & 1.6M & 1.8M & 411.2K & 950.7K & 595.7K & 410.0K & 634.3K \\
        \hline
    \end{tabular}
    
    \label{tab:model_size}
\end{table*}

\subsection{Learned OOCGs of StarcraftII Mini-Games} \label{appendix:learned_oocg}

Since the ground-truth OOCG of the CMS and DZB environments are not known, we here present the OOCGs learned by our causal discovery algorithm in Figure~\ref{fig:oocg_cms_dzb}. The learned OOCGs for CMS are identical for all seeds. However, The learned OOCGs for DZB are slightly different between seeds, and thus the OOCG of the seed that produces the highest likelihood is presented.

\begin{figure}[tb]
    \centering
    \includegraphics[width=0.9\linewidth]{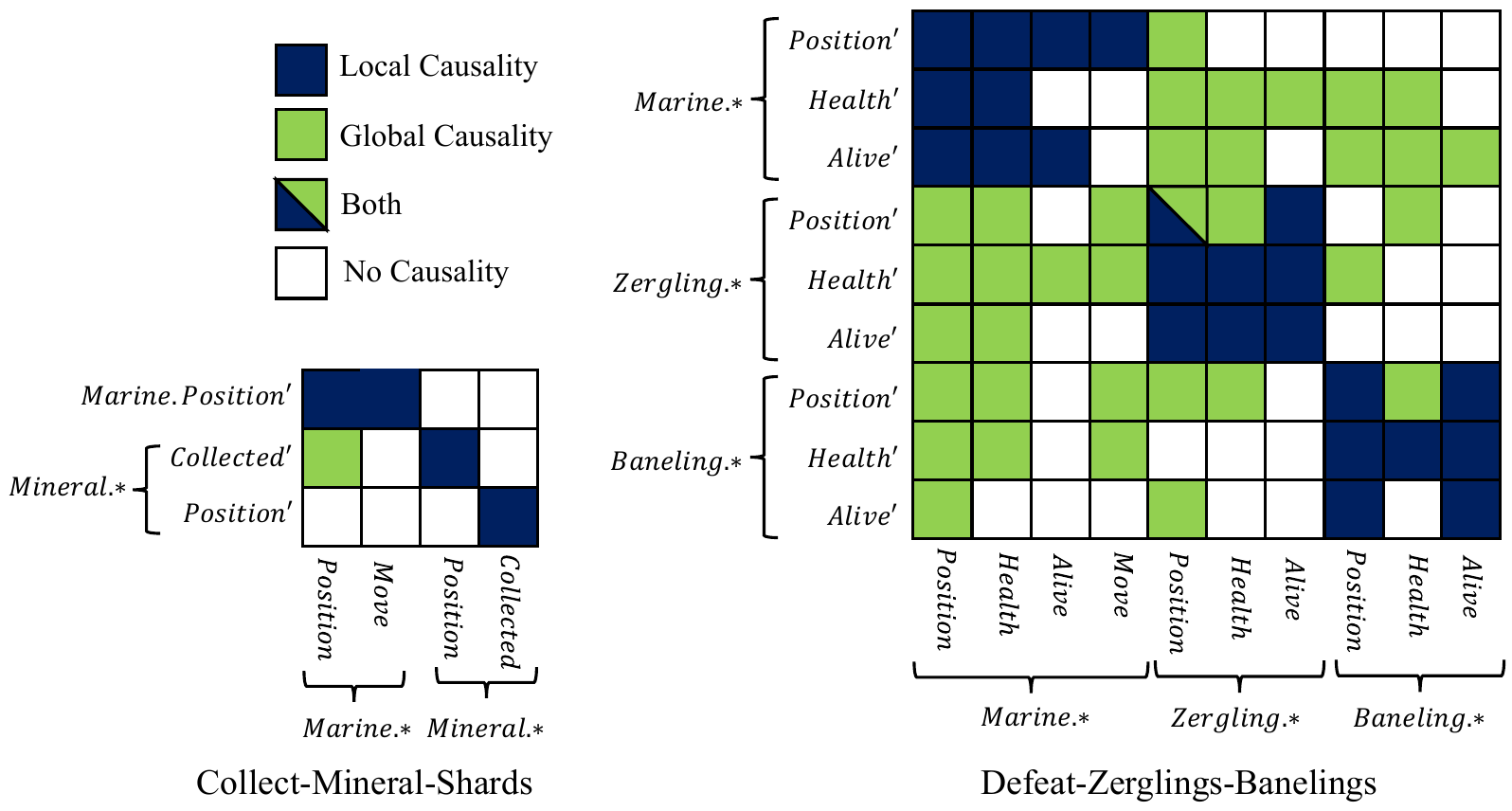}
    \caption{The visualization of the discovered OOCGs (the adjacency matrix of class-level causalities) of the CMS and DZB environments.}
    \label{fig:oocg_cms_dzb}
\end{figure}

\subsection{Experiments in Asymmetric Environments} \label{appendix:asym_exp}

In this section, we investigate whether the requirement of dynamic symmetries (i.e., the result symmetry and causation symmetry) can be released by using the augmented architecture described in  Appendix~\ref{appendix:asym_oocdm}. This issue is of great importance in extending the applicability of OOCDMs. Therefore, we perform additional experiments to test the performance of handling asymmetric environments. Here, we first introduce the environments used in the experiments and then produce the results. 

\subsubsection{Environments and Experiment Settings}

We performed the experiments in two additional environments with asymmetric dynamics. The first environment, AsymBlock, is adapted from the Block environment. Second, we consider Walker, a robot-control task in a physics simulator, which poses a more realistic challenge for causal models.

\paragraph{AsymBlock.} We designed the \textbf{AsymBlock} Environment with similar dynamics to the Block environment, yet dynamic symmetries are violated in AsymBlock. In AsymBlock$_k$, there will be $k$ instances of $Block$ and also $k$ instances of $Total$. For the $c$-th instance of $Total$, we have
$$ O_c.\rs_j' = \frac{1}{2}O_c.\rs_j + \frac{1}{2} \max\limits_{O_i \in Block, i \leq c} O_i.\rs_j, \qquad j=1,2,3;\  c=1,\cdots,k. $$
The transition of other attributes is the same as the symmetric Block environment. Causation Symmetry does not hold in the environment, as each $Total$ object summarizes a unique set of $block$ objects. Here, we set the number of blocks to be $k = 5$. 

\paragraph{Walker.} The environment is adapted from the Walker2D environment, which is based on Mujoco, a popular physics simulator in the OpenAI Gym platform \cite{brockman_openai_2016}. Figure~\ref{fig:walker} gives the illustration of the environment. The agent controls the torques inflicted on the six joints (left thine, left kneel, left foot, right thine, right kneel, and right foot), in order to make the robot stand up and move forward. The environment is originally factored, and a detailed introduction can be found in the official document \footnote{\url{https://gymnasium.farama.org/environments/mujoco/walker2d/}}. We represent the environment as an OOMDP containing two classes $Joint$ (which has 6 instances) and $Top$ (which has one instance only). The fields of a joint include its angle $Joint.\theta$, angular velocity $Joint.V_\theta$, and the torque $Joint.TQ$ that the agent inflicted. The fields of the top instance include its angle $Top.\theta$, angular velocity $Top.V_\theta$, horizontal velocity $Top.V_x$, vertical position $Top.Z$, and vertical velocity $Top.V_z$. The agent is rewarded for a positive horizontal velocity of the top and every step that the robot stays alive. Moreover, a punishment is given as to the cost of the action. Since the specific topology of the skeleton is not described by the attributes of joints, causation symmetry does not hold in the environment.

The action space of Walker is composed of the torques on six joints. Such a multi-variable action space poses a great challenge for a random policy to sufficiently explore the space in this task. Therefore, we train the models using the data produced by a policy-based collector, which is trained using the PPO algorithm \cite{schulman_proximal_2017}. The training and in-distribution data is collected from a weak policy, which has gone through a few PPO updates. Due to the large action space, finding a good action sequence is difficult for our CEM-based planning algorithm. Therefore, the planning performance is not presented here. We test whether the model can generalize to the out-of-distribution data produced by a stronger collector (trained with more PPO updates) instead of the planning policy.

\begin{figure}[!tb]
    \centering
    \subfigure[]{
        \includegraphics[height=0.25\linewidth]{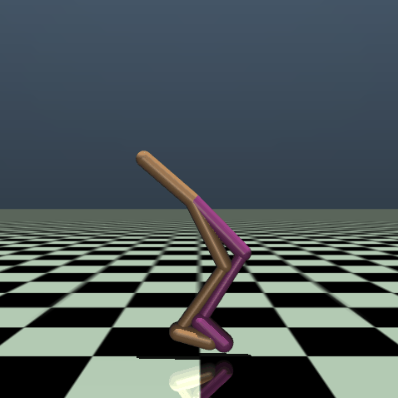}
        \label{fig:walker}
    }
    \subfigure[]{
        \includegraphics[height=0.25\linewidth]{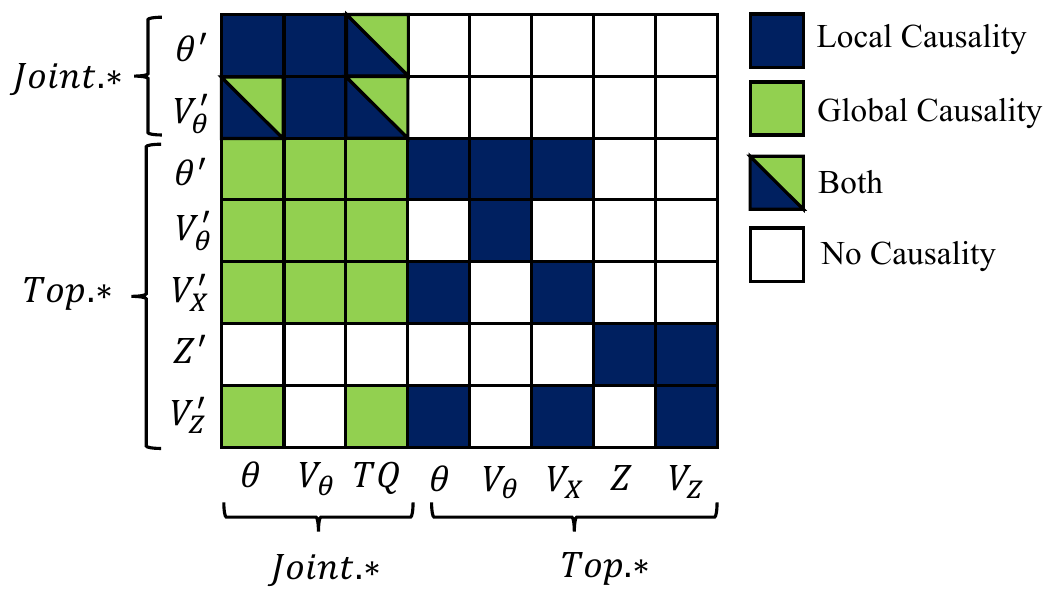}
        \label{fig:oocg_walker}
    }
    \caption{The (a) visualization and (b) discovered OOCG of the Walker environment.}
\end{figure}

\subsubsection{Results}

\begin{table*}[!tb]
    \centering
    \setlength{\tabcolsep}{1pt}
    \caption{The average instance log-likelihoods on asymmetric environments. The ``+" superscript for OOFULL and OOCDM means using the augmented architecture in Appendix~\ref{appendix:asym_oocdm} to handle the asymmetric dynamics. We do not show the standard variances for obviously over-fitting results (log-likelihoods less than $-100.0$, highlighted in brown). \\}
    \footnotesize
    \begin{tabular}{c|c|cccccccccc}
        \hline
        Env & data & GRADER & CDL & CDL-A & TICSA & GNN & MLP &  OOFULL & OOCDM & OOFULL$^+$ & OOCDM$^+$ \\
        \hline
        \multirow{3}{*}{AsymBlock} &
            train & \data{19.7}{0.6} & \data{16.4}{1.4} & \data{18.6}{1.8} & \data{15.7}{1.0} & \data{15.7}{0.3}
                  & \data{8.0}{2.7} & \data{18.7}{0.3} & \data{16.4}{2.5} & \data{19.6}{0.9} & \Data{20.0}{0.3} \\
        ~ & i.d. & \data{-1.0}{9.9} & \data{-7.9}{12.2} & \edata{-1.5e6}{1.4e6} & \edata{-3.8e4}{7.6e4} & \data{13.9}{2.3}
                 & \data{-32.5}{57.2} & \data{15.7}{0.5} & \data{15.2}{2.4} & \data{5.7}{28.0} & \Data{19.7}{0.5} \\
        ~ & o.o.d. & \edata{-895.5}{206.3} & \edata{-130.1}{172.1} & \edata{-1.8e8}{1.3e8} & \edata{-1.7e7}{2.1e7} & \edata{-2.7e5}{3.6e5}
                   & \edata{-3.8e6}{7.6e6} & \data{-14.2}{23.9} & \data{-34.0}{18.0} & \data{-43.1}{94.7} & \Data{13.8}{6.7} \\
        \hline
        \multirow{3}{*}{Walker} &
            train & \data{8.0}{0.2} & \data{7.5}{0.8} & \data{8.6}{0.2} & \data{7.0}{1.7} & \data{3.5}{0.2}
                  & \data{10.0}{0.3} & \data{6.9}{0.3} & \data{6.0}{0.4} & \Data{10.8}{1.0} & \data{10.3}{0.6} \\
        ~ & i.d. & \data{7.7}0.1{} & \data{7.4}{0.7} & \data{8.6}{0.2} & \data{2.1}{1.7} & \data{3.2}{0.2}
                 & \data{9.5}{0.3} & \data{6.6}{0.4} & \data{5.9}{0.3} & \Data{10.7}{1.0} & \data{10.2}{0.5} \\
        ~ & o.o.d. & \data{5.8}{1.3} & \data{6.1}{0.8} & \data{6.6}{0.5} & \data{-6.6}{10.8} & \data{3.6}{0.2}
                   & \data{6.1}{0.9} & \data{6.4}{0.6} & \data{5.5}{0.6} & \data{8.0}{1.6} & \Data{9.0}{0.3} \\
        \hline
    \end{tabular}
    \label{tab:asym_aill}
\end{table*}

\begin{table*}[!tb]
    \centering
    \setlength{\tabcolsep}{4pt}
    \caption{The causal discovery times (seconds) on the asymmetric environments. The ``+" superscript for OOCDM means using the augmented architecture to handle asymmetric dynamics. \\}
    \begin{tabular}{c|cc|ccccc}
        \hline
        Env & n & m & GRADER & CDL & CDL-A & OOCDM & OOCDM$^+$ \\
        \hline
        AsymBlock & 40 & 8 & \data{4789.6}{177.4} & \data{19.1}{5.7} & \data{19.1}{2.8} & \Data{2.1}{0.1} & \data{2.4}{0.2} \\
        Walker & 16 & 8 & \data{7957.3}{444.9} & \data{58.8}{7.4} & \data{62.9}{11.6} & \Data{21.5}{0.1} & \data{24.5}{3.0} \\
        \hline
    \end{tabular}
    \label{tab:asym_time}
\end{table*}

The results of prediction accuracy are given in Table~{\ref{tab:asym_aill}}. In particular, we use the ``+" superscript for OOFULL and OOCDM to signify that the architectures are modified according to Appendix~\ref{appendix:asym_oocdm} to handle the asymmetric dynamics. According to these results, the augmented OOCDM and OOFULL perform well in capturing the unique dynamics of each object, leading to significant improvement against the symmetric version.  However, OOCDM$^+$ shows better generalization ability than OOFULL$^+$, which demonstrates that the learned OOCG can help reduce spurious correlations. Since the ground-truth causal graph is not an OOCG in asymmetric environments, we do not compare the accuracy of causal graphs. However, the results show that the learned OOCG is helpful with the generalization ability. In AsymBlock where the causal dependencies are inherently sparse, OOCDM$^+$ raises the best performance. In Walker where dependencies are inherently denser, OOFULL$^+$ shows the best performance in the training and in-distribution data, whereas OOCDM$^+$ best generalizes to o.o.d. data. Here, Figure~\ref{fig:oocg_walker} shows the discovered OOCG using OOCDM$^+$.

Table~\ref{tab:asym_time} presents the computational time of causal discovery in these environments. The sample sizes for causal discovery are 10,000 and 100,000 for AsymBlock and Walker, respectively. The only exception is GRADER in Walker, which uses only 10,000 samples, otherwise the causal discovery would take excessive time to complete. From these results, we notice that the augmented OOCDM does not result in a significant increase in the computation time of causal discovery. Additionally, Table~\ref{tab:model_size} includes the model size of the augmented model in the Asymblock and Walker environments.

These additional experiments demonstrate that the augmented OOCDM may handle environments where result and causation symmetries do not hold. The object-oriented causal discovery remains computationally efficient and is helpful in improving the generalization performance. Therefore, we conclude that the requirement of dynamic symmetries can be released by using the augmented OOCDM. On the one hand,  OOCDM has better generalization ability resulting from an approximate causal graph, compared to dense dynamics models. On the other hand, learning an OOCDM is more computationally efficient than learning existing CDMs, which may not be tractable in large-scale environments.

\section{Robustness on Noisy Data} \label{appendix:robustness_noise}

Theorem~\ref{theorem:robustness_noise} implies that causal discovery using CITs is robust against limited noise. In this section, we present the performance of dynamics models in the Block$_5$ environment, using the data added with independent Gaussian noises. The scale (i.e. the standard variance) of observational noises is $0.01$, which is equal to the scale of the transitional noises. 

\begin{table*}[!tb]
    \centering
    \setlength{\tabcolsep}{4pt}
    \caption{The accuracy (in percentage) of discovered causal graphs in Block$_5$ using noisy data. \\}
    \vspace{9pt}
    \begin{tabular}{c|ccccc}
        \hline
        data & GRADER & CDL & CDL-A & TICSA & OOCDM \\
        \hline
        original & \data{94.0}{1.5} & \data{97.5}{1.5} & \data{99.3}{0.6} & \data{96.3}{0.6} & \data{100.0}{0.0} \\
        noisy & \data{94.4}{0.7} & \data{96.8}{0.8} & \data{99.5}{0.5} & \data{95.7}{3.0} & \data{100.0}{0.0} \\
        \hline
    \end{tabular}
    \label{tab:causal_discovery_noised}
\end{table*}

\begin{table*}[t]
    \centering
    \setlength{\tabcolsep}{3pt}
    \caption{The average instance log-likelihoods of the dynamics models (trained with noisy data) on various datasets in Block$_5$. We do not show the standard variances for obviously over-fitting results (less than $-100.0$, highlighted in brown). \\}
    \begin{tabular}{c|cccccccc}
        \hline
        data & GRADER & CDL & CDL-A & TICSA & GNN & MLP & OOFULL & \textbf{OOCDM} \\
        \hline
        train (noisy) & \data{15.5}{0.6} & \data{12.5}{1.1} & \data{16.0}{0.2} & \data{12.2}{1.3} & \data{13.5}{0.4} & \data{8.0}{0.3} & \Data{16.5}{0.8} & \data{15.7}{0.9} \\
        i.d. & \data{2.2}{5.1} & \data{12.3}{1.2} & \data{-5.6e6}{} & \data{-2.2e5}{} & \data{13.9}{0.4} & \data{-13.9}{9.2} & \data{-25.3}{79.5} & \data{17.5}{0.5} \\
        o.o.d. & \edata{-516.8}{} & \edata{-3.2e7}{} & \edata{-4.2e8}{} & \edata{-5.2e7}{} & \data{-12.7}{18.9} & \edata{-1.7e4}{} & \edata{-2.7e7}{} & \data{-3.2}{12.3} \\
        \hline
    \end{tabular}
    \label{tab:loglikelihood_noised}
\end{table*}

The causal graph accuracy (percentage) is given in Table~\ref{tab:causal_discovery_noised}. The results show that causal discovery is robust to the noisy data. Apart from CDL, all CDMs have almost identical performance. We also present the average-instance log-likelihood of models learned from noisy data in Table~\ref{tab:loglikelihood_noised}. The test data contains no noise, so it is possible for models to perform better on the test data. Prediction Accuracy decreases due to the noise for all models. While the decrease is reasonable on the training and test data, the risk of overfitting significantly rises for CDMs on o.o.d. data. Even with oracle causal parents, noise can lead to generalization errors within the learned structural equations. However, the performance on o.o.d. data of OOCDM remains competitive among all CDMs.

\subsection{Hyper-Parameters}

Main hyper-parameters are listed in Table \ref{tab:hyperparameters}, and more details are contained in our code.

\begin{table*}[!tb]
    \centering
    \caption{The main hyper-parameters used in our experiments.}
    \vspace{9pt}
    \footnotesize
    \setlength{\tabcolsep}{3pt}
    \begin{tabular}{clcccccc}
        ~ & ~ & \multicolumn{6}{c}{Values for environments} \\
        \cline{3-8}
        Symbol & Meaning & Block & Mouse & CMS & DZB & AsymBlock & Walker \\
        \hline
        $d_e$ & The dimension of attribution-encoding vectors & 16 & 16 & 16 & 16 & 16 & 16 \\
        $d_k$ & The dimension of key vectors & 32 & 32 & 32 & 32 & 32 & 32 \\
        $d_v$ & The dimension of value vectors & 32 & 32 & 32 & 32 & 32 & 32 \\
        $d_h$ & The dimension of the hidden encoding $\vh_i$ for each object & - & - & - & - & 64 & 64 \\
        $\varepsilon$ & The threshold of CMIs in causal discovery & 0.3 & 0.1 & 0.2 & 0.03 & 0.3 & 0.1 \\
        $n_{plan}$ & The number of planning iteration in CEM & - & 5 & 5 & 5 & 5 & 5 \\
        $H$ & The planning horizon in MPC & - & 20 & 20 & 20 & 20 & 20 \\
        $\alpha$ & The weight of $\mathcal{L}_{\mathcal{G}_1}$ in the target function & 1 & 1 & 1 & 1 & 1 & 1 \\
        $\beta$ & The weight of $\mathcal{L}_{\hat{\mathcal{G}}}$ in the target function & 1 & 1 & 1 & 1 & 1 & 1 \\
        $\lambda$ & The probability of dependencies when sampling $\mathcal{G}_\lambda$ & 0.9 & 0.9 & 0.9 & 0.9 & 0.9 & 0.9 \\
        $\gamma$ & The discount of rewards in MPC & - & 0.95 & 0.95 & 0.95 & - & - \\
        ~ & The number of samples in CEM & - & 500 & 500 & 500 & - & - \\
        ~ & The number of elite samples in CEM & - & 100 & 100 & 100 & - & - \\
        $n_{iter}$ & The number of iteration in training & 50 & 200 & 80 & 200 & 75 & 60 \\
        $n_{batch}$ & The number of batches in each iteration & 1000 & 1000 & 1000 & 1000 & 1000 & 500 \\ 
        \hline
    \end{tabular}
    \label{tab:hyperparameters}
\end{table*}

\section{Weaknesses and Future Works}

A weakness of this work is the requirement of domain knowledge to formulate environments as OOMDPs. Although using objects and classes to describe the world is natural, intuitive formulation may violate result symmetry and causation symmetry. As mentioned in Section \ref{sec:oorl}, many studies have investigated the learning of object-centric representation. However, extracting OOP-style representation (i.e. involving multiple classes) remains an open problem, especially when Eq. \ref{eq:causation_symmetry} needs to be satisfied. Therefore, future work will investigate how to extract properly-categorized objects from raw observations. Meanwhile, more effective methods to release result and causation symmetries should be further explored, where modeling relational interactions from raw factorization may be a potential direction.

Another weakness of this work is that FMDP imposes strong constraints that may not hold in more complicated tasks involving confounders, partial observability, or non-Markovian dynamics. Addressing these challenges in an object-oriented framework is important to extend the applicability of our approach. Therefore, we propose to explore these directions using more rigorous tools of causality in the future.

%% The file named.bst is a bibliography style file for BibTeX 0.99c

\end{document}